
\documentclass{article}

\usepackage{microtype}
\usepackage{graphicx}
\usepackage{algorithmic}
\usepackage{booktabs} 
\usepackage{etex}
\usepackage{amsmath}
\usepackage{amsthm}
\usepackage{amssymb} 
\usepackage{color, colortbl}
\usepackage{xcolor}
\usepackage{hyperref}
\usepackage{cleveref}

\usepackage{bm}
\usepackage{bbm}
\usepackage{graphicx}
\usepackage{thmtools}
\usepackage{thm-restate}
\usepackage{mathtools}
\usepackage[skip=0pt]{caption}
\usepackage[skip=0pt]{subcaption}
\usepackage{booktabs} 
\usepackage{fontawesome}
\usepackage[algo2e,ruled,vlined, linesnumbered]{algorithm2e}

\definecolor{red}{HTML}{E51400}  
\definecolor{blue}{HTML}{0050EF} 
\definecolor{green}{HTML}{008A00} 
\definecolor{purple}{HTML}{AA00FF} 
\definecolor{dark-red}{rgb}{0.4, 0.15, 0.15}
\definecolor{dark-blue}{rgb}{0.15, 0.15, 0.4}
\definecolor{medium-red}{rgb}{0.5, 0, 0}
\definecolor{medium-blue}{rgb}{0, 0, 0.5}
\definecolor{light-red}{rgb}{0.7, 0, 0}
\definecolor{light-blue}{rgb}{0, 0, 0.7}

\hypersetup{
    colorlinks=true,
    linkcolor=blue,
    filecolor=magenta,      
    urlcolor=blue,
}

\newtheorem{theorem}{\bf Theorem}
\newtheorem{lemma}{\bf Lemma}
\newtheorem{fact}{\bf Fact}
\newtheorem{condition}{\bf Condition}

\newtheorem{definition}{\bf Definition}


\definecolor{red}{HTML}{E51400} 
\definecolor{blue}{HTML}{0050EF} 
\definecolor{green}{HTML}{008A00} 
\definecolor{purple}{HTML}{AA00FF} 
\definecolor{orange}{HTML}{FF7F00}
\definecolor{gray}{HTML}{848482}

\newcommand{\mulane}{{MuLaNE}}


\DeclareMathOperator*{\argmax}{arg\,max}

\newcommand{\norm}[1]{\left\lVert#1\right\rVert}
\newcommand{\prts}[1]{\left(#1\right)}
\newcommand{\brkt}[1]{\left[#1\right]}

\newcommand{\cS}{\mathcal{S}}
\newcommand{\abs}[1]{\left| #1 \right|}

\newcommand{\pl}{\left(}
\newcommand{\pr}{\right)}

\newcommand{\R}{\mathbb{R}}
\newcommand{\Z}{\mathbb{Z}}
\newcommand{\E}{\mathbb{E}}

\newcommand{\ba}{\boldsymbol{a}}
\newcommand{\bb}{\boldsymbol{b}}

\newcommand{\bch}{\boldsymbol{\chi}}
\newcommand{\bc}{\boldsymbol{c}}

\newcommand{\bk}{\boldsymbol{k}}

\newcommand{\bP}{\boldsymbol{P}}

\newcommand{\bx}{\boldsymbol{x}}
\newcommand{\by}{\boldsymbol{y}}

\newcommand{\cA}{\mathcal{A}}
\newcommand{\cB}{\mathcal{B}}

\newcommand{\cE}{\mathcal{E}}

\newcommand{\cG}{\mathcal{G}}

\newcommand{\cL}{\mathcal{L}}

\newcommand{\cN}{\mathcal{N}}

\newcommand{\cV}{\mathcal{V}}

\newcommand{\cQ}{{\mathcal{Q}}}
\newcommand{\cW}{{\mathcal{W}}}

\newcommand{\bbZ}{\mathbb{Z}}
\newcommand{\bbR}{\mathbb{R}}

\newcommand{\defeq}{\vcentcolon=}

\newcommand{\bsig}{\bm{\sigma}}

\newcommand{\compilefullversion}{true}
\ifthenelse{\equal{\compilefullversion}{false}}{%
	\newcommand{\OnlyInFull}[1]{}
	\newcommand{\OnlyInShort}[1]{#1}
}{%
	\newcommand{\OnlyInFull}[1]{#1}%
	\newcommand{\OnlyInShort}[1]{}%
}%

\newcommand{\compilehidecomments}{false}
\ifthenelse{ \equal{\compilehidecomments}{true} }{%
	\newcommand{\wei}[1]{}
	\newcommand{\xutong}[1]{}
	\newcommand{\jinhang}[1]{}
}{
\newcommand{\wei}[1]{{\color{blue}{\small{\bf [Wei: #1]}}}}
\newcommand{\xutong}[1]{{\color{green} [#1]}}
\newcommand{\jinhang}[1]{{\color{orange} [\text{Jinhang:} #1]}}
}



\makeatletter
\newcommand{\algrule}{\par\vskip.2\baselineskip{\color{black!30}\hrule}\par\vskip.2\baselineskip}
\makeatother
\renewcommand{\algorithmiccomment}[1]{\hfill$\triangleright$\textit{\textcolor{medium-blue}{#1}}}

\usepackage[accepted]{icml2021}


\icmltitlerunning{Multi-layered Network Exploration via Random Walks}

\begin{document}

\twocolumn[
\icmltitle{Multi-layered Network Exploration via Random Walks: \\
           From Offline Optimization to Online Learning}





\begin{icmlauthorlist}
\icmlauthor{Xutong Liu}{cuhk}
\icmlauthor{Jinhang Zuo}{cmu}
\icmlauthor{Xiaowei Chen}{bytedance}
\icmlauthor{Wei Chen}{msra}
\icmlauthor{John C.S. Lui}{cuhk}
\end{icmlauthorlist}

\icmlaffiliation{cuhk}{Department of Computer Science and Engineering, The Chinese University of Hong Kong, Hong Kong SAR, China}
\icmlaffiliation{cmu}{Department of Electrical and Computer Engineering, Carnegie Mellon University, Pittsburgh, PA, USA}
\icmlaffiliation{bytedance}{Bytedance, Mountain View, CA, USA}
\icmlaffiliation{msra}{Microsoft Research, Beijing, China}
\icmlcorrespondingauthor{Xutong Liu}{liuxt@cse.cuhk.edu.hk}
\icmlcorrespondingauthor{Wei Chen}{weic@microsoft.com}
\icmlcorrespondingauthor{John C.S. Lui}{cslui@cse.cuhk.edu.hk}
\icmlkeywords{Machine Learning, ICML}

\vskip 0.3in
]



\printAffiliationsAndNotice{}  

\begin{abstract}
Multi-layered network exploration (MuLaNE) problem is an important problem abstracted from many applications. In MuLaNE, there are multiple network layers where each node has an importance weight and each layer is explored by a random walk. The MuLaNE task is to allocate total random walk budget $B$ into each network layer so that the total weights of the unique nodes visited by random walks are maximized. We systematically study this problem from offline optimization to online learning. For the offline optimization setting where the network structure and node weights are known, we provide greedy based constant-ratio approximation algorithms for overlapping networks, and greedy or dynamic-programming based optimal solutions for non-overlapping networks. For the online learning setting, neither the network structure nor the node weights are known initially. We adapt the combinatorial multi-armed bandit framework and design algorithms to learn random walk related parameters and node weights while optimizing the budget allocation in multiple rounds, and prove that they achieve logarithmic regret bounds. Finally, we conduct experiments on a real-world social network dataset to validate our theoretical results.

\end{abstract}

\section{Introduction}
Network exploration is a fundamental paradigm of searching/exploring
in order to discover information and resources available at nodes in a network, 
and random walk is often used as an effective tool for network exploration \cite{lv2002search,gleich2015pagerank,wilder2018maximizing}.
In this paper, we study the multi-layered network exploration via random walks problem, which can model many real-world applications, including resource searching in peer-to-peer (P2P) networks and web surfing in online social networks (OSNs).

In P2P networks, a user wants to find resources that are scattered on nodes (i.e. peers) in different P2P networks via some platform-specified strategies.
A commonly used search strategy is based on multiple random walks \cite{lv2002search}.
Since different resources have different importance,
the search quality of different random walkers varies.
Moreover, the resource-search process typically has a life span, i.e., a total time-limit or hop-limit for random walks.
So the user's goal is to decide how to allocate limited budgets to different random walkers to find as many important resources as possible.
Another application is the web surfing, where users want to find information in different OSNs, by looking at posts via others' home-pages~\cite{lerman2006social}.
Once the user arrives at one of his friends' home-pages in a particular OSN, he could find some information and continues to browse the home-pages of his friends' friends, which can be regarded as a random walk process.
Since the user only has a finite duration in web surfing,
the similar question is how to allocate his time in exploring different OSNs so to get the maximum amount of useful information.

We abstract the above application scenarios as the \textbf{Mu}lti-\textbf{La}yered \textbf{N}etwork \textbf{E}xploration (MuLaNE) problem.
In \mulane, we model the overall network to be explored (e.g., combining different OSNs) as a~multi-layered network $\cG$ that consists of $m$ layers $L_1, ..., L_m$.
Each layer $L_i$ (e.g., a single OSN) is represented by a weighted directed graph $\cG_i(\cV_i,\cE_i,w_i)$,
where $\cV_i$ is the set of nodes to be explored (e.g., users' home-pages with importance weight $\sigma_u$ for $u\in \cV_i$), $\cE_i \subseteq \cV_i \times \cV_i$ is the set of directed edges (e.g., social links), and $w_i$ is the edge weight function on edges $\cE_i$.
Each layer $L_i$ is associated with an explorer (or random walker) $W_i$ and a fixed starting distribution  $\bm{\alpha}_i$ on nodes $\cV_i$.
Explorer $W_i$ starts on a node in $\cG_i$ following the distribution $\bm{\alpha}_i$, and then walks on network $\cG_i$ following outgoing edges with probability proportional to edge weights.
We assume that each random walk step will cost one unit of the budget\footnote{Our model can be extended to move multiple steps with one unit of budget.}.
Given a total budget constraint $B$ and individual layer budget constraint $c_i$, the MuLaNE task is to find the optimal budget allocation 
$\bk{=}(k_1, ..., k_m)$ with ${\sum_{i=1}^mk_i} \le B$, $0 {\le} k_i {\le} c_i$ such that the
	expected total weights of {\it unique} nodes visited are maximized.

In real applications, the network structure $\cG$, the node weights $\bsig$ and the starting distributions $\bm{\alpha}_i$'s may not be known in advance, so we consider both the
	\textit{offline optimization} cases when $\cG$, $\bsig$ and $\bm{\alpha}_i$'s are known and the \textit{online learning} case when $\cG$, $\bsig$ and $\bm{\alpha}_i$'s are unknown.
Moreover, different layers may have overlapping vertices (e.g., home-page of the same user may appear in different OSNs), 
and the starting distributions $\bm{\alpha}_i$'s may or may not be stationary distributions.
In this paper, we provide a systematic study of all these case combinations.

For the offline optimization, we first consider the general overlapping setting and provide constant approximation algorithms based on
	the lattice submodularity property for non-stationary  $\bm{\alpha}_i$'s and the diminishing-return (DR) submodularity property for stationary  $\bm{\alpha}_i$'s.
For the special non-overlapping setting, we design a dynamic programming algorithm that finds the exact optimal solution for MuLaNE.

In the online learning, we conduct multiple rounds of exploration, each of which has the same budget constraints $B$ and $c_i$'s. 
After each round of exploration, the total weights of unique nodes visited in this round are the reward for this round, and
	the trajectory of every explorer in every layer and the importance weights of visited nodes are observed as the feedback, which could be used to learn information about the network for the benefit of future explorations.

We adapt the combinatorial multi-armed bandit (CMAB) framework and the CUCB algorithm \cite{chen2016combinatorial} to our setting, and design online learning algorithms to minimize the regret, which
	is the difference between the cumulative reward achieved by the exact or approximate offline algorithm and that achieved by our learning algorithm,  over $T$ rounds.
We show that directly learning the graph structure $\cG$ is inefficient.
Instead, we define intermediate random variables in MuLaNE as the base arms in CMAB and learn these intermediate parameters, which can sufficiently determine the rewards to guide our budget allocation.
Moreover, the node weight is not revealed until this node is first visited, which further complicates the design of online exploration algorithms.
For the overlapping case, we adapt the CUCB algorithm to address the unrevealed node weights and  the extra constraint of monotonicity for the intermediate parameters.
We further improve the analysis by leveraging on special properties in our setting and show logarithmic regret bounds in this case.
For the non-overlapping case, we define more efficient intermediate parameters and use the exact offline algorithm, and thus achieve a better regret bound. 
Finally, we conduct experiments on a real-world multi-layered social network dataset to validate the effectiveness of both our offline and online algorithms.

Our contributions can be summarized as follows: (1) We are the first to model the multi-layered network exploration via random walks problem (\mulane) as an abstraction for many real-world applications (2) We provide a systematic study of the \mulane{} problem via theoretical analysis and empirical validation by considering offline and online settings for both overlapping and non-overlapping multi-layered networks. 
Due to the space limit, proofs are included in the supplementary material.

\vspace{-9pt}
\subsection{Related Work}\label{sec: related work}
\vspace{-5pt}
Network exploration via random walks has been studied in various application contexts such as community detection~\cite{pons2005computing}, centrality measuring~\cite{gleich2015pagerank},
	large-scale network sampling~\cite{li2019walking}, and influence maximization~\cite{wilder2018maximizing}.
Multiple random walks has also been used, e.g. in~\cite{lv2002search} for query resolution in peer-to-peer networks.
However, none of these studies address the budgeted MuLaNE problem.
\mulane{} is also related to the influence maximization (IM) problem~\cite{kempe2003maximizing, chen2009efficient} and can be viewed as a special variant of IM. 
Different from the standard Independent Cascade (IC) model~\cite{wang2012scalable} in IM, which randomly broadcasts to all its neighbors, the propagation process of \mulane{} is a random walk that selects one neighbor. Moreover, each unit of budget in \mulane{} is used to propagate one random-walk step, not to select one seed node as in IC. 

The offline budget allocation problem has been studied by \citet{alon2012optimizing,soma2014optimal}, the latter of which
propose the lattice submodularity and a constant approximation algorithm based on this property.
Our offline overlapping MuLaNE setting is based on a similar approach, but we provide a better approximation ratio compared to the original analysis in~\cite{alon2012optimizing}.
Moreover, we further show that the stationary setting enjoys DR-submodularity, leading to an efficient algorithm with a better approximation ratio.

The multi-armed bandit (MAB) problem is first studied by \citet{robbins1952some} and then extended by many studies
(cf. \cite{BCB12}).
Our online MuLaNE setting fits into the general Combinatorial MAB (CMAB) framework of~\cite{gai2012combinatorial,chen2016combinatorial}.
CUCB is proposed as a general algorithm for CMAB~\cite{chen2016combinatorial}.
Our study includes several adaptations, such as handling unknown node weights, defining intermediate random variables as base arms and so on.

\citet{chen2018community} study the community exploration problem, which is essentially a special case of MuLaNE with
	non-overlapping complete graphs.
As a result, their technique is different and much simpler than ours. 

The rest of the paper is organized as follows. 
Section \ref{sec: problem setting} states the settings of \mulane;
Section \ref{sec: equivalent model} states the equivalent bipartite coverage model to derive the explicit form for our reward function;
Section \ref{sec: offline opt} states offline algorithms for four offline settings with provable approximation guarantee and running time analysis;
Section \ref{sec: online learning} states two online learning algorithms with regret analysis;
Empirical results are shown in Section \ref{sec: experiments};
and Section \ref{sec: conclusion} concludes the paper.

\section{Problem Settings}\label{sec: problem setting}

\textbf{Basic Notations}.
In this paper, we use $\R$ and $\Z$ to denote the sets of real numbers and integers, respectively, and
	the associated subscript $\ge 0$ and $>0$ denote their non-negative and positive subsets, respectively.
Suppose we want to explore a multi-layered network $\cG(\cV, \cE, \bsig)$, consisting of $m$ layers $L_1, ..., L_m$ and $N$ unique nodes $\cV$ with fixed node weights
$\bsig\in [0,1]^{\cV}$.
Let $[m]$ denote the set $\{1,2,...,m\}$.
Each layer $L_i, i \in[m]$, represents a subgraph of $\cG$ we could explore and is modeled as a weighted digraph $\cG_i(\cV_i, \cE_i;w_i)$,
where $\cV_i \subseteq \cV$ is the set of vertices in layer $L_i$, $\cE_i \subseteq \cV_i\times \cV_i$ is the set of edges in $\cG_i$, 
and $w_i: \cE_i \rightarrow \R_{> 0}$ is the edge weight function associating each edge of $\cG_i$.
For any two layers $L_i$ and $L_j$, we say they are \textit{non-overlapping} if  $\cV_i \cap \cV_j = \emptyset$. 
$\cG$ is called a \textit{non-overlapping} multi-layered network if any two layers are non-overlapping; 
otherwise $\cG$ is an \textit{overlapping} multi-layered network. 
Let $n_i=|\cV_i|$ denote the size of layer $L_i$, and 
the adjacency matrix of $\cG_i$ is defined as $\bm{A_i} \in \R^{n_i \times n_i}_{\ge 0}$, where
$\bm{A_i}[u,v]=w_i((u,v))$ if $(u,v)\in \cE_i$ and 0 otherwise. 

\noindent\textbf{Exploration Rule.}
Each layer $L_i$ is associated with an explorer $W_i$, a budget $k_i \in \Z_{\ge 0}$ and an initial starting distribution 
$\bm{\alpha}_i = (\alpha_{i,u})_{u\in \cV_i}\in \R^{n_i}_{\ge 0}$ with $\sum_{u\in \cV_i} \alpha_{i,u}=1$.
The exploration process of $W_i$ is identical to applying \textit{weighted random walks} within layer $L_i$. 
Specifically, the explorer $W_i$ starts the exploration from a random node $v_1\in \cV_i$ with probability $\alpha_{i,v_1}$; 
it consumes one unit of budget and continues the exploration by walking
from $v_1$ to one of its out-neighbors, say $v_2$, with probability $\bm{P}_i[v_1, v_2]$, where $\bm{P}_i \in \R^{n_i \times n_i}_{\ge 0}$ is the transition probability matrix, and $\bm{P}_i{[u,v]} {\defeq} \bm{A_i}{[u,v]}/(\sum_{w:{(u,w)} \in \cE_i} \bm{A_i}{[u,w]} )$. 
Consider visiting the initial node $v_1$ as the first step, the process is repeated until the explorer $W_i$ walks $k_i$ steps
 and visits $k_i$ nodes (with possibly duplicated nodes).
Note that one can easily generalize our results by moving $\lambda_i \in \bbZ_{>0}$ steps with one unit of budget for $W_i$, 
and for simplicity, we set $\lambda_i=1$.

\noindent\textbf{Reward Function.}
We define the exploration trajectory for explorer $W_i$ after exploring $k_i$ nodes as $\Phi(i,k_i) {\defeq}(X_{i,1}$, $\ldots, X_{i,k_i})$, where $X_{i,j} \in \cV_i$ denotes the node visited at $j$-th step, and $\Pr(X_{i, 1}{=}u) = \alpha_{i,u}, u\in \cV_i$.
The reward for $\Phi(i, k_i)$ is defined as the total weights of \textit{unique} nodes visited by $W_i$, i.e., $\sum_{v \in \bigcup_{j=1}^{k_i}\{X_{i,j}\}}\sigma_v$. 
Considering trajectories of all $W_i$'s, the total reward is the total weights of \textit{unique} nodes visited by all random walkers, i.e., $\sum_{v \in \bigcup_{i=1}^{m}\ \bigcup_{j=1}^{k_i}\{X_{i,j}\}}\sigma_v$.
For notational simplicity, let $\cG {\defeq} (\cG_1, \ldots,
\cG_m)$, $\bm{\alpha} {\defeq} (\bm{\alpha}_1, \ldots, \bm{\alpha}_m)$ and $\bsig=(\sigma_1, ..., \sigma_{|\cV|})$ be the parameters of a problem instance and $\bm{k} {\defeq} (k_1, \ldots, k_m)$ be the allocation vector.
For overlapping multi-layered network $\cG$,
the total expected reward is 
	\begin{equation}\label{eq: over opt}\textstyle
	r_{\cG,\bm{\alpha},\bsig} (\bm{k}) {\defeq} {\E_{\Phi(1,k_1), ..., \Phi(m,k_m)}\left[ \sum_{v \in \bigcup_{i=1}^{m}\ \bigcup_{j=1}^{k_i}\{X_{i,j}\}}\sigma_v \right]},
	\end{equation}
where its explicit formula will be discussed later in Sec.~\ref{sec: equivalent model}.
For non-overlapping $\cG$, 
the reward function can be simplified and written as the summation over separated layers, 
	\begin{equation}\label{eq: non-over opt}\textstyle
	r_{\cG,\bm{\alpha},\bsig} (\bm{k}) \defeq {\sum_{i=1}^{m}}{\E_{\Phi(i,k_i)}\left[\sum_{v \in \bigcup_{j=1}^{k_i}\{X_{i,j}\}}\sigma_v \right]}.
	\end{equation}
\noindent\textbf{Problem Formulation.}
We are interested in the \textit{budget allocation problem}: how to allocate the total budget $B \in \Z_{\ge 0}$ to the $m$ explorers so as to maximize the total weights of unique nodes visited. 
Also, we assume there is a budget constraint $\bc \defeq (c_1, ..., c_m)$ such that the allocated budget $k_i$ should not exceed $c_i$, i.e., $k_i \le c_i$, for $i \in [m]$. 
\setlength{\abovedisplayskip}{5pt}
\setlength{\belowdisplayskip}{-5pt}
\begin{definition}
Given graph structures $\cG \defeq (\cG_1, \ldots,
\cG_m)$, starting distributions $\bm{\alpha}\defeq(\bm{\alpha}_1, \ldots, \bm{\alpha}_m)$, total budget $B$ and budget constraints $\bc \defeq(c_1, ..., c_m)$, the \textbf{Mu}lti-\textbf{La}yered \textbf{N}etwork \textbf{E}xploration problem, denoted as \textbf{\mulane{}}, is formulated as the following optimization problem,
\noindent
\begin{equation}\label{eq: opt budget}
    \text{Maximize} \,\, r_{\cG,\bm{\alpha},\bsig} (\bm{k}) \, s.t. \,\, \bk \in \mathbb{Z}^m_{\ge 0} \le \bc,\, {\sum_{i=1}^{m}}{k_i}\le B,
\end{equation}
\end{definition}
\noindent where $r_{\cG,\bm{\alpha},\bsig} (\bm{k})$ is given by Eq.~(\ref{eq: over opt}) or Eq.~(\ref{eq: non-over opt}).
We also need to consider the following settings:
\setlength{\abovedisplayskip}{7pt}
\setlength{\belowdisplayskip}{7pt}

{\noindent\textbf{\textit{Offline Setting.}} For the offline setting, all problem instance parameters $(\cG, \bm{\alpha}, \bsig, \bc, B)$ are given, and we aim to find the optimal budget
allocation $\bm{k^*}$ determined by Eq.~(\ref{eq:
opt budget}).}

\noindent\textbf{\textit{Online Setting.}} 
For the online setting, we consider $T$-round explorations. 
	Before the exploration, we only know an upper bound of the total number of nodes in $\cG$ and
		the number of layers $m$,\footnote{More precisely we only need to know
		the number of independent explorers. 
		If two explorers explore on the same layer, it is equivalent as two layers with identical graph structures.} but we do not know about the network structure $\cG$
		, the starting distributions $\bm{\alpha}$ or node weights $\bsig$.
	In round $t \in [T]$, we choose the budget allocation $\bm{k}_t {\defeq} (k_{t,1}, \ldots, k_{t,m})$ only based on observations from previous rounds, 
		where $k_{t,i}\in \Z_{\ge 0}$ is the budget allocated to the $i$-th layer and $\sum_{i=1}^{m}{k_{t,i}}\le B, \bk_t \le \bc$. 
	Define $\bm{k}_t$ as the {\em action} taken in round $t$. 
	By taking the action $\bm{k}_t$ we mean that we interact with the environment and the random explorer $W_i$, which is part of the environment, would explore $k_{t,i}$ steps and generate
	 the exploration trajectory $\Phi(i, k_{t,i}){=}(X_{i,1}, ..., X_{i,k_{t,i}})$.
	{After we take
	the action $\bm{k}_t$, the exploration trajectory $\Phi(i, k_{t,i})$ for each
	layer $L_i$ as well as the fixed importance weight $\sigma_u$ of $u \in \Phi(i,k_{t,i})$ is revealed as the \textit{feedback}\footnote{We further consider random node weights with unknown mean vector $\bsig$, see the \OnlyInFull{Appendix~\ref{appendix: over extend}}\OnlyInShort{supplementary material} for details.}, which  
	we leverage on to
	learn parameters related to the graph structures $\cG$, the starting distribution $\bm{\alpha}$ and the node weights $\bsig$,
		so that we can select better actions in future rounds}.
The reward we gain in round $t$ is the total weights of unique nodes visited by all random explorers in round $t$.
Our goal is to design an efficient online learning algorithm $A$ to give us guidance on
taking actions and gain as much cumulative reward as possible in $T$ rounds. 

In general, the online
	algorithm has to deal with the exploration-exploitation tradeoff.
	The cumulative~regret is a commonly used metric to evaluate
	the performance of an online learning algorithm $A$.
	{
	Formally, the $T$-round ($(\xi,\beta)$-approximation) regret of $A$ is:
			\begin{equation}\textstyle \label{eq: alpha beta regret}
			Reg_{\cG,\bm{\alpha},\bsig}(T) {=} \xi \beta T\cdot r_{\cG,\bm{\alpha},\bsig}( \bm{k^*}){-}\E\big[ \sum_{t=1}^Tr_{\cG,\bm{\alpha},\sigma}(\bm{k_t}^A)\big],
			\end{equation} 
			}
	where $r_{\cG,\bm{\alpha},\bsig} (\bm{k^*})$ is the reward value for the optimal budget
	allocation $\bm{k^*}$, $\bm{k}_t^A$ is budget allocation selected by the learning algorithm
	$A$ in round $t$, the expectation is taken over the randomness of the
	algorithm and the exploration trajectories in all $T$ rounds,
	and $(\xi,\beta)$ is the approximation guarantee of the offline oracle as explained below.
Similar to other online learning frameworks~\cite{chen2016combinatorial, wang2017improving}, we assume that the online learning algorithm has access to an offline $(\xi, \beta)$-approximation oracle, which for problem 
instance $(\cG, \bm{\alpha}, \bsig, \bc, B)$ outputs an action $\bk$ such that $\Pr\left(r_{{\cG},\bm{\alpha}, \bsig} (\bk) \ge \xi \cdot r_{{\cG},\bm{\alpha},\bsig}( \bk^*)\right) \ge \beta$.
We also remark that the actual oracle we use takes certain intermediate parameters as inputs instead of $\cG$ and $\bm{\alpha}$.

\textbf{{Submodularity and DR-Submodularity Over Integer lattices.}}
To solve the \mulane{} problem, 
we leverage on the submodular and DR-submodular properties of the reward function.
For any $\bx, \by \in \bbZ^m_{\ge 0}$, we denote $\bx \vee \by, \bx \wedge \by \in \bbZ^m_{\ge 0} $ as the coordinate-wise maximum and minimum of these two vectors, i.e., $(\bx \vee \by)_i=\max\{x_i, y_i\}, (\bx \wedge \by)_i=\min\{x_i, y_i\}$.
We define a function $f: \bbZ_{\ge 0}^m \rightarrow \bbR$ over the integer lattice $\bbZ^m_{\ge 0 }$ as a \textit{ submodular} function if the following inequality holds for any $\bx, \by \in \bbZ^m_{\ge 0}$:
\begin{equation}\label{ieq: lattice def}\textstyle
f(\bx \vee \by) + f(\bx \wedge \by) \le f(\bx) + f(\by).
\end{equation}
Let $\text{supp}^+(\bx - \by)$ denote the set $I = \{i\in [m]: x_i > y_i\}$,
$\bch_i = (0, ..., 1, ..., 0)$ be the one-hot vector whose $i$-th element is 1 and 0 otherwise, and $\bx \ge \by$ means $x_i \ge y_i$ for all $i \in [m]$.
We define a function $f: \bbZ_{\ge 0}^m \rightarrow \bbR$ as a \textit{DR-submodular} (diminishing return submodular) function if the following inequality holds for any $\bx \le \by$ and $i \in [m]$, 
\begin{equation}\textstyle 
f(\by + \bch_i) - f(\by) \le f(\bx + \bch_i) - f(\bx).
\end{equation}
We say a function $f$ is \textit{monotone} if for any $\bx \le \by$, $f(\bx) \le f(\by)$. 
Note that for a function $f: \bbZ_{\ge 0}^m \rightarrow \R$, submodularity does not imply DR-submodularity over integer lattices.
In fact, the former is weaker than the latter, that is, a DR-submodular function is always a submodular function, but not vice versa.
However, for a typical submodular function $f: \{0,1\}^m \rightarrow \bbR$ defined on a set, they are equivalent.

\section{Equivalent Bipartite Coverage Model}\label{sec: equivalent model}
In order to derive the explicit formulation of the reward function $r_{\cG, \bm{\alpha},\bsig}( \bk)$ in Eq.~(\ref{eq: over opt}),
we construct an undirected bipartite coverage graph $\cB(\cW,\cV,\cE')$, where $\cW=\{W_1, ..., W_m\}$ denotes $m$ random explorers,
$\cV=\bigcup_{i \in [m]}{\cV_i}$ denotes all possible distinct nodes to be explored in $\cG$, and the edge set $\cE'{=}\{(W_i,u)|u \in \cV_i, i \in [m]\}$ indicating whether node $u$ could be visited by $W_i$. 
For each edge $(W_i, u) \in \cE'$, we associate it with $c_i+1$ visiting probabilities denoted as $P_{i,u}(k_i)$ for $k_i \in \{0\} \cup  [c_i]$.
$P_{i,u}(k_i)$ represents the probability that the node $u$ is visited by the random walker $W_i$ given the budget $k_i$, i.e., $P_{i,u}(k_i)=\Pr(u \in \Phi(i,k_i))$.
Since $W_i$ can never visit the node outside the $i$-th layer (i.e., $u \notin \cG_i$), we set $P_{i,u}(k_i)=0$ if $(W_i, u) \not\in \cE'$. 
Then, given budget allocation $\bk$, the probability of a node $u$ visited by at least one random explorer is $\Pr(u, \bk){=} 1-\Pi_{i\in[m]}(1-P_{i,u}(k_i))$ because each $W_i$ has the independent probability $P_{i,u}(k_i)$ to visit $u$. 

By summing over all possible nodes, the reward function is 
\begin{equation}\label{eq: over reward}\textstyle
r_{\cG,\bm{\alpha}, \bsig}(\bm{k})=\sum_{u\in\cV}\sigma_u\left(1-\Pi_{i\in[m]}\left(1-P_{i,u}(k_i)\right)\right)
\end{equation}
According to Eq.~(\ref{eq: non-over opt}), for non-overlapping multi-layered network, we can rewrite the reward function as:
\begin{equation}\label{eq: non-over reward}\textstyle
    r_{\cG,\bm{\alpha},\bsig} (\bm{k})=\sum_{i\in[m]}\sum_{u \in \cV_i}\sigma_u P_{i,u}(k_i).
\end{equation}

\noindent\textbf{Remark.}
The bipartite coverage model is needed to integrate the graph structure and the random walk exploration mechanisms into $P_{i,u}(k_i)$ to determine the reward function. It also handles the scenario where each layer is explored by multiple random walkers \OnlyInFull{(see Appendix~\ref{appendix: multiple})}\OnlyInShort{(see supplementary material)}. 

\subsection{Properties of the visiting probability $P_{i,u}(k_i)$}\label{sec: explicit formula}

The quantities $P_{i,u}(k_i)$'s in Eq.~\eqref{eq: over reward} and~\eqref{eq: non-over reward} are crucial for both our
	offline and online algorithms, and thus we provide their analytical formulas and properties here.
In order to analyze the property of the reward function, we apply the absorbing Markov Chain technique to calculate $P_{i,u}(k_i)$.
For $u \notin \cG_i$, $P_{i,u}(k_i)=0$;
For $u \in \cG_i$, we create an absorbing Markov Chain $\bm{P_i(u)} \in \R^{n_i \times n_i}$ by setting the target node $u$ as the absorbing node. 
We derive the corresponding transition matrix $\bm{P_i(u)}$ as 
$\bm{P_i(u)}[v,\cdot] = \bch_u^{\top}$ if $v$ = $u$ and $\bm{P_i(u)}[v,\cdot]
=\bm{P_i}{[v,\cdot]}$ otherwise,
where $\bm{P_i}{[v,\cdot]}$ denotes the row vector corresponding to the node $v$ of $\bm{P_i}$,
 and $\bch_u= (0, ..., 0, 1, 0, ..., 0 )^\top$ denotes the one-hot vector with $1$ at the $u$-th entry and $0$ 
 elsewhere.\footnote{When related to matrix operations, we treat all vectors as column vectors by default.}
Intuitively, $\bm{P_i(u)}$ corresponds to the transition probability matrix of $\cG_i$ after removing all out-edges of $u$ and adding a self loop to itself in the original graph $\cG_i$. 
The random walker $W_i$ will be trapped in $u$ if it ever visits $u$.
We observe that $P_{i,u}(k_i)$ equals to the probability the random walker stays in
the absorbing node $u$ at step $k_i \in \bbZ_{> 0}$ (trivially, $P_{i,u}(0)=0
$), 
\begin{equation}\label{eq: simple form}\textstyle
P_{i,u}(k_i)=\bm{\alpha_i}^{\top} \bm{P_i(u)}^{k_i-1} \bch_u.
\end{equation}
Then define the marginal gain of $P_{i,u}(\cdot)$ at step $k_i\ge1$ as,
\begin{equation}\label{eq: marginal gain}\textstyle
g_{i,u}(k_i) = P_{i,u}(k_i) - P_{i,u}(k_i-1).
\end{equation}
The physical meaning of $g_{i,u}(k_i)$ is the probability that node $u$ is visited exactly at the $k_i$-th step and not visited before $k_i$. 
Now, we can show $g_{i,u}(k_i)$ is non-negative,
\begin{restatable}{lemma}{lemNonDecreasing}\label{lem: non-decreasing}
$g_{i,u}(k_i) \ge 0$ for any $i\in [m]$, $u \in \cV_i$, $k_i\in \Z_{> 0}$.
\end{restatable}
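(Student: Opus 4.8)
The plan is to show that the visiting probability $P_{i,u}(\cdot)$ is monotone nondecreasing in the budget, since $g_{i,u}(k_i) \ge 0$ is precisely the statement $P_{i,u}(k_i) \ge P_{i,u}(k_i-1)$. I would work directly from the closed form $P_{i,u}(k_i)=\bm{\alpha_i}^{\top} \bm{P_i(u)}^{k_i-1} \bch_u$ in Eq.~\eqref{eq: simple form}, exploiting that $\bm{P_i(u)}$ is an absorbing transition matrix in which $u$ has a self-loop. The conceptual reason monotonicity must hold is that the trajectory $\Phi(i,k_i-1)$ is a prefix of $\Phi(i,k_i)$, so the event $\{u \in \Phi(i,k_i-1)\}$ is contained in $\{u \in \Phi(i,k_i)\}$; the matrix computation below is just a rigorous encoding of this containment.

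The key steps, in order, are as follows. First I would dispatch the base case $k_i=1$: here $P_{i,u}(1)=\bm{\alpha_i}^{\top}\bch_u=\alpha_{i,u}\ge 0$ while $P_{i,u}(0)=0$, so $g_{i,u}(1)=\alpha_{i,u}\ge 0$. Second, for $k_i\ge 2$ I would establish the crucial entrywise vector inequality $\bm{P_i(u)}\,\bch_u \ge \bch_u$. To see this, note that $\bm{P_i(u)}\,\bch_u$ is exactly the $u$-th column of $\bm{P_i(u)}$: its $u$-th entry is $\bm{P_i(u)}[u,u]=1$ (the self-loop at the absorbing node), matching the $u$-th entry of $\bch_u$, while every other entry $\bm{P_i(u)}[v,u]=\bm{P_i}[v,u]\ge 0$ dominates the zero entries of $\bch_u$. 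Third, I would propagate this inequality: since $\bm{P_i(u)}$ is a nonnegative (stochastic) matrix, left-multiplying both sides of $\bm{P_i(u)}\,\bch_u \ge \bch_u$ by the nonnegative matrix $\bm{P_i(u)}^{k_i-2}$ preserves the entrywise order, giving $\bm{P_i(u)}^{k_i-1}\bch_u \ge \bm{P_i(u)}^{k_i-2}\bch_u$. Finally, left-multiplying by the nonnegative row vector $\bm{\alpha_i}^{\top}$ yields $P_{i,u}(k_i)\ge P_{i,u}(k_i-1)$, i.e.\ $g_{i,u}(k_i)\ge 0$.

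The argument is short, and I do not expect a serious obstacle; the only points requiring care are bookkeeping ones. One is treating $k_i=1$ separately, because $P_{i,u}(0)=0$ is defined by convention rather than produced by the formula, and the exponent $\bm{P_i(u)}^{k_i-2}$ is only a valid nonnegative matrix when $k_i\ge 2$. The other is justifying that entrywise inequalities between nonnegative vectors are preserved under multiplication by nonnegative matrices and by the nonnegative starting distribution $\bm{\alpha_i}$; this is routine but should be stated explicitly. As an alternative I could give the purely probabilistic proof via the prefix/containment observation above, which avoids the matrix manipulation entirely; I would keep the algebraic version as the primary proof since it ties directly to Eq.~\eqref{eq: simple form} and sets up the nonnegativity facts reused in later submodularity arguments.
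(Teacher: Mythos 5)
Your proof is correct, and it takes a genuinely different route from the paper's. The paper derives an exact closed form for the increment: invoking an absorbing-chain identity (Eq.\ (2.67) of \cite{kijima1997markov}), it rewrites $P_{i,u}(k_i)$ in terms of the transient submatrix $\bm{T_i(u)}=(\bm{P_i})_{-u,-u}$ obtained by deleting the $u$-th row and column of $\bm{P_i}$, which yields $g_{i,u}(k_i) = (\bm{\alpha_i})_{-u}^{\top}\bm{T_i(u)}^{k_i-2}(\bm{p_i})_u$ for $k_i\ge 2$ (and $g_{i,u}(1)=\alpha_{i,u}$), where $(\bm{p_i})_u=(\bm{P_i}[\cdot,u])_{-u}$; nonnegativity is then immediate because every factor is entrywise nonnegative. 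You never compute $g_{i,u}(k_i)$ explicitly at all: you work with the full absorbing matrix $\bm{P_i(u)}$ exactly as it appears in Eq.~\eqref{eq: simple form}, establish the entrywise inequality $\bm{P_i(u)}\bch_u \ge \bch_u$ from the self-loop $\bm{P_i(u)}[u,u]=1$, and propagate it by multiplying with the nonnegative matrix $\bm{P_i(u)}^{k_i-2}$ and then the nonnegative row vector $\bm{\alpha_i}^{\top}$. Your argument is more elementary and self-contained --- it needs no external identity and no deleted-matrix bookkeeping, it directly encodes the probabilistic prefix-containment intuition, and your separate treatment of $k_i=1$ and your care about when the exponent $k_i-2$ is valid are exactly the right bookkeeping points. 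What the paper's heavier route buys is the explicit expression for $g_{i,u}(k_i)$ itself, which is not a throwaway: the same formula (with $\bm{\pi_i}$ in place of $\bm{\alpha_i}$) is the starting point of the proof of Lemma~\ref{lem: stationary}, where one must control the sign of the second difference $g_{i,u}(k_i+1)-g_{i,u}(k_i)$, something your inequality-only argument does not by itself provide.
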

This means
$P_{i,u}(k_i)$ is non-decreasing with respect to step $k_i$.
In other words, the more budgets we allocate to $W_i$, the higher probability $W_i$ can visit $u$ in $k_i$ steps.

\noindent{\textbf{Starting from arbitrary distributions:}}
Although $P_{i,u}(k_i)$ is monotone (non-decreasing) w.r.t $k_i$, 
 $g_{i,u}(k_i)$ may not be monotonic non-increasing under arbitrary staring distributions.
Intuitively, there may exist one critical step $k_i$ such that $P_{i,u}(k_i)$ suddenly increases by a large value \OnlyInFull{(see Appendix~\ref{appendix: explct form stationary})}\OnlyInShort{(see supplementary material)}.
This means that $P_{i,u}(k_i)$ lacks the diminishing return (or ``discretely concave") property, which many problems rely on to provide good optimization~\cite{kapralov2013online,soma2015generalization}.
In other words, we are dealing with a more challenging non-concave optimization problem for the discrete budget allocation.

\noindent{\textbf{Starting from the stationary distribution:}}
If our walkers start with the stationary distributions, the following property holds: $g_{i,u}(k_i)$ is non-increasing w.r.t $k_i$.
\begin{restatable}{lemma}{thmStationary}\label{lem: stationary}
	$g_{i,u}(k_i+1) - g_{i,u}(k_i) \le 0$ for any $i\in [m]$, $u \in \cV_i$, $k_i\in \Z_{> 0}$, if $\bm{\alpha}_i=\bm{\pi_i}$, where $\bm{\pi_i}^{\top}\bm{P_i}=\bm{\pi_i}^{\top}$.
\end{restatable}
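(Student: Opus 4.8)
The plan is to bypass the matrix-algebra definition of $P_{i,u}(k_i)$ and argue purely probabilistically, exploiting the fact that a walk started from a stationary distribution is a stationary (shift-invariant) process. Recall the physical interpretation already established: $g_{i,u}(k_i)$ is the probability that explorer $W_i$ visits $u$ for the \emph{first} time exactly at step $k_i$. Writing $(X_{i,1},X_{i,2},\ldots)$ for the trajectory of $W_i$ with $X_{i,1}\sim\bm{\alpha}_i=\bm{\pi_i}$, and setting $T_u=\min\{j\ge1:X_{i,j}=u\}$, this reads
\[
g_{i,u}(k_i)=\Pr(T_u=k_i)=\Pr\big(X_{i,1}\ne u,\ \ldots,\ X_{i,k_i-1}\ne u,\ X_{i,k_i}=u\big).
\]
This event-level rewriting is legitimate because making $u$ absorbing in $\bm{P_i(u)}$ does not alter whether $u$ has been reached within $k_i$ steps, so $g_{i,u}(k_i)=P_{i,u}(k_i)-P_{i,u}(k_i-1)$ indeed equals this first-passage probability in the original walk.

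First I would record that, since $\bm{\pi_i}$ is stationary ($\bm{\pi_i}^{\top}\bm{P_i}=\bm{\pi_i}^{\top}$), the law of $(X_{i,1},X_{i,2},\ldots)$ is invariant under the time shift $j\mapsto j+1$; hence any finite-cylinder probability is unchanged when all indices are increased by one. Applying this shift to the display above gives the equivalent form $g_{i,u}(k_i)=\Pr(X_{i,2}\ne u,\ \ldots,\ X_{i,k_i}\ne u,\ X_{i,k_i+1}=u)$, an event that constrains only coordinates $X_{i,2},\ldots,X_{i,k_i+1}$ and imposes \emph{no} condition on $X_{i,1}$. The key step is then to compare this with $g_{i,u}(k_i+1)=\Pr(X_{i,1}\ne u,\ X_{i,2}\ne u,\ \ldots,\ X_{i,k_i}\ne u,\ X_{i,k_i+1}=u)$, which is the same event with the extra requirement $X_{i,1}\ne u$. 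Since the $g_{i,u}(k_i+1)$-event is contained in the $g_{i,u}(k_i)$-event, subtracting yields
\[
g_{i,u}(k_i)-g_{i,u}(k_i+1)=\Pr\big(X_{i,1}=u,\ X_{i,2}\ne u,\ \ldots,\ X_{i,k_i}\ne u,\ X_{i,k_i+1}=u\big)\ge0,
\]
which is precisely the claimed monotonicity. The boundary case $k_i=1$ is covered by reading the range $X_{i,2},\ldots,X_{i,k_i}$ as empty.

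The hard part is not the computation but finding the right viewpoint. A tempting first route is to show the survival function $\Pr(T_u>k_i)$ is convex by expanding the absorbing-chain powers and factoring out a $(I-\bm{P_i(u)})^2$ term; this stalls because those terms are not sign-definite entrywise, and without reversibility $\bm{P_i}$ may have complex spectrum, so spectral/real-eigenvalue arguments are unavailable. The resolution is the probabilistic identity above, whose only genuine ingredient is shift-invariance of the stationary walk together with the monotone containment of the two first-passage events. I would therefore devote the writeup to (i) carefully justifying that the absorbing-chain quantity equals the stated first-passage probability, and (ii) invoking stationarity to perform the index shift; everything else is a one-line inclusion–exclusion. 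Notably, this argument uses \emph{only} stationarity and not reversibility, so it applies to the general, possibly non-reversible transition matrices $\bm{P_i}$ considered here.
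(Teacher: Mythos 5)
Your proof is correct, but it takes a genuinely different route from the paper's. The paper argues algebraically with the absorbing chain: it first derives the closed form $g_{i,u}(k_i) = (\bm{\pi_i})_{-u}^{\top}\bm{T_i(u)}^{k_i-2}(\bm{p_i})_u$ for $k_i\ge 2$ (with $g_{i,u}(1)=\pi_{i,u}$), where $\bm{T_i(u)}=(\bm{P_i})_{-u,-u}$ and $(\bm{p_i})_u=(\bm{P_i}[\cdot,u])_{-u}$, and then uses stationarity in the form $(\bm{\pi_i})_{-u}^{\top} - (\bm{\pi_i})_{-u}^{\top}\bm{T_i(u)} = \pi_{i,u}(\bm{q_i})_u^{\top}$, with $(\bm{q_i})_u^{\top}=(\bm{P_i}[u,\cdot])_{-u}$, to collapse the difference into $g_{i,u}(k_i+1)-g_{i,u}(k_i) = -\pi_{i,u}\,(\bm{q_i})_u^{\top}\bm{T_i(u)}^{k_i-2}(\bm{p_i})_u \le 0$, the sign following from entrywise nonnegativity. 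You instead read $g_{i,u}(k_i)$ as the first-passage probability $\Pr(X_{i,1}\ne u,\ldots,X_{i,k_i-1}\ne u,\ X_{i,k_i}=u)$, invoke shift-invariance of the stationary chain to rewrite it as $\Pr(X_{i,2}\ne u,\ldots,X_{i,k_i}\ne u,\ X_{i,k_i+1}=u)$, and finish by event inclusion. The two arguments are dual: your residual term $\Pr(X_{i,1}=u,\ X_{i,2}\ne u,\ldots,X_{i,k_i}\ne u,\ X_{i,k_i+1}=u)$ is exactly the paper's matrix expression $\pi_{i,u}(\bm{q_i})_u^{\top}\bm{T_i(u)}^{k_i-2}(\bm{p_i})_u$, so in effect you have given a probabilistic derivation of the same identity. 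What your route buys is conceptual transparency: stationarity enters only as shift-invariance, the sign is immediate from set containment, and no bookkeeping of nonnegative matrix products is needed. What the paper's route buys is that the explicit algebraic form of $g_{i,u}$ is reused elsewhere (for instance in the appendix argument that the stationary distribution is the \emph{only} starting distribution with this non-increasing property, and in the efficient computation of $P_{i,u}(k_i)$). Both approaches correctly avoid any reversibility or spectral assumption, as you note.
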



\section{Offline Optimization for \mulane}\label{sec: offline opt}
In this section, we first consider the general case, where layers are overlapping and starting distributions are arbitrary.
Next, we consider the starting distribution is the stationary distribution, and give solutions with better solution quality and time complexity.
Then we analyze special cases where layers are non-overlapping and give optimal solutions for arbitrary distributions.
The summary for offline models and algorithmic results are presented in Table~\ref{tab: offline models}.
\begin{table*}[t]
	\centering
	\caption{Summary of the offline models and algorithms.}	\resizebox{2.05\columnwidth}{!}{
	\label{tab: offline models}
	\begin{tabular}{ccccc}
		\toprule
		\textbf{Overlapping?}& \textbf{Starting distribution}& \textbf{Algorithm} & \textbf{Apprx ratio} & \textbf{Time complexity}\\
		\midrule
		\checkmark & \textit{Arbitrary} & Budget Effective Greedy & 
		$(1-e^{-\eta})$\footnote{$1-e^{-\eta}\approx 0.357$, where $\eta$ is the solution of $e^\eta=2-\eta$.}
		&$O(B\norm{\bc}_{\infty}mn_{max}+\norm{\bc}_{\infty}mn_{max}^3)$ \\
		\checkmark & \textit{Stationary} & Myopic Greedy & $(1-1/e)$ & $O(Bmn_{max}+\norm{\bc}_{\infty}mn_{max}^3)$ \\
		$\times$ & \textit{Arbitrary} & Dynamic Programming & 1 & $O(B\norm{\bc}_{\infty}m+\norm{\bc}_{\infty}mn_{max}^3)$ \\
		$\times$ & \textit{Stationary} & Myopic Greedy & 1 & $O(B\log m + \norm{\bc}_{\infty}mn_{max}^3)$ \\
		\bottomrule
	\end{tabular}
}
\end{table*}

\subsection{Overlapping MuLaNE}
\textbf{Starting from arbitrary distributions.}
Based on the equivalent bipartite coverage model, one can observe that our problem formulation is a generalization of the Probabilistic Maximum Coverage (PMC)~\cite{chen2016combinatorial} problem, which is NP-hard and has a $(1-1/e)$ approximation based on submodular set function maximization.
However, our problem is more general in that we want to select multi-sets from $\cW$ with budget constraints, and the reward
	function in general does not have the DR-submodular property for an arbitrary starting distribution.
Nevertheless, we have the following lemma to solve our problem.

\begin{restatable}{lemma}{thmSubmodular}\label{thm: lattice submodular}
For any network $\cG$, distribution $\bm{\alpha}$ and weights $\bsig$, $r_{\cG, \bm{\alpha},\bsig}(\cdot): \bbZ^m_{\ge 0} \rightarrow \R$ is monotone and submodular.
\end{restatable}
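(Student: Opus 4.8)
The plan is to exploit the explicit product form of the reward in Eq.~\eqref{eq: over reward}. Writing $r_{\cG,\bm{\alpha},\bsig}(\bk) = \sum_{u\in\cV}\sigma_u\, r_u(\bk)$ with $r_u(\bk) \defeq 1 - \prod_{i\in[m]}\prts{1-P_{i,u}(k_i)}$ and every weight $\sigma_u \ge 0$, and noting that both monotonicity and lattice submodularity are preserved under nonnegative linear combinations (the defining inequality~\eqref{ieq: lattice def} is linear in $f$), it suffices to prove that each per-node function $r_u$ is monotone and submodular. I would first dispose of monotonicity: setting $q_i(k_i) \defeq 1 - P_{i,u}(k_i) \in [0,1]$, Lemma~\ref{lem: non-decreasing} gives $g_{i,u}(k_i) \ge 0$, so each $q_i$ is non-increasing in $k_i$; hence $\prod_i q_i(k_i)$ is non-increasing in $\bk$ and $r_u = 1 - \prod_i q_i$ is non-decreasing, i.e.\ monotone.

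For submodularity I would use the standard characterization of lattice submodularity over the integer lattice: a function $f: \bbZ^m_{\ge 0} \to \R$ satisfies $f(\bx\vee\by)+f(\bx\wedge\by)\le f(\bx)+f(\by)$ for all $\bx,\by$ if and only if, for every $\bk \in \bbZ^m_{\ge 0}$ and every pair $i \ne j$, the mixed second difference is non-positive, namely $f(\bk + \bch_i + \bch_j) - f(\bk+\bch_i) - f(\bk+\bch_j) + f(\bk) \le 0$. The forward direction is immediate (take $\bx = \bk+\bch_i$ and $\by=\bk+\bch_j$, whose join and meet are $\bk+\bch_i+\bch_j$ and $\bk$); the converse follows by telescoping $f(\bx)+f(\by)-f(\bx\vee\by)-f(\bx\wedge\by)$ over the coordinates in $\text{supp}^+(\bx-\by)$ and $\text{supp}^+(\by-\bx)$ into a sum of such mixed second differences, so I would either invoke this classical fact (Topkis) or insert a short inductive argument. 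Crucially, this characterization involves only cross terms in two distinct coordinates and imposes no single-coordinate concavity condition, which is exactly why $r_u$ will be submodular but need not be DR-submodular, consistent with the remark preceding this lemma.

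It then remains to verify the mixed second difference for $r_u$. Fixing $i \ne j$ and $\bk$, I would factor out the common product $C \defeq \prod_{\ell \ne i,j} q_\ell(k_\ell) \ge 0$ over the untouched coordinates; the four evaluations of $r_u$ all share the constant $1$ and the factor $C$, so a direct expansion collapses to
\begin{equation*}\textstyle
r_u(\bk+\bch_i+\bch_j) - r_u(\bk+\bch_i) - r_u(\bk+\bch_j) + r_u(\bk) = -C\,\prts{q_i(k_i+1)-q_i(k_i)}\prts{q_j(k_j+1)-q_j(k_j)}.
\end{equation*}
By Lemma~\ref{lem: non-decreasing}, each factor satisfies $q_i(k_i+1)-q_i(k_i) = -g_{i,u}(k_i+1) \le 0$, so the product of the two parenthesized terms is non-negative; combined with $C \ge 0$ and the leading minus sign, the whole expression is $\le 0$, establishing submodularity of $r_u$ and hence of $r_{\cG,\bm{\alpha},\bsig}$. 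The main obstacle is not this computation, which is routine once the product form is exposed, but rather pinning down and justifying the pairwise characterization of lattice submodularity (and handling the lattice boundary, where coordinates sit at $0$, using $P_{i,u}(0)=0$); once that reduction is in place the rest is a one-line factorization.
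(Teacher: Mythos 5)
Your proposal is correct and follows essentially the same route as the paper: the same per-node decomposition $r_{\cG,\bm{\alpha},\bsig} = \sum_{u}\sigma_u r_u$ with nonnegative weights, monotonicity from the non-decreasing visiting probabilities (Lemma~\ref{lem: non-decreasing}), and submodularity by verifying that the mixed second difference of $r_u$ in two distinct coordinates collapses to $-C\,(q_i(k_i+1)-q_i(k_i))(q_j(k_j+1)-q_j(k_j)) \le 0$, which is exactly the paper's comparison of $g(\bx,v)-g(\bx+\bch_l,v)$ against $g(\bx+\bch_j,v)-g(\bx+\bch_l+\bch_j,v)$. The only difference is that where you invoke Topkis (or sketch a telescoping induction) for the equivalence between the pairwise condition and lattice submodularity, the paper proves this characterization from scratch as its Lemma~\ref{lem: lattice submodular}; this is a presentational rather than substantive difference.
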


Leveraging on the monotone submodular property, we design a Budget Effective Greedy algorithm (Alg.~\ref{alg: layer traversal greedy}).
The core of Alg.~\ref{alg: layer traversal greedy} is the BEG procedure.
Let $\delta(i,b, \bk)$ be the \textit{per-unit marginal gain} $(r_{\cG,\bm{\alpha},\bsig}( \bm{k}+b\bm{\chi}_i) - r_{\cG,\bm{\alpha},\bsig}( 
\bm{k}))/b$ for allocating $b$ more budgets to layer $i$, which equals to
\begin{equation}\textstyle\label{eq: marginal gain delta}
 \sum_{u\in \cV}\sigma_u \Pi_{j\neq i}(1-P_{j,u}(k_j))(P_{i,u}(k_i+b)-P_{i,u}(k_i))/b.   
\end{equation}
BEG procedure consists of two parts and maintains a queue $\cQ$, where any pair $(i,b) \in \cQ$ represents a tentative plan of allocating $b$ more budgets to layer $i$.
The first part is built around the while loop (line~\ref{line: alg3 while begin}-\ref{line: remove}), where each iteration greedily selects the $(i,b)$ pair in $\cQ$ such that the \textit{per-unit marginal gain} $\delta(i,b, \bk)$ is maximized.
The second part is a for loop (line~\ref{line: alg3 for begin}-\ref{line: alg3 for end}), where in the $i$-th round we attempt to allocate all $c_i$ budgets to layer $i$ and replace the current best budget allocation if we have a larger reward.
\begin{restatable}{theorem}{thmOverSol}\label{thm: overlapping solution}
    Algorithm~\ref{alg: layer traversal greedy} obtains a $(1-e^{-\eta})\approx 0.357$-approximate solution, where $\eta$ is the solution of equation $e^\eta=2-\eta$, to the overlapping MuLaNE problem.
\end{restatable}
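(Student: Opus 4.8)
The plan is to combine a density-based (per-unit marginal gain) greedy analysis for the while loop with the single-layer enumeration in the for-loop, and then to balance the two resulting guarantees to extract the constant $1-e^{-\eta}$. Throughout write $r \defeq r_{\cG,\bm{\alpha},\bsig}$ and $\mathrm{OPT}\defeq r(\bk^*)$, and recall from Lemma~\ref{thm: lattice submodular} that $r$ is monotone and submodular on $\bbZ^m_{\ge 0}$. The central step I would establish first is a density lemma: at any feasible $\bk$ reached during the while loop, the pair $(i,b)\in\cQ$ that BEG selects has per-unit marginal gain at least $(\mathrm{OPT}-r(\bk))/B$. To prove it, compare $\bk$ with $\bk^*$. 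Monotonicity gives $r(\bk\vee\bk^*)\ge r(\bk^*)=\mathrm{OPT}$, and applying submodularity coordinate by coordinate gives
\[
\mathrm{OPT}-r(\bk)\le r(\bk\vee\bk^*)-r(\bk)\le \sum_{i:\,k_i^*>k_i}\big(r(\bk+(k_i^*-k_i)\bch_i)-r(\bk)\big).
\]
The increments raising $\bk$ to $\bk\vee\bk^*$ consume total budget $\sum_i (k_i^*-k_i)^+\le\sum_i k_i^*\le B$, so by averaging there is a layer $\ell$ whose single-coordinate move $(\ell,\,k_\ell^*-k_\ell)$ has per-unit gain at least $(\mathrm{OPT}-r(\bk))/B$. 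Since $k_\ell^*\le c_\ell$, this move lies in $\cQ$, and as BEG picks the pair of maximum per-unit gain, its choice inherits the same lower bound.

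Next I would convert the density lemma into an accumulation bound. If the while loop adds moves consuming budgets $b_1,b_2,\dots$ and reaches value $r_j$ after the $j$-th move, the lemma yields $\mathrm{OPT}-r_j\le(\mathrm{OPT}-r_{j-1})(1-b_j/B)$; telescoping and using $1-x\le e^{-x}$ gives $r(\bk^G)\ge(1-e^{-B'/B})\,\mathrm{OPT}$, where $\bk^G$ is the greedy state at the first moment it cannot afford its top move and $B'\le B$ is the budget spent so far. Let $(i_0,b_0)$ be that rejected top move, so $b_0>B-B'$, and by the density lemma applied at $\bk^G$ its per-unit gain $\rho$ satisfies $\rho\ge(\mathrm{OPT}-r(\bk^G))/B$.

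I would then lower-bound the for-loop's contribution through this same rejected move. Allocating all $c_{i_0}$ budget to layer $i_0$ is one of the candidates the for-loop examines, and applying the submodular inequality to $\bk^G$ and $c_{i_0}\bch_{i_0}$ together with monotonicity (using $k^G_{i_0}+b_0\le c_{i_0}$) gives
\[
r(c_{i_0}\bch_{i_0})\ge r(\bk^G+b_0\bch_{i_0})-r(\bk^G)=\rho\,b_0\ge(\mathrm{OPT}-r(\bk^G))\,(1-B'/B).
\]
Hence the value $V_S$ retained by the for-loop satisfies $V_S\ge(\mathrm{OPT}-r(\bk^G))(1-B'/B)$, while the greedy value satisfies $r(\bk^G)\ge(1-e^{-B'/B})\mathrm{OPT}$. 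Writing $x=B'/B$ and $v=r(\bk^G)/\mathrm{OPT}$, the algorithm's output is at least $\max\{v,\,(1-v)(1-x)\}\cdot\mathrm{OPT}$ subject to $v\ge1-e^{-x}$.

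The final step is a min–max optimization: the adversary minimizes $\max\{v,(1-v)(1-x)\}$ when the constraint is tight, $v=1-e^{-x}$, and the two terms balance, $v=(1-v)(1-x)$; eliminating $v$ yields exactly $e^{x}=2-x$, i.e.\ $x=\eta$, and hence the ratio $v=1-e^{-\eta}\approx0.357$. I expect the main obstacle to be the bookkeeping in the density lemma — guaranteeing that the averaging argument produces a move genuinely present in $\cQ$ and that, as $\bk$ and the per-unit gains are recomputed across iterations, the selected move's density never drops below $(\mathrm{OPT}-r(\bk))/B$ — together with the care needed so that the single rejected move $(i_0,b_0)$ simultaneously drives both the greedy bound and the for-loop bound; the concluding optimization is then routine. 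The degenerate case in which the greedy exhausts the full budget $B$ without a rejection (yielding the stronger $1-1/e$ ratio) should be handled separately.
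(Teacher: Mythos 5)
Your proposal is correct and is essentially the paper's own argument: your density lemma is the paper's Lemma~\ref{lem: greedy 1} (derived, as in the paper, from the lattice-submodular decomposition of Lemma~\ref{lem: property2} plus the greedy selection rule), your telescoped accumulation bound corresponds to Lemma~\ref{lem: greedy 2} and Case~1, your rejected-move/for-loop bound is exactly Case~2, and the final balancing $e^{\eta}=2-\eta$ is the paper's combination of the two cases. The only cosmetic deviations are that you apply $1-x\le e^{-x}$ factor-by-factor where the paper invokes Jensen's inequality, and you bound the rejected move's gain by $r(c_{i_0}\bch_{i_0})$ via the queue invariant $k^G_{i_0}+b_0\le c_{i_0}$, whereas the paper compares against $r(k^*_{i^s}\bch_{i^s})\le\max_i r(c_i\bch_i)$ after a WLOG step assuming the rejected pair moves toward the optimum.
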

Line 1 uses $O(m\norm{\bc}_{\infty}n_{max}^3)$ time to pre-calculate visiting probabilities based on Eq.~(\ref{eq: simple form}), 
where $n_{max}=\max_{i}|\cV_i|$.
In the BEG procedure, the while loop contains $B$ iterations, the size of queue $\cQ$ is $O(m\norm{\bc}_{\infty})$, and line~\ref{line: largest margin} uses $O(n_{max})$ to calculate $\delta(i,b,\bk)$ by proper pre-computation and update\OnlyInFull{ (see Appendix~\ref{appendix: efficient eval})}, 
thus the time complexity of Algorithm~\ref{alg: layer traversal greedy} is $O(B\norm{\bc}_{\infty}mn_{max} + m\norm{\bc}_{\infty}n_{max}^3)$.

\noindent\textbf{Remark 1.} The idea of combining the greedy algorithm with enumerating solutions on one layer is also adopted in previous works~\cite{khuller1999budgeted, alon2012optimizing}, but
they only give a $\frac{1}{2}(1-e^{-1}) \approx 0.316$-approximation analysis. 
In this paper, we provide a novel analysis with a better $(1-e^{-\eta})\approx 0.357$-approximation, \OnlyInFull{see Appendix~\ref{appendix: offline proof arbitrary distribution} for details}.

\noindent\textbf{Remark 2.} Another algorithm~\cite{alon2012optimizing} with a better approximation ratio is to use partial enumeration techniques (i.e., BEGE), which can achieve $(1-1/e)$ approximation ratio.
This is the best possible solution in polynomial time unless P=NP.
But the time complexity is prohibitively high in $O(B^4m^4\norm{\bc}_{\infty}n_{max}+ m\norm{\bc}_{\infty}n_{max}^3)$.
\OnlyInFull{The algorithm and the analysis are provided in the Appendix~\ref{appendix: enum begreedy analysis}.}

\begin{algorithm}[t]
	\caption{Budget Effective Greedy (BEG) Algorithm for the Overlapping MuLaNE}\label{alg: layer traversal greedy}
	\resizebox{.97\columnwidth}{!}{
\begin{minipage}{\columnwidth}
	\begin{algorithmic}[1]
	    \INPUT Network $\cG$, starting distributions $\bm{\alpha}$, node weights $\bsig$, budget $B$, constraints $\bc$.
	    \OUTPUT Budget allocation $\bk$.
	    \STATE Compute visiting probabilities $(P_{i,u}(b))_{i\in[m], u\in\cV, b \in [c_i]}$ according to Eq.~(\ref{eq: simple form}). \label{line:computePiub}
	    \STATE $\bk \leftarrow$ BEG($(P_{i,u}(b))_{i\in[m], u\in\cV, b \in [c_i]}$,$\bsig$, $B$, $\bc$). \label{line:return}
	    \algrule
		\FUNCTION {BEG($(P_{i,u}(b))_{i\in[m], u\in\cV, b \in [c_i]}$, $\bsig$, $B$, $\bc$)}
		\STATE Let $\bm{k}\defeq(k_1, ..., k_m) \leftarrow \bm{0}$, $K\leftarrow B$. \label{line:test} 
		\STATE Let $\cQ \leftarrow \{(i,b_i) \, | \, i \in [m], 1\le b_i \le  c_i\}$.
		\WHILE {$K > 0$ and $\cQ \neq \emptyset$ \label{line: alg3 while begin}}
		    \STATE $(i^*,b^*) \leftarrow \argmax_{(i,b)\in \cQ} \delta(i,b,\bk)/b$ \label{line: largest margin} \algorithmiccomment{Eq.~(\ref{eq: marginal gain delta})}
		    \STATE $k_{i^*} \leftarrow k_{i^*} + b^*$, $K \leftarrow K - b^*$.
		    \STATE Modify all pairs $(i,b) \in \cQ$ to $(i, b - b^*)$.
		    \STATE Remove all paris $(i,b) \in \cQ$ such that $b \le 0$.\label{line: remove}
		\ENDWHILE
		\FOR{$i \in [m]$} \label{line: alg3 for begin}
		\STATE \textbf{if} $r_{\cG,\bm{\alpha},\bsig}( c_i\bm{\chi}_i) > r_{\cG,\bm{\alpha},\bsig}( \bm{k})$, \textbf{then} $\bk \leftarrow c_i \bch_i.$ \label{line: alg3 for end}
		\ENDFOR
		\STATE \textbf{return} $\bk\defeq(k_1, ..., k_m)$.
		\ENDFUNCTION
		\end{algorithmic}
		\end{minipage}}
\end{algorithm}

\noindent\textbf{{Starting from the stationary distribution.}}
We also consider the special case where each random explorer $W_i$ starts from the stationary distribution $\bm{\pi_i}$
	with $\bm{\pi_i}^{\top} \bP = \bm{\pi_i}^{\top}$.
In this case, we have the following stronger DR-submodularity.

\begin{restatable}{lemma}{thmDRSubmodular}
	For any network $\cG$, stationary distributions $\bm{\pi}$ and node weights $\bsig$, function $r_{\cG,\bm{\pi},\bsig,}(\cdot):\bbZ^m_{\ge 0} \rightarrow \R$ is monotone and DR-submodular.
\end{restatable}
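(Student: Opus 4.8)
The plan is to derive monotonicity directly and then reduce the DR-submodular inequality to a single-node, single-step statement combining the monotonicity of $P_{j,u}(\cdot)$ from Lemma~\ref{lem: non-decreasing} with the discrete concavity of $P_{i,u}(\cdot)$ under the stationary start from Lemma~\ref{lem: stationary}. Monotonicity is immediate: increasing any coordinate $k_i$ only increases $P_{i,u}(k_i)$ by Lemma~\ref{lem: non-decreasing}, hence only decreases each factor $1-P_{i,u}(k_i)\in[0,1]$ and therefore the product $\prod_{i\in[m]}(1-P_{i,u}(k_i))$; since $\sigma_u\ge 0$, the reward in Eq.~\eqref{eq: over reward} is non-decreasing. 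No stationarity is needed for this part.

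For DR-submodularity I first isolate the single-step marginal gain. Adding one unit to layer $i$ at allocation $\bk$ changes only the $i$-th factor of each summand, so
\begin{equation}\label{eq:dr-marginal}\textstyle
r_{\cG,\bm{\pi},\bsig}(\bk+\bch_i)-r_{\cG,\bm{\pi},\bsig}(\bk)=\sum_{u\in\cV}\sigma_u\, g_{i,u}(k_i+1)\prod_{j\neq i}\big(1-P_{j,u}(k_j)\big),
\end{equation}
where $g_{i,u}(k_i+1)=P_{i,u}(k_i+1)-P_{i,u}(k_i)$ as in Eq.~\eqref{eq: marginal gain}. It then suffices to prove, for any $\bx\le\by$ and each fixed $u$, the term-wise inequality $g_{i,u}(y_i+1)\prod_{j\neq i}(1-P_{j,u}(y_j))\le g_{i,u}(x_i+1)\prod_{j\neq i}(1-P_{j,u}(x_j))$, since multiplying by $\sigma_u\ge 0$ and summing over $u$ gives exactly $r_{\cG,\bm{\pi},\bsig}(\by+\bch_i)-r_{\cG,\bm{\pi},\bsig}(\by)\le r_{\cG,\bm{\pi},\bsig}(\bx+\bch_i)-r_{\cG,\bm{\pi},\bsig}(\bx)$.

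This term-wise inequality follows by multiplying two ordered non-negative inequalities. First, $x_i+1\le y_i+1$ together with Lemma~\ref{lem: stationary} (applicable since $\bm{\alpha}_i=\bm{\pi_i}$) yields $0\le g_{i,u}(y_i+1)\le g_{i,u}(x_i+1)$, which is precisely the non-increasing marginal gain, i.e. the discrete concavity of $P_{i,u}(\cdot)$. Second, for every $j\neq i$ we have $x_j\le y_j$, so Lemma~\ref{lem: non-decreasing} gives $0\le 1-P_{j,u}(y_j)\le 1-P_{j,u}(x_j)$, and taking the product over $j\neq i$ of these factors in $[0,1]$ preserves the order, $0\le\prod_{j\neq i}(1-P_{j,u}(y_j))\le\prod_{j\neq i}(1-P_{j,u}(x_j))$. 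Since both pairs of quantities are non-negative, their products are ordered the same way, which is the claimed inequality.

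The main conceptual content is already packaged into Lemma~\ref{lem: stationary}: it is exactly the diminishing-return property of the visiting probabilities under a stationary start that upgrades the plain submodularity of Lemma~\ref{thm: lattice submodular} to DR-submodularity. Once Eq.~\eqref{eq:dr-marginal} isolates the modified coordinate, the remaining step is the routine observation that a product of ordered non-negative factors stays ordered; the only thing to verify is that every factor lies in $[0,1]$, which holds because each $P_{i,u}(k_i)$ is a genuine probability.
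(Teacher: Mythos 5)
Your proof is correct, but it takes a genuinely different route from the paper's. The paper first establishes a local characterization of DR-submodularity (Lemma~\ref{lem: DR submodular}): it suffices to verify $f(\bx+\bch_i)-f(\bx) \ge f(\bx+\bch_j+\bch_i)-f(\bx+\bch_j)$ for all $\bx$ and all pairs $i,j$, with the chaining from this one-step statement to the global inequality done once inside that lemma. It then splits the verification into two cases: $i\neq j$, which is exactly the exchange inequality already supplied by lattice submodularity (Lemma~\ref{thm: lattice submodular}), and $i=j$, the coordinate-wise concavity $r_{\cG,\bm{\pi},\bsig}(\bx+2\bch_j)-r_{\cG,\bm{\pi},\bsig}(\bx+\bch_j)\le r_{\cG,\bm{\pi},\bsig}(\bx+\bch_j)-r_{\cG,\bm{\pi},\bsig}(\bx)$, which reduces to $g_{j,v}(x_j+2)-g_{j,v}(x_j+1)\le 0$, i.e.\ Lemma~\ref{lem: stationary}. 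You instead prove the global defining inequality directly for arbitrary $\bx\le\by$: you write the marginal gain as $\sum_{u\in\cV} \sigma_u\, g_{i,u}(k_i+1)\prod_{j\neq i}\left(1-P_{j,u}(k_j)\right)$ and compare it term by term, multiplying the two ordered chains of non-negative factors $0\le g_{i,u}(y_i+1)\le g_{i,u}(x_i+1)$ (obtained by telescoping Lemma~\ref{lem: stationary}) and $0\le \prod_{j\neq i}\left(1-P_{j,u}(y_j)\right)\le\prod_{j\neq i}\left(1-P_{j,u}(x_j)\right)$ (from Lemma~\ref{lem: non-decreasing} and the fact that each factor lies in $[0,1]$). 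Your argument is self-contained --- it needs neither the characterization lemma nor the previously proven submodularity --- and it makes transparent exactly where stationarity enters (the ordering of the $g_{i,u}$ factors) versus where plain monotonicity of $P_{j,u}(\cdot)$ suffices (the ordering of the product factors). The paper's argument is more modular: it reuses the submodularity proof wholesale and isolates the genuinely new content of the stationary case in the single diagonal inequality $i=j$, at the cost of routing through an if-and-only-if characterization. Both proofs hinge on the same key fact, Lemma~\ref{lem: stationary}.
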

Since the reward function is DR-submodular, 
the BEG procedure can be replaced by 
the simple MG procedure in Alg.~\ref{alg: pure greedy} with a better approximation ratio.
The time complexity is also improved to $O(Bmn_{max}+m\norm{\bc}_{\infty}n_{max}^3)$.
\begin{restatable}{theorem}{thmOverStationarySol}
    Algorithm ~\ref{alg: pure greedy} obtains a $(1-1/e)$-approximate solution to the overlapping
    MulaNE with the stationary starting distributions.
\end{restatable}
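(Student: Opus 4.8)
The plan is to invoke the monotone DR-submodularity of $r_{\cG,\bm{\pi},\bsig}$ established in the preceding lemma and run the classical $(1-1/e)$ greedy analysis for monotone submodular maximization under a cardinality constraint, lifted to the integer lattice. First I recall the MG procedure: it initializes $\bk \leftarrow \bm{0}$ and, over $B$ iterations, increments the coordinate $i$ among those still feasible (i.e.\ $k_i < c_i$) that maximizes the unit marginal gain $r_{\cG,\bm{\pi},\bsig}(\bk + \bch_i) - r_{\cG,\bm{\pi},\bsig}(\bk)$. Write $r(\cdot) \defeq r_{\cG,\bm{\pi},\bsig}(\cdot)$ for brevity, let $\bk^{(t)}$ denote the allocation after $t$ increments, and let $\bk^*$ be an optimal allocation, so $r(\bk^*)$ is the optimum.

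The crux is the following coverage inequality: for any feasible $\bk$, writing $(x)^+ \defeq \max\{x,0\}$,
\begin{equation}\textstyle\label{eq:coverage}
r(\bk \vee \bk^*) - r(\bk) \le \sum_{i \in [m]} (k^*_i - k_i)^+ \big(r(\bk + \bch_i) - r(\bk)\big).
\end{equation}
To prove \eqref{eq:coverage} I would pass from $\bk$ to $\bk \vee \bk^*$ one unit at a time, adding the $(k^*_i - k_i)$ required units to each coordinate $i \in \text{supp}^+(\bk^* - \bk)$ in some fixed order, so that the left-hand side becomes a telescoping sum of unit marginal gains taken from intermediate lattice points $\bz \ge \bk$. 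DR-submodularity applied with $\bk \le \bz$ bounds each such gain, $r(\bz + \bch_i) - r(\bz) \le r(\bk + \bch_i) - r(\bk)$, and summing over all inserted units yields exactly \eqref{eq:coverage}. This is precisely where DR-submodularity is essential: plain lattice submodularity would not license bounding every intermediate unit gain by the first-unit gain from $\bk$.

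Next I combine \eqref{eq:coverage} with monotonicity and the greedy selection rule. By monotonicity $r(\bk^*) \le r(\bk^{(t)} \vee \bk^*)$, so \eqref{eq:coverage} at $\bk = \bk^{(t)}$ gives $r(\bk^*) - r(\bk^{(t)}) \le \sum_i (k^*_i - k^{(t)}_i)^+\big(r(\bk^{(t)} + \bch_i) - r(\bk^{(t)})\big)$. Any coordinate with $(k^*_i - k^{(t)}_i)^+ > 0$ satisfies $k^{(t)}_i < k^*_i \le c_i$, hence is feasible for the greedy step, so its unit gain is at most the realized greedy gain $\Delta_t \defeq r(\bk^{(t+1)}) - r(\bk^{(t)})$. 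Since $\sum_i (k^*_i - k^{(t)}_i)^+ \le \sum_i k^*_i \le B$, this yields $r(\bk^*) - r(\bk^{(t)}) \le B\,\Delta_t$, equivalently the standard recursion $r(\bk^*) - r(\bk^{(t+1)}) \le (1 - 1/B)\big(r(\bk^*) - r(\bk^{(t)})\big)$. Unrolling over the $B$ greedy steps and using $r(\bk^{(0)}) = r(\bm{0}) = 0$ gives $r(\bk^*) - r(\bk^{(B)}) \le (1 - 1/B)^B r(\bk^*) \le e^{-1} r(\bk^*)$, i.e.\ $r(\bk^{(B)}) \ge (1 - 1/e)\,r(\bk^*)$.

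Finally I would dispose of the edge case: if $\sum_i c_i < B$, greedy may saturate every layer ($\bk = \bc$) before exhausting the budget, but then monotonicity makes $\bc$ optimal over the entire feasible box, so the ratio holds trivially. I expect the main obstacle to be making the telescoping decomposition behind \eqref{eq:coverage} fully rigorous on the lattice—choosing the intermediate points and invoking DR-submodularity at each unit step—whereas the greedy recursion itself is routine once \eqref{eq:coverage} is in hand.
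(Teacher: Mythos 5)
Your proof is correct, but it takes a genuinely different route from the paper's. The paper does not re-run the greedy analysis for MG at all: it observes that under DR-submodularity the per-unit marginal gain $\delta(i,b,\bk)/b$ is always maximized at block size $b^*=1$, so Alg.~\ref{alg: pure greedy} (MG) produces exactly the same allocation as Alg.~\ref{alg: layer traversal greedy} (BEG); since every step then spends one unit, the greedy never stalls ($s=B+1$, full budget spent), and the general BEG machinery already developed for the arbitrary-distribution case --- Lemma~\ref{lem: greedy 1} and Lemma~\ref{lem: greedy 2}, which rest on the \emph{lattice}-submodularity inequality of Lemma~\ref{lem: property2} (Soma et al.) plus a Jensen's-inequality computation --- specializes via inequality~(\ref{ieq: greedy case 1}) with $\eta=1$ to give $(1-1/e)$. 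So in the paper, DR-submodularity is used \emph{only} to identify MG with BEG. You instead give a direct, self-contained lattice analogue of the classical Nemhauser--Wolsey--Fisher argument: your coverage inequality bounds $r(\bk\vee\bk^*)-r(\bk)$ by \emph{unit} gains scaled by $(k^*_i-k_i)^+$, which (as you correctly note) genuinely requires DR-submodularity rather than lattice submodularity, and then the standard $(1-1/B)$ contraction and unrolling finish the job. What each approach buys: the paper's route is two lines given the BEG analysis and explains algorithmically why the cheaper MG oracle suffices in the stationary case; your route avoids both the Soma et al. block-gain lemma and the Jensen step, and it explicitly disposes of the degenerate case $\sum_i c_i < B$ (where greedy saturates at $\bc$, optimal by monotonicity), which the paper's claim ``$s=B+1$'' implicitly assumes away. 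Both arguments are sound.
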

%

\begin{algorithm}[t]
\caption{Myopic Greedy (MG) Algorithm for MuLaNE }\label{alg: pure greedy}
\resizebox{.95\columnwidth}{!}{
\begin{minipage}{\columnwidth}
\begin{algorithmic}[1]
\STATE Same input, output and line~\ref{line:computePiub}-\ref{line:return} as in Alg.~\ref{alg: layer traversal greedy}, except replacing BEG with MG procedure below.
\algrule
\FUNCTION{MG($(P_{i,u}(b))_{i\in[m], u\in\cV, b \in [c_i]}$, $\bsig$, $B$, $\bc$)}
\STATE Let $\bm{k}\defeq(k_1, ..., k_m) \leftarrow \bm{0}$, $K \leftarrow B$.\;
\WHILE{$K  > 0$} 
    \STATE $i^*\leftarrow\argmax_{i \in [m], k_i + 1 \le c_i} \delta(i,1,\bk).$  \algorithmiccomment{Eq.~(\ref{eq: marginal gain delta})}
    \label{line: greedy select}
    \STATE $k_{i^*} \leftarrow k_{i^*} + 1$, $K \leftarrow K - 1$.\;
    \ENDWHILE
    \STATE \textbf{return} {$\bk=(k_1, ..., k_m)$.}
\ENDFUNCTION
\end{algorithmic}
\end{minipage}}
\end{algorithm}

\subsection{Non-overlapping MuLaNE}
For non-overlapping MuLaNE, we are able to achieve the exact optimal solution: for the stationary starting distribution, a slight modification of the greedy algorithm Alg.~\ref{alg: pure greedy} gives the optimal solution, while for an arbitrary starting distribution,
we design a dynamic programming algorithm to compute the optimal solution \OnlyInShort{(see supplementary material)}.
\OnlyInFull{Due to the space constraint, the details are included in Appendix~\ref{appendix: offline non over analysis}.}

%

\section{Online Learning for \mulane}\label{sec: online learning}
In the online setting, we continue to study both overlapping and non-overlapping \mulane{} problem.
However, the network structure $\cG$, the distribution $\bm{\alpha}$ and node weights $\bsig$ \textit{are not known a priori}. 
Instead, the only information revealed to the decision maker includes total budget $B$, the number of layers $m$, the number of the target nodes $|\cV|$ (or an upper bound of it) and the budget constraint $\bc$.

\subsection{Online Algorithm for Overlapping \mulane}

For the unknown network structure and starting distributions, we bypass the transition matrices $\bm{P}_i$ and directly estimate the visiting probabilities $P_{i,u}(b) \in [0,1]$.
This avoids the analysis for how the estimated $\bm{{P}}_i$ affects the performance of online algorithms, which could be unbounded since we consider general graph structures and the reward function given by Eq.~(\ref{eq: over reward}) is highly non-linear in $\bm{{P}}_i$.
Moreover, we can save the matrix calculation by directly using $P_{i,u}(b)$, which is efficient even for large networks.

Specifically, we maintain a set of base arms $\cA = \{(i,u,b)| i \in [m], u \in \cV, b \in [c_i]\}$, where the total number $|\cA| = \sum_{i \in [m]}c_i|\cV|$.
For each base arm $(i,u,b) \in \cA$, we denote $\mu_{i,u,b}$ as the true value of each base arm, i.e., 
$\mu_{i,u,b}=P_{i,u}(b)$.
For the unknown node weights, we maintain the optimistic weight $\bar{\sigma}_u=1$ if $u \in \cV$ has not been visited.
After $u$ is first visited and its true value $\sigma_u$ is revealed, we replace $\bar{\sigma}_u$ with $\sigma_u$.
With a little abuse of the notation, we use $\bm{\mu}$ to denote the unknown intermediate parameters and $r_{\bm{\mu}, \bsig}(\bk)$ to denote the reward $r_{\cG, \bm{\alpha}, \bsig}(\bk)$.

We present our algorithm in Alg.~\ref{alg: over online}, which is an adaptation of the CUCB algorithm for the general
	Combinatorial Multi-arm Bandit (CMAB) framework~\cite{chen2016combinatorial} to our setting.
Notice that we use $\cA$ as defined above in the algorithm, and $\cA$ is defined using $\cV$, which is the set of node ids and 
	should not be known before the learning process starts.
This is not an issue, because at the beginning we can create $|\cV|$ (which is known) placeholders for the node ids, and once a new node is visited,we 
	immediately replace one of the placeholders with the new node id.
Thus $\cV$ appearing in the above definition of $\cA$ is just for notational convenience.

In Alg.~\ref{alg: over online}, we maintain an unbiased estimation of the visiting probability $P_{i,u}(b)$, denoted as $\hat{\mu}_{i,u,b}$.
Let $T_{i,u,b}$ record the number of times arm $(i,u,b)$ is played so far and $\bar{\sigma}_v$ denote the optimistic importance weight.
In each round $t \ge 1$, we compute the confidence radius $\rho_{i,u,b}$ in line~\ref{line: over online radius}, which controls the level of exploration.
The confidence radius is larger when the arm $(i,u,b)$ is not explored often (i.e. $T_{i,u,b}$ is small), and thus motivates more exploration. 
Due to the randomness of the exploration process, the upper condifence bound (UCB) value $\tilde{\mu}_{i,u,b}$ could be decreasing w.r.t $b$, but our offline oracle BEG can only accept non-decreasing UCB values (otherwise the $1-e^{-\eta}$ approximation is not guaranteed).
Therefore, in line~\ref{line: over online validate}, we increase the UCB value $\tilde{\mu}_{i,u,b}$ and set it to be $\max_{j \in [b]} {\tilde{\mu}_{i,u,j}}$. 
This is the adaption of the CUCB algorithm to fix our case, and thus we name our algorithm as CUCB-MAX.
After we apply the $1-e^{-\eta}$ approximate solution $\bk$ given by the BEG oracle (i.e., Alg.~\ref{alg: layer traversal greedy}), we get $m$ trajectories as feedbacks.
In line~\ref{line: update weight}, we can update the unknown weights of visited nodes. In line~\ref{line: over online update}, for base arms $(i,u,b)$ with $b \le k_i$, we update corresponding statistics by the Bernoulli random variable $Y_{i,u,b}\in\{0,1\}$ indicating whether node $u$ is visited by $W_i$ in first $b$ steps.

\begin{algorithm}[t]
	\caption{CUCB-MAX Algorithm for the \mulane}\label{alg: over online}
	\resizebox{.97\columnwidth}{!}{
\begin{minipage}{\columnwidth}
	\begin{algorithmic}[1]
	    \INPUT{Budget $B$, number of layers $m$, number of nodes $|\cV|$, constraints $\bc$, offline oracle BEG.}
		\STATE For each arm $(i,u,b) \in \cA$, $T_{i,u,b} \leftarrow 0$, $\hat{\mu}_{i,u,b} \leftarrow 0$. 
		\STATE {For each node $v \in \cV$, $\bar{\sigma}_v \leftarrow 1$.}
		\FOR {$t=1,2,3,...,T $}
		    \FOR {$(i,u,b) \in \cA$}
		    \STATE $\rho_{i,u,b} \leftarrow \sqrt{3\ln t/(2T_{i,u,b})}$.  \label{line: over online radius}
		    \STATE $\tilde{\mu}_{i,u,b} \leftarrow \min \{\hat{\mu}_{i,u,b} + \rho_{i,u,b}, 1\}$. 
		    \label{line: over online ucb}
            \ENDFOR
        \STATE For $(i,u,b)\in \cA$, $\bar{\mu}_{i,u,b} \leftarrow \max_{j \in [b]} {\tilde{\mu}_{i,u,j}}$.		
		\label{line: over online validate}\label{alg: valid}
		\STATE $\bm{k} \leftarrow$ BEG($(\bar{\mu}_{i,u,b})_{(i,u,b)\in\cA}$, $(\bar{\sigma}_v)_{v \in \cV}$, $B$, $\bc$).
		\label{line: over online oracle}
		\STATE Apply budget allocation $\bk$, which gives trajectories $\bm{X}\defeq(X_{i,1}, ..., X_{i,k_i})_{i\in [m]}$ as feedbacks. \label{line: over online feedback}
		\STATE { For any visited node $v \in \bigcup_{i\in[m]}\{X_{i,1}, ..., X_{i,k_{i}}\}$, receive its node weight $\sigma_v$ and set $\bar{\sigma}_v \leftarrow \sigma_v$. }\label{line: update weight}
		\STATE For any $(i,u,b) \in \tau \defeq \{(i,u,b) \in \cA \vert \, b \le k_i\}$, \\$Y_{i,u,b} \leftarrow 1$ if $u \in \{X_{i,1}, ..., X_{i, b}\}$ and $0$ otherwise.\label{line: over bernoulli rv}
		\STATE For $(i,u,b) \in \tau$, update $T_{i,u,b}$ and $\hat{\mu}_{i,u,b}$:\\ 
		$T_{i,u,b} \leftarrow T_{i,u,b} + 1$,  $\hat{\mu}_{i,u,b} \leftarrow \hat{\mu}_{i,u,b} + (Y_{i,u,b} - \hat{\mu}_{i,u,b})/T_{i,u,b}.$\label{line: over online update}
		\ENDFOR
		\end{algorithmic}
    		\end{minipage}}
\end{algorithm}

\noindent\textbf{{Regret Analysis.}}
We define the reward gap $\Delta_{\bm{k}}{=}\max(0, \xi r_{\bm{\mu}, \bsig}(\bm{k^*}) - r_{\bm{\mu}, \bsig}(\bm{k}))$ for all feasible action $\bm{k}$ satisfying $\sum_{i=1}^{m}k_i = B$, $0 \le k_i \le c_i$, where $\bk^*$ is the optimal solution for parameters $\bm{\mu}$, $\bsig$ and
	$\xi=1-e^{-\eta}$ is the approximation ratio of the $(\xi,1)$-approximate offline oracle. For each base arm $(i,u,b)$, 
we define  $\Delta_{\min}^{i,u,b}{=}\min_{\Delta_{\bm{k}}>0, k_i = b}\Delta_{\bm{k}}$ and $\Delta_{\max}^{i,u,b}{=}\max_{\Delta_{\bm{k}}>0, k_i=b}\Delta_{\bm{k}}$. 
As a convention, if there is no action $\bm{k}$ with $k_i=b$ such that $\Delta_{\bm{k}} {>} 0$, we define $\Delta_{\min}^{i,u,b}{=}\infty$ and $\Delta_{\max}^{i,u,b}{=}0$. 
Let $\Delta_{\min} {=} \min_{(i,u,b) \in \cA}\Delta_{\min}^{i,u,b}$ and $\Delta_{\max} {=} \max_{(i,u,b) \in \cA}\Delta_{\max}^{i,u,b}$.
The following theorem summarizes the regret bound for Alg.~\ref{alg: over online}.
\vspace{-0pt}
\begin{restatable}{theorem}{thmOver} \label{thm: over regret}
Algorithm~\ref{alg: over online} has the following distribution-dependent $(1-e^{-\eta}, 1)$ approximation regret,
\begin{equation*}\textstyle
Reg_{\bm{\mu}, \bsig}(T) \le \sum_{(i,u,b) \in \cA}\frac{108m|\cV| \ln T}{\Delta_{\min}^{i,u,b}} + 2|\cA| + \frac{\pi^2}{3}|\cA|\Delta_{\max}.
\end{equation*}
\end{restatable}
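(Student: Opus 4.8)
The plan is to cast Algorithm~\ref{alg: over online} into the probabilistically-triggered CMAB framework of \citet{chen2016combinatorial} and adapt its analysis to the two nonstandard features of our setting: the max-adjustment in line~\ref{line: over online validate} and the optimistically-initialized, first-visit-learned node weights $\bar{\sigma}_u$. Two structural facts drive the argument. First, \emph{monotonicity}: by Eq.~(\ref{eq: over reward}), $r_{\bm{\mu},\bsig}(\bk)$ is coordinate-wise non-decreasing in every $\mu_{i,u,b}$ and every $\sigma_u$. Second, \emph{$1$-norm bounded smoothness with constant $1$}: for fixed $\bk$ the reward depends only on the $m|\cV|$ reward-relevant values $\{P_{i,u}(k_i)\}_{i\in[m],u\in\cV}$, and since $\partial r/\partial P_{j,u}(k_j)=\sigma_u\prod_{i\ne j}(1-P_{i,u}(k_i))\le 1$, we have $|r_{\bar{\bm{\mu}},\bsig}(\bk)-r_{\bm{\mu},\bsig}(\bk)|\le\sum_{i\in[m],u\in\cV}|\bar{\mu}_{i,u,k_i}-\mu_{i,u,k_i}|$.

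Next I would establish optimism on the clean event $\cN_t=\{|\hat{\mu}_{i,u,b}-\mu_{i,u,b}|\le\rho_{i,u,b}\text{ for all }(i,u,b)\in\cA\}$, which by Hoeffding's inequality and a union bound fails with probability $O(|\cA|/t^2)$. On $\cN_t$ the raw UCB satisfies $\tilde{\mu}_{i,u,b}\ge\mu_{i,u,b}$; since $\mu_{i,u,b}=P_{i,u}(b)$ is non-decreasing in $b$ (Lemma~\ref{lem: non-decreasing}), the adjusted value dominates the truth, $\bar{\mu}_{i,u,b}=\max_{j\le b}\tilde{\mu}_{i,u,j}\ge\tilde{\mu}_{i,u,b}\ge\mu_{i,u,b}$, and together with $\bar{\sigma}_u\ge\sigma_u$ this makes $(\bar{\bm{\mu}},\bar{\bsig})$ an optimistic instance. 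The oracle guarantee and monotonicity then give $r_{\bar{\bm{\mu}},\bar{\bsig}}(\bk_t)\ge\xi\,r_{\bar{\bm{\mu}},\bar{\bsig}}(\bk^*)\ge\xi\,r_{\bm{\mu},\bsig}(\bk^*)$, so whenever $\bk_t$ is suboptimal,
\begin{equation*}\textstyle
\Delta_{\bk_t}\le r_{\bar{\bm{\mu}},\bar{\bsig}}(\bk_t)-r_{\bm{\mu},\bsig}(\bk_t)\le\sum_{i,u}(\bar{\mu}_{i,u,k_{t,i}}-\mu_{i,u,k_{t,i}})+\sum_{u}(\bar{\sigma}_u-\sigma_u)\,\bar{p}_u(\bk_t),
\end{equation*}
where $\bar{p}_u(\bk)=1-\prod_i(1-\bar{\mu}_{i,u,k_i})$. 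The key point for the first (probability-error) sum is that the max-adjustment inflates an estimate by at most the confidence radius of a \emph{triggered} arm: with $j^*\le k_{t,i}$ the maximizer, $\bar{\mu}_{i,u,k_{t,i}}-\mu_{i,u,k_{t,i}}\le\tilde{\mu}_{i,u,j^*}-\mu_{i,u,j^*}\le 2\rho_{i,u,j^*}$ on $\cN_t$, using $\mu_{i,u,j^*}\le\mu_{i,u,k_{t,i}}$ once more by monotonicity of $P_{i,u}(\cdot)$; each such arm $(i,u,j^*)$ lies in the triggered set $\tau$ of round $t$, so its counter is incremented.

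The second (weight-error) sum is treated apart from the gap-dependent terms. Splitting $\bar{p}_u=p_u+(\bar{p}_u-p_u)$, the second piece folds into the probability-error bound, while the first piece contributes only for not-yet-visited nodes, and node $u$ becomes visited in round $t$ with true probability $p_u(\bk_t)$. A first-passage/telescoping argument then gives $\E\big[\sum_t p_u(\bk_t)\,\bOne[u\text{ unvisited before }t]\big]\le 1$ per node, so the cumulative weight error is $O(|\cV|)$ and is absorbed into the additive $2|\cA|$ term together with the first-trigger contributions. It remains to invoke the refined CMAB counting argument: summing $\Delta_{\bk_t}\le 2\sum_{i,u}\rho_{i,u,j^*}$ over suboptimal rounds, grouping rounds by the charged arm and by the value of its play count, and using $\rho_{i,u,b}=\sqrt{3\ln t/(2T_{i,u,b})}$ with $K=m|\cV|$ reward-relevant arms per action, bounds the cumulative probability error by $\sum_{(i,u,b)\in\cA}\frac{108\,m|\cV|\ln T}{\Delta_{\min}^{i,u,b}}$, while the clean-event failures contribute $\frac{\pi^2}{3}|\cA|\Delta_{\max}$ via $\sum_t O(1/t^2)$.

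I expect the main obstacle to be the weight-error analysis. Unlike the visiting probabilities, node weights are revealed only upon a first visit rather than estimated from repeated Bernoulli feedback, so the standard confidence-interval machinery does not apply directly; controlling this term requires the first-passage argument above to show that optimistic weights are self-correcting and cost only a bounded additive amount, and care is needed to disentangle it from the probability error while keeping every charged arm inside the triggered set $\tau$ so that the counting argument remains valid.
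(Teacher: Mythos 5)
Your overall architecture — clean event via Hoeffding, optimism of $\bar{\mu}_{i,u,b}$ through the max-adjustment, oracle guarantee plus monotonicity, 1-norm smoothness, then the CMAB counting argument — is the same as the paper's proof, and most individual steps are sound. However, there is a genuine gap in the counting step: you charge each per-round error term $2\rho_{i,u,j^*}$ to the \emph{maximizer} arm $(i,u,j^*)$ with $j^*\le k_{t,i}$. An arm $(i,u,j)$ can then be charged in any round with $k_{t,i}\ge j$, so the cutoff argument only goes through if $M_{i,u,j}\le\min\{\Delta_{\bk}:\Delta_{\bk}>0,\ k_i\ge j\}$; this proves the bound only with the weaker gap quantity $\min_{\Delta_{\bk}>0,\,k_i\ge b}\Delta_{\bk}$, not the stated $\Delta_{\min}^{i,u,b}=\min_{\Delta_{\bk}>0,\,k_i=b}\Delta_{\bk}$ (the paper's Remark 3 explicitly flags exactly this distinction as one of its refinements over the standard analysis). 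The missing observation is that the triggered set $\tau=\{(i,u,b):b\le k_{t,i}\}$ is downward closed in $b$, hence $T_{i,u,j,t-1}\ge T_{i,u,b,t-1}$ and so $\rho_{i,u,j,t}\le\rho_{i,u,b,t}$ for all $j\le b$. This lets the paper convert $\tilde{\mu}_{i,u,j^*,t}-\mu_{i,u,j^*}\le 2\rho_{i,u,j^*,t}\le 2\rho_{i,u,b,t}$ with $b=k_{t,i}$ and charge the reward-relevant arm $(i,u,k_{t,i})\in S_t$ itself, which is precisely what justifies the $k_i=b$ definition appearing in the theorem.

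The same observation also collapses your weight-error treatment. Your first-passage argument for $\sum_u(\bar{\sigma}_u-\sigma_u)\bigl(1-\prod_{i}(1-\bar{\mu}_{i,u,k_{t,i}})\bigr)$ is correct as far as it goes ($\E\bigl[\sum_t p_u(\bk_t)\bOne[u\text{ unvisited before }t]\bigr]\le 1$ per node), but it produces an extra additive $O(|\cV|)$ term that is not in the stated bound, and it requires mixing an expectation bound with the per-round, high-probability counting argument: the subtraction trick $\Delta_{\bk_t}\le -M_{\bk_t}+2\sum(\cdots)$ is applied to the \emph{whole} per-round error, so peeling the weight part off into a separate expectation needs an additional case split that you do not supply (you acknowledge this as the main obstacle). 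The paper avoids all of this with one line: if $u$ is unvisited at round $t$, every past observation $Y_{i,u,j}$ was $0$, so $\hat{\mu}_{i,u,j,t-1}=0$ for all $j$, and therefore $\bar{\mu}_{i,u,b,t}=\max_{j\le b}\rho_{i,u,j,t}=\rho_{i,u,b,t}$ (again by downward closedness of $\tau$); hence $|\bar{\sigma}_u-\sigma_u|\,\bar{\mu}_{i,u,b,t}\le\rho_{i,u,b,t}$ folds into the same radius, giving the per-arm bound $3\rho_{i,u,b,t}$ over $S_t$, the $\kappa_T$ counting with the constant $108$, and no extra additive term. With your decomposition, the constants you assert (in particular $108$) are not actually derived and would come out differently.
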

\noindent\textbf{Remark 1.}
Looking at the above distribution dependent bound, we have the $O(\log T)$ approximation regret, which is asymptotically tight.
Coefficient $m|\cV|$ in the leading term corresponds to the number of edges in the complete bipartite coverage graph. Notice that we cannot use the true edge number $\sum_{i \in [m]}{|\cV_i|}$, 
	because the learning algorithm does not know which nodes are contained in each layer, and has to explore all visiting
	possibilities given by the default complete bipartite graph.
The set of base arms $\cA$ has some redundancy due to the correlation between these base arms, and thus it is unclear 
	if the summation over all base arms in the regret bound is tight.
For the non-overlapping case, we further reduce the number of base arms to achieve better regret bounds, but for the overlapping case,
	how to further reduce base arms to achieve a tighter regret bound is a challenging open question left for the future work.

\noindent\textbf{Remark 2.}
The $(1-e^{-\eta}, 1)$ approximate regret is determined by the offline oracle BEG that we plug in Line~\ref{line: over online oracle} and can be replaced by $(1-1/e, 1)$ regret using BEGE or even by the exact regret if the oracle can obtain the optimal budget allocation.
The usage of BEG is a trade-off we make between computational efficiency and learning efficiency and empirically, it performs well as we shall see in Section~\ref{sec: experiments}. 

\noindent\textbf{Remark 3.} 
The full proof of the above theorem is included in the
\OnlyInFull{Appendix~\ref{appendix: over online analysis}}\OnlyInShort{supplementary material}, where we rely on the following properties of $r_{\bm{\mu}, \bsig}(\bk)$ to bound the regret.

\begin{restatable}{property}{condOverMono}(Monotonicity). \label{cond: over mono}
	The reward $r_{\bm{\mu}, \bsig}(\bm{k})$ is monotonically increasing, i.e., 
	for any budget allocation $\bm{k}$, any two vectors $\bm{\mu} {=} (\mu_{i,u,b})_{(i,u,b)\in \cA}$, $\bm{\mu'}{=}(\mu'_{i,u,b})_{(i,u,b) \in \cA}$ and any node weights $\bsig$, $\bsig'$, we have $r_{\bm{\mu}, \bsig}(\bm{k}) \le r_{\bm{\mu'}, \bsig'}(\bm{k})$, if $\mu_{i,u,b} \le \mu'_{i,u,b}$ and $\sigma_v \le \sigma'_v$, $\forall (i,u,b) \in \cA, v \in \cV$.
\end{restatable}
\begin{restatable}{property}{condOverOneNorm}(1-Norm Bounded Smoothness).\label{cond: over 1-norm}
	The reward function $r_{\bm{\mu}, \bsig}(\bm{k})$ satisfies the 1-norm bounded smoothness condition, i.e., 
	for any budget allocation $\bm{k}$, any two vectors $\bm{\mu} {=} (\mu_{i,u,b})_{(i,u,b)\in \cA}$, $\bm{\mu'}{=}(\mu'_{i,u,b})_{(i,u,b) \in \cA}$ and any node weights $\bsig$, $\bsig'$, we have $|r_{\bm{\mu}, \bsig}(\bm{k}) - r_{\bm{\mu'}, \bsig'}(\bm{k})| \le \sum_{i \in [m], u \in \cV, b = k_i}(\sigma_u|\mu_{i,u,b} - \mu'_{i,u, b}|+\abs{\sigma_u-\sigma_u'}\mu'_{i,u,b})$.
\end{restatable}
We emphasize that our algorithm and analysis differ from the original CUCB algorithm~\cite{, chen2016combinatorial} as follows.
First,  we have the additional regret caused by the over-estimated weights for unvisited nodes, i.e., $\abs{\sigma_u-\sigma_u'}\mu'_{i,u,b}$ term in~ property \ref{cond: over 1-norm}.
We carefully bound this term based on the observation that $\mu'_{i,u,b}$ is small and decreasing quickly before $u$ is first visited.
Next, we have to take the max (line~\ref{line: over online validate}) to guarantee the UCB value is monotone w.r.t $b$ since our BEG oracle can only output $(1-e^{-\eta},1)$-approximation with monotone inputs. 
Due to the above operation, $(i,u,b)$'s UCB value depends on the feedback from all arms $(i,u,j)$ for $j \le b$
	(set $\tau$ in line~\ref{line: over bernoulli rv}).
	 So we should update 
	all these arms (line~\ref{line: over online update}) to guarantee that the estimates to all these arms are accurate enough.
Finally, directly following the standard CMAB result would have a larger regret, because arms in $\tau$ are defined as triggered arms, but only arms in $\tau'=\{(i,u,b)\in \cA| k_i=b\}$ affect the rewards.
So we conceptually view arms in $\tau'$
 as triggered arms and use a tighter 1-Norm Bounded Smoothness condition as given above 
	to derive a tighter regret bound.
This improves the coefficient of the leading $\ln T$ term in the distribution dependent regret by a factor of $|\tau|/|\tau'|= O(B/m)$, and the $1/\Delta_{\min}^{i,u,b}$ term is smaller since the original definition would have $\Delta_{\min}^{i,u,b}=\min_{\Delta_{\bm{k}}>0, b \le k_i}\Delta_{\bm{k}}$.

\subsection{Online Algorithm for Non-overlapping Case} \label{sec:cucbmg}
For the non-overlapping case, we set \textit{layer-wise marginal gains} as our base arms.
Concretely, we maintain a set of base arms $\cA = \{(i,b) | i \in [m], b \in [c_i]\}$.
For each base arm $(i,b) \in \cA$., let $\mu_{i,b}=\sum_{u \in \cV}\sigma_u(P_{i,u}(b)-P_{i,u}(b-1))$
	be the true marginal gain of assigning budget $b$ in layer $i$.
We apply the standard CUCB algorithm to this setting 
	and call the resulting algorithm CUCB-MG \OnlyInFull{(Algorithm~\ref{alg: non-over online} in Appendix~\ref{appendix: non over online})}\OnlyInShort{(see supplementary material)}.
Note that in the non-overlapping setting we can solve the offline problem exactly, so we can use the exact offline oracle to
	solve the online problem and achieve an exact regret bound.
This is the major advantage over the overlapping setting where we can only achieve an approximate bound.
Define $\Delta_{\bm{k}}=r_{\bm{\mu}, \bsig}(\bm{k^*}) - r_{\bm{\mu}, \bsig}(\bm{k})$ for all feasible action $\bm{k}$,
and $\Delta_{\min}^{i,b}=\min_{\Delta_{\bm{k}}>0, k_i \ge b}\Delta_{\bm{k}}$, 
CUCB-MG has a $O(\sum_{(i,b) \in \cA}{48B \ln T}/{\Delta_{\min}^{i,b}})$ regret bound.
\section{Experiments}\label{sec: experiments}
\begin{figure}[t]
	\centering
	\begin{subfigure}[b]{0.235\textwidth}
		\centering
		\includegraphics[width=\textwidth]{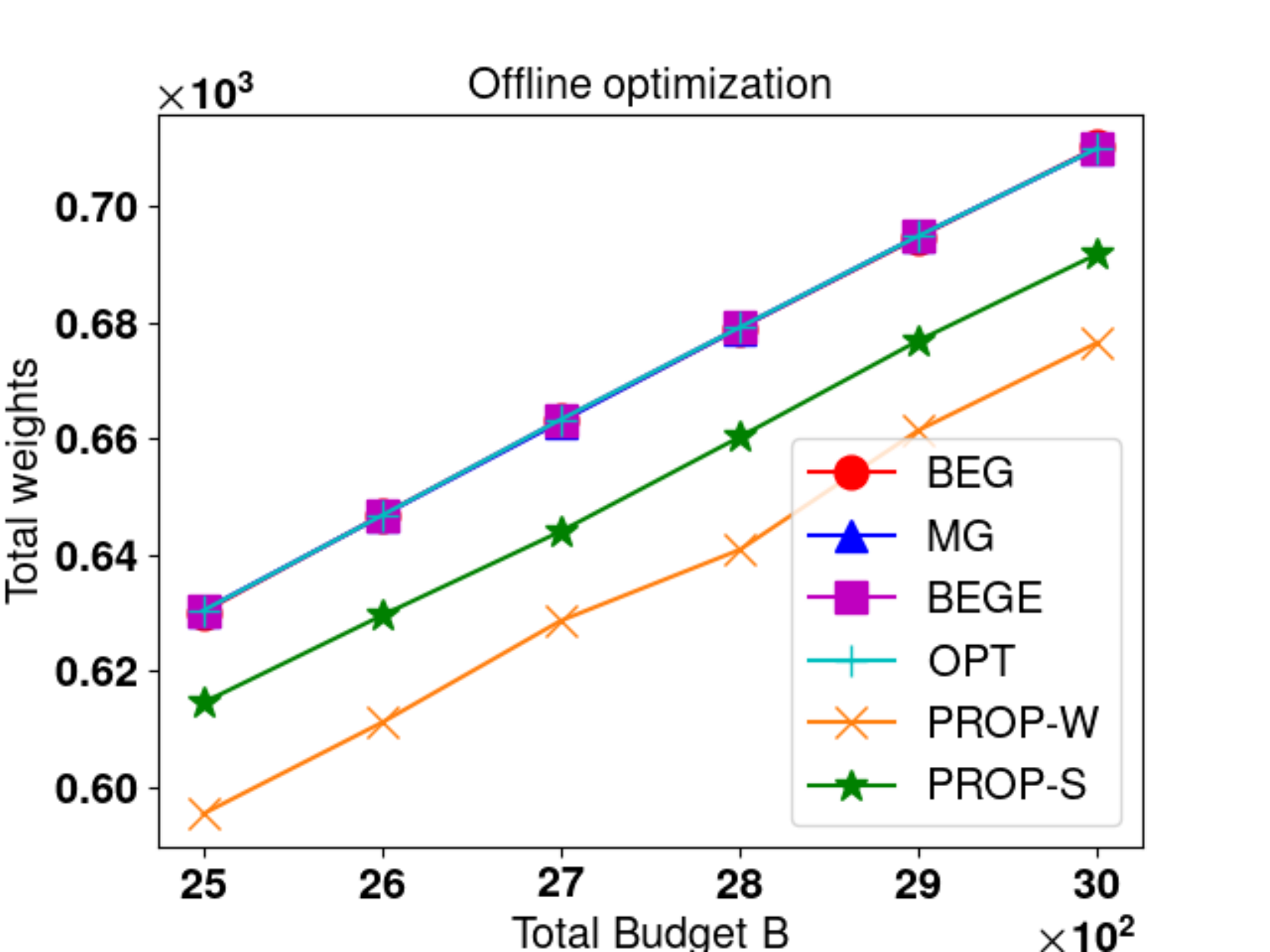}
		\caption{Offline, overlapping.}
		\label{fig: offline over fix}
	\end{subfigure}
	\hfill
	\begin{subfigure}[b]{0.235\textwidth}
		\centering
		\includegraphics[width=\textwidth]{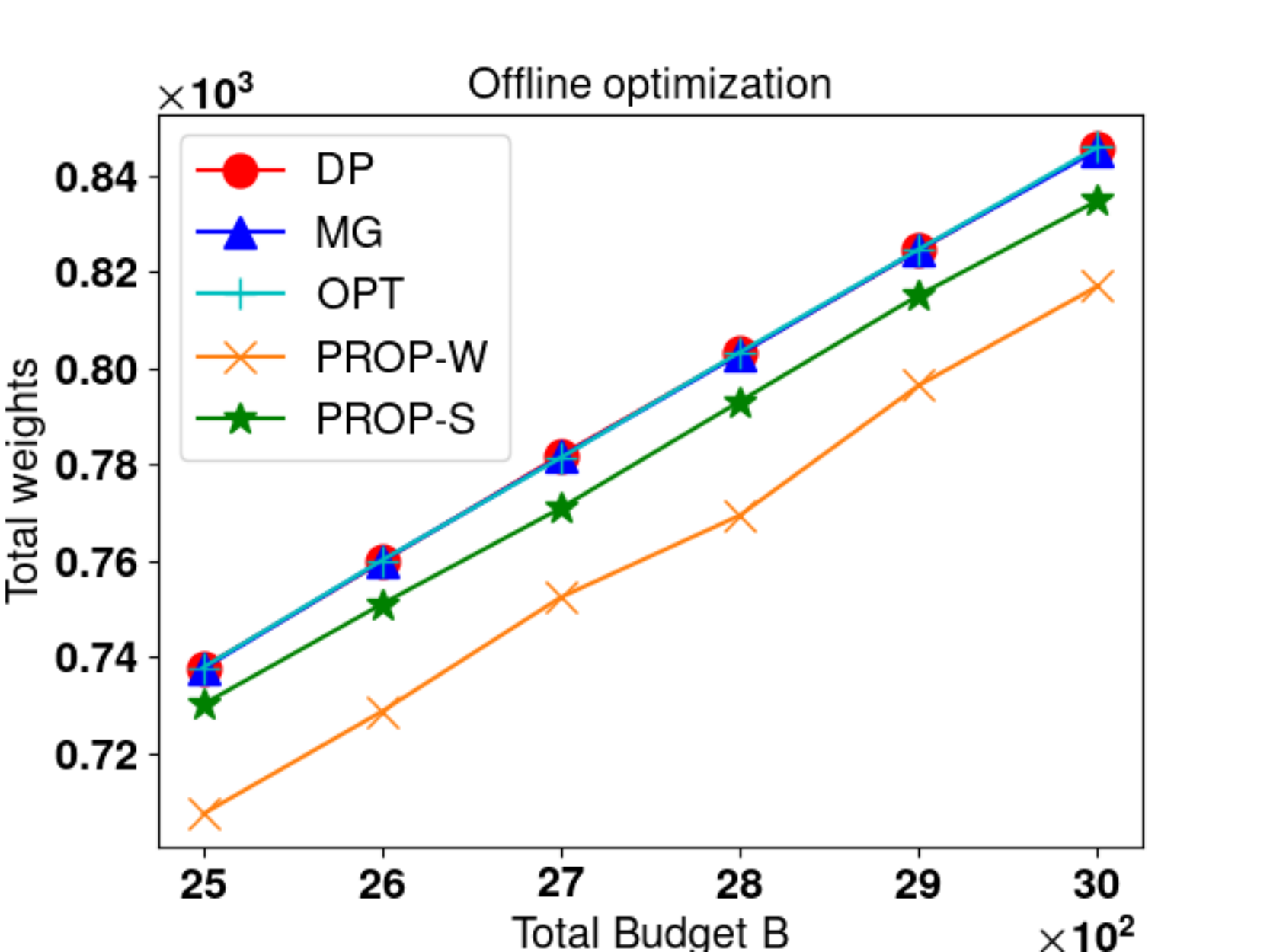}
		\caption{Offline, non-overlapping.}
		\label{fig: offline non over fix}
	\end{subfigure}
	\begin{subfigure}[b]{0.235\textwidth}
		\centering
		\includegraphics[width=\textwidth]{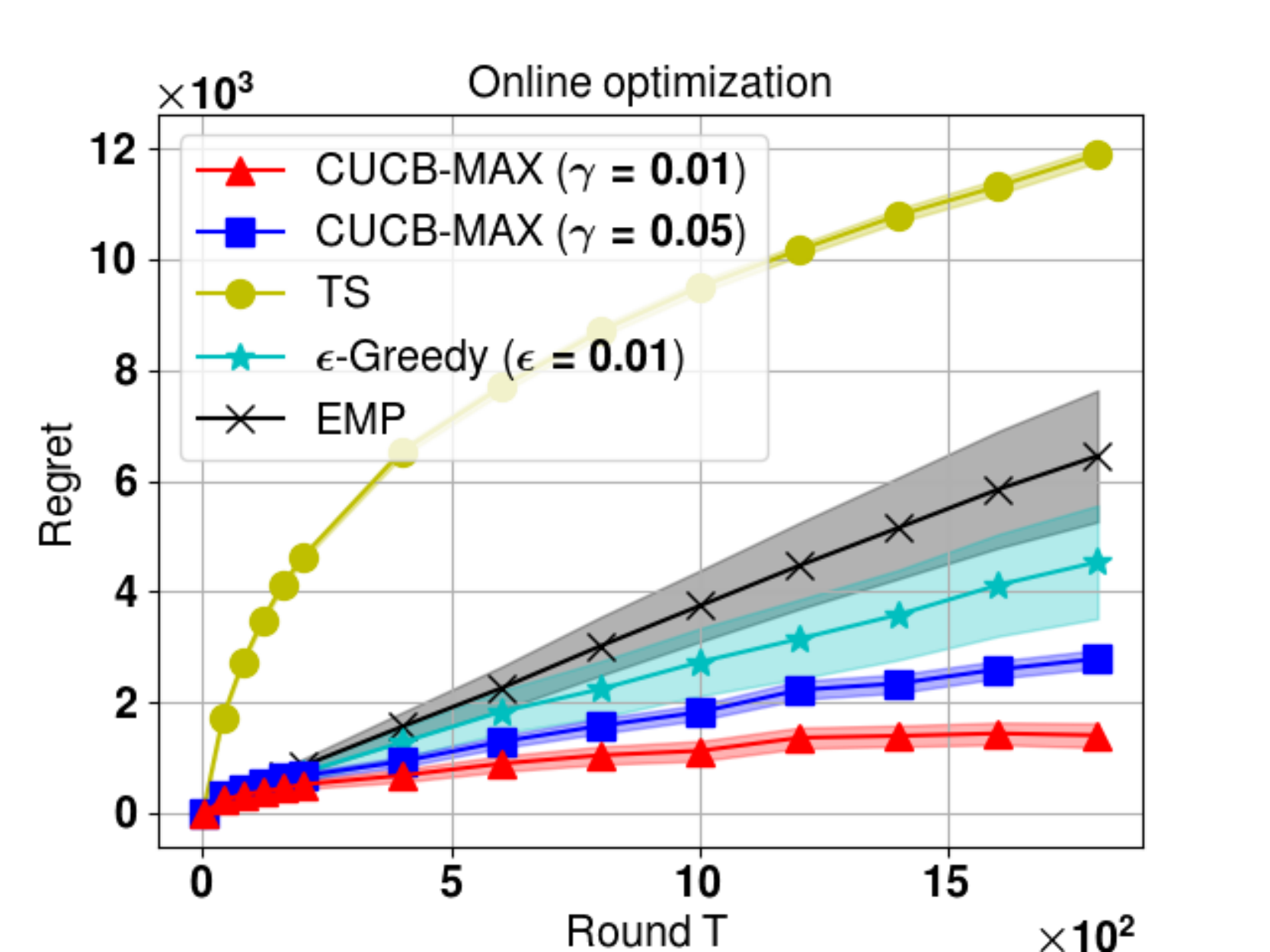}
		\caption{Online, overlapping.}
		\label{fig: online over fix 3000}
	\end{subfigure}
	\hfill
	\begin{subfigure}[b]{0.235\textwidth}
		\centering
		\includegraphics[width=\textwidth]{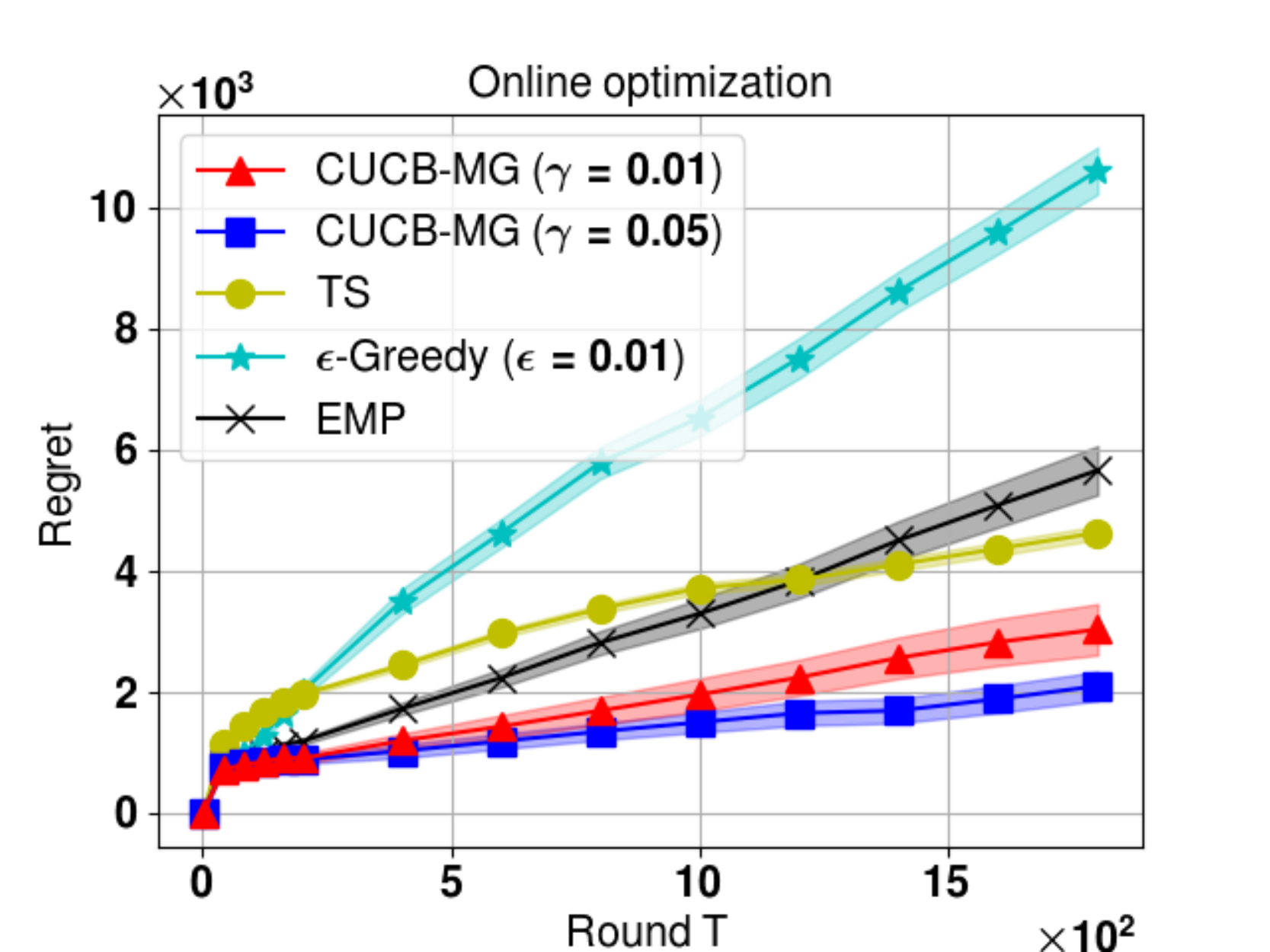}
		\caption{Online, non-overlapping.}
		\label{fig: online non over fix 3000}
	\end{subfigure}
	\caption{Above: total weights of unique nodes visited for offline algorithms. Below: regret for online algorithms when $B=3000$.
	}\label{fig: offline algorithms}
\end{figure}

\begin{table}
\centering
\caption{Statistics for FF-TW-YT network}
\label{tab: FF-TW-YT}
\begin{minipage}{0.90\columnwidth}
\begin{tabular}{cccc}
    \toprule
        Layer&FriendFeed & Twitter & YouTube  \\
     \midrule
     \# of vertices & 5,540 & 5,702 & 663 \\
     \# of edges & 31,921 & 42,327 & 614\\
     \bottomrule
\end{tabular}
\end{minipage}
\end{table}

\begin{table*}[h]
\caption{Running time (seconds) for offline and online algorithms.}
    \begin{subtable}[h]{0.3\textwidth}
        \centering
        \begin{minipage}{0.90\columnwidth}
        \begin{tabular}{cccccc}
    \toprule
     & B=2.6k & B=2.8k & B=3.0k  \\
     \midrule
     BEG & 0.274 & 0.316 & 0.363 \\
     BEGE & 34.37  & 45.51 &  59.20 \\
     OPT & 91.16 & 98.42 &  105.63 \\
     \bottomrule
\end{tabular}
\end{minipage}
       \caption{Running time of offline algorithms for the overlapping case with different budgets B.}
       \label{tab:offline, over}
    \end{subtable}
    \hfill
    \begin{subtable}[h]{0.3\textwidth}
        \centering
        \begin{tabular}{cccc}
    \toprule
      & B=2.6k & B=2.8k & B=3.0k  \\
     \midrule
     DP & 0.038  & 0.044 &  0.050 \\
     MG & 0.008  & 0.009 &0.010\\
     OPT & 70.10  & 75.78 & 81.11 \\
     \bottomrule
\end{tabular}
        \caption{Running time of offline algorithms for the non-overlapping case with different budgets B.}
        \label{tab:offline, non-over}
     \end{subtable}
     \hfill
         \begin{subtable}[h]{0.3\textwidth}
        \centering
        \begin{tabular}{ccc}
    \toprule
     Overlapping? & \checkmark  & $\times$  \\
     \midrule
     BEG & 1.22  & NA \\
     BEGE & 60.03  & NA\\
     DP  & NA  & 0.86  \\
     OPT & 107.33 & 82.01 \\
     \bottomrule
\end{tabular}
        \caption{Per-round running time for CUCB-MAX (or CUCB-MG) with different oracles when B=3.0k.
        }
        \label{tab:online}
     \end{subtable}
     \vspace{-10pt}
     \label{tab:runing time}
\end{table*}

\textbf{Dataset and settings.} We conduct experiments on a real-world multi-layered network FF-TW-YT, which contains $m=3$ layers representing users' social connections in FriendFeed (FF), Twitter (TW) and YouTube (YT)~\cite{dickison2016multilayer}.
In total, FF-TW-YT has $6,407$ distinct vertices representing users and $74,836$ directed edges representing connections (``who follows whom") among users.
The statistics for each layer is summarized in Table~\ref{tab: FF-TW-YT}.
We transform the FF-TW-YT network $\cG(\cV,\cE)$ into a symmetric directed
network (by adding a new edge $(v,u)$ if $(u,v)\in \cE$ but $(v,u) \notin \cE$) because a user can be visited via her followers or followees.
Each edge weight is set to be $1$ and the node weights are set to be $\sigma_u\in\{0,0.5,1\}$ uniformly at random. 
Each random walker always starts from the smallest node-id in each layer and we set constraints $c_i$ equal to the total budget $B$. 
Note that in order to test the non-overlapping case, we use the same network but relabel node-ids so that they do not overlap between different layers.
To handle the randomness, we repeat $10,000$ times and present the averaged total weights of unique nodes visited for offline optimization.
We calculate the regret by comparing with the \textit{optimal} solution, which is stronger than comparing with $(1-e^{-\eta},1)$-approximate solution as defined in Eq.~(\ref{eq: alpha beta regret}).
We average over 200 independent experiments to provide the mean regret with 95\% confidence interval.
To evaluate the computational efficiency, we also present the running time for both offline and online algorithms in Table~\ref{tab:runing time}.

\noindent\textbf{Algorithms in comparison.}
For the offline setting, we present the results for Alg.~\ref{alg: layer traversal greedy} (denoted as BEG), Alg.~\ref{alg: pure greedy} (denoted as MG), BEG with partial enumeration (denoted as BEGE) and \OnlyInFull{Alg.~\ref{alg: dynamic programming}}\OnlyInShort{the dynamic programming algorithm in the supplementary material} (denoted as DP).
We provide two baselines PROP-S and PROP-W, which allocates the budget proportional to the layer size and proportional to the total weights if we allocate $B/3$ budgets to that layer, respectively.
The optimal solution (denoted as OPT) is also provided by enumerating all possible budget allocations. 
For online settings, we consider CUCB-MAX (Alg.~\ref{alg: over online}) and CUCB-MG \OnlyInFull{(Alg.~\ref{alg: non-over online})}\OnlyInShort{(Sec.~\ref{sec:cucbmg})} algorithms.
We shrink the confidence interval by $\gamma$, i.e., $\rho_{i,u,b} \leftarrow \gamma\rho_{i,u,b}$, to speed up the learning, though our theoretical regret bound requires $\gamma = 1$.
For baselines, we consider the EMP algorithm which always allocates according to the empirical mean, and the $\epsilon$-Greedy algorithm which allocates budgets according to empirical mean with probability $1-\epsilon$ and allocates all $B$ budgets to the $i$-th layer with probability $\epsilon/m$.
We also compare with the Thompson sampling (TS) method~\cite{wang2018thompson}, which uses Beta distribution $\text{Beta}(\alpha,\beta)$ (where $\alpha=\beta=1$ initially) as prior distribution for each base arm.

\noindent\textbf{Experimental results.}
We show the results for \mulane{} problems in Figure~\ref{fig: offline algorithms}.
For the offline overlapping case, both BEG and MG outperform
two baselines PROP-W and PROP-S in receiving total weights.
Although not guaranteed by the theory, BEG are empirically close to BEGE and the optimal solution (OPT).
As for the computational efficiency, in Table~\ref{tab:offline, over}, the BEG is at least two orders of magnitude (e.g.,163 times when B=3.0k) faster than BEGE and OPT. 
Combining that the reward of BEG is empirically close to BEGE and the optimal solution, this shows that BEG is empirically better than BEGE and OPT.
For the offline non-overlapping case, the results are similar, but the difference is that we have the theoretical guarantee for the optimality of DP.
For the online setting, all CUCB-MAX/CUCB-MG curves outperform the baselines.
This demonstrates empirically that CUCB-MAX algorithm can effectively learn the unknown parameters while optimizing the objective.
For the computational efficiency of online learning algorithms, since the running time for algorithms with the same oracle is similar, we present the running time for CUCB-MAX with different oracles in Table~\ref{tab:online}.
CUCB-MAX with BEG is 50 times faster than BEGE, which is consistent with our theoretical analysis.
The results for different budgets $B$ are consistent with $B=3000$, which are included in the \OnlyInFull{Appendix~\ref{appendix: expriment for diff budgets}}\OnlyInShort{supplementary material}.
Results and analysis for stationary starting distributions are also in the\OnlyInFull{ Appendix~\ref{appendix: expriment for stationary distributions}}\OnlyInShort{supplementary material}.

\section{Conclusions and Future Work}\label{sec: conclusion}
This paper formulates the multi-layered network exploration via random walks (\mulane) as a budget allocation problem, requiring that the total weights of distinct nodes visited on the multi-layered network is maximized.
For the offline setting, we propose four algorithms for \mulane{} according to the specification of multi-layered network (overlapping or non-overlapping) and starting distributions (arbitrary or stationary), each of which has a provable guarantee on approximation factors and running time.
We further study the online setting where network structure and the node weights are not known a priori.
We propose the CUCB-MAX algorithm for overlapping \mulane{} and the CUCB-MG algorithm for the non-overlapping case, both of which are bounded by a $O(\log T)$ (approximate) regret.
Finally, we conduct experiments on a social network dataset to show the empirical performance of our algorithms.

There are many compelling directions for the future study.
For example, it would be interesting to extend our problem where the decision maker can jointly optimize the starting distribution and the budget allocation.
One could also study the adaptive \mulane{} by using the feedback from
the exploration results of the previous steps to determine the exploration strategy for future steps.

\section{Acknowledgement}
The work of John C.S. Lui was supported in part by the GRF 14200420.

\bibliography{main.bib}
\bibliographystyle{icml2021}

\appendix
\OnlyInFull{
\clearpage

\section*{Supplementary Material}
The supplementary material is organized as follows. 

We first discuss how we handle the multiple random walks in Section \ref{appendix: multiple}.
We provide proofs and examples for properties of the visiting probability in Section \ref{appendix: explct form}.
Proofs of offline optimization for overlapping \mulane{} are provided in Section \ref{apdx:offline_over_proof}.
We provide the detailed budget effective greedy algorithm with partial enumeration (BEGE) and its analysis in Section \ref{appendix: enum begreedy}.
Proofs of offline optimization for non-overlapping \mulane{} are provided in Section \ref{appendix: offline non over analysis}.
We state the detailed analysis of online learning for overlapping \mulane{} in Section \ref{appendix: over online analysis}.
We state the detailed analysis of online learning for non-overlapping \mulane{} in Section \ref{appendix: non over online}.
Supplemental experiments are provided in Section \ref{apdx:exp}.

\section{Handling the Multiple Random Walkers}\label{appendix: multiple}
\begin{figure}[h]
    \centering
    \begin{subfigure}[b]{0.23\textwidth}
        \centering
        \includegraphics[width=\textwidth]{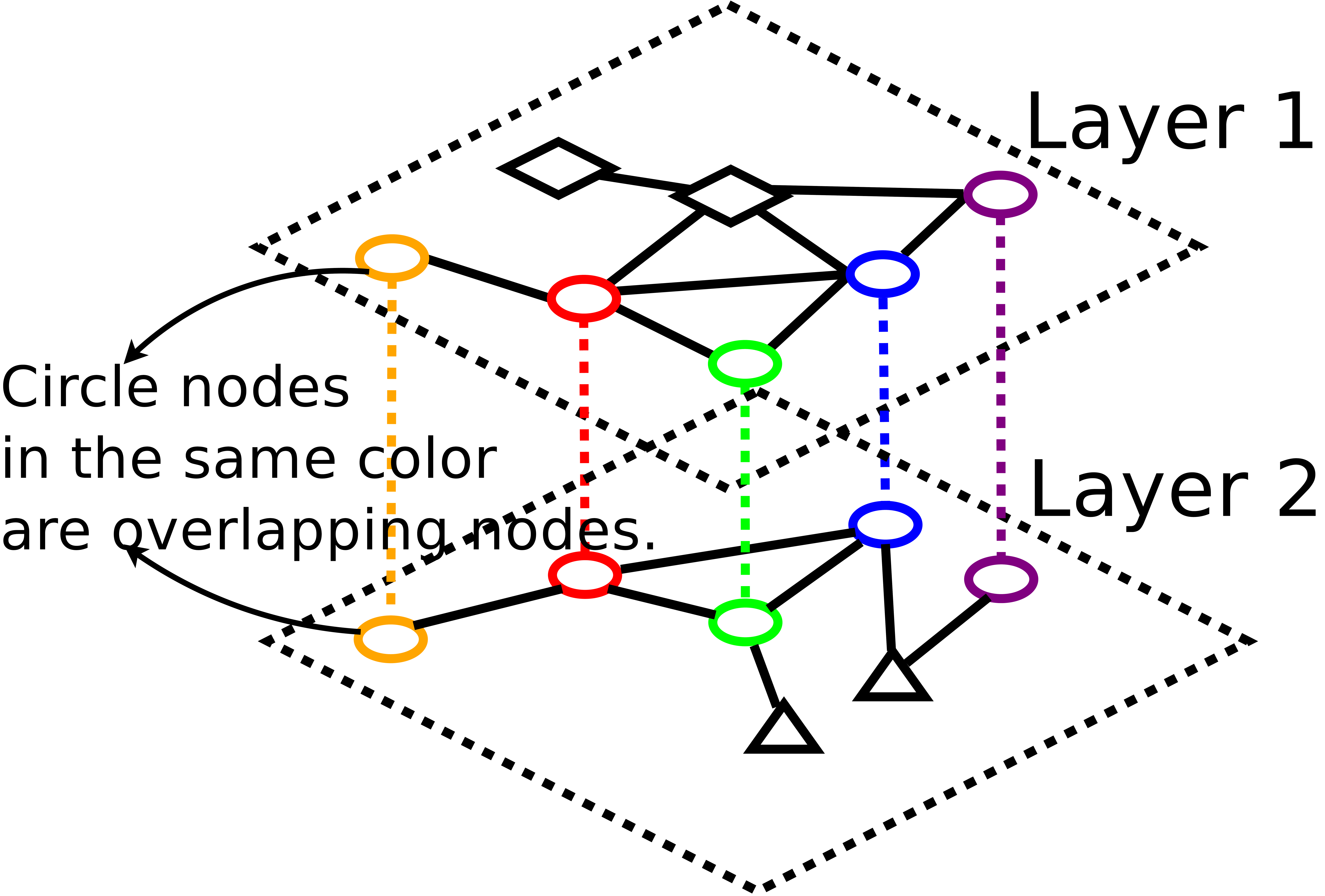}
        \caption{Overlapping.}
        \label{fig: over}
    \end{subfigure}
    \hfill
    \begin{subfigure}[b]{0.23\textwidth}
        \centering
        \includegraphics[width=\textwidth]{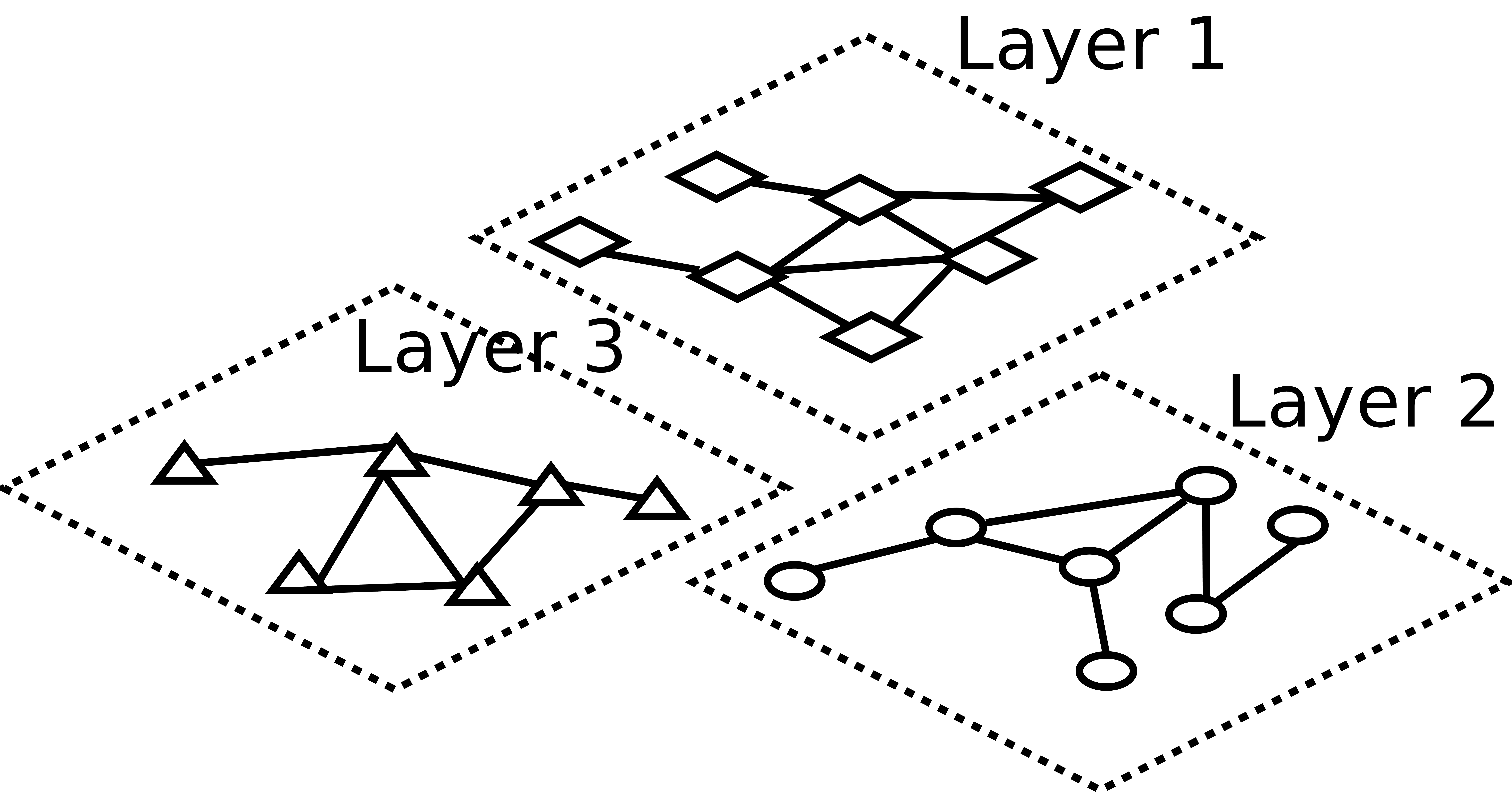}
        \caption{Non-overlapping.}
        \label{fig: non-over}
    \end{subfigure}
        \caption{Two types of multi-layered networks.}
        \label{fig: multi-layered}
\end{figure}
We can handle the scenario where each layer is explored by multiple random walkers using the bipartite coverage model. 
Without loss of generality, suppose we want to add a new random walker $W_1'$ to the layer $L_1$. 
A new node $W_1'$ can be easily added to $\cL$ representing the new random explorer, and new edges $\{(W_1', u)|u \in \cV_1\}$ are added to $\cE'$. 
Thus, we can use the same algorithms and analysis to solve the optimization problem on the newly constructed bipartite coverage graph.
\section{Proofs and Examples for Properties of the Visiting Probability
$P_{i,u}(k_i)$}\label{appendix: explct form}
\subsection{Starting From Arbitrary Distributions}\label{appendix: explct form stationary}
\lemNonDecreasing*
\begin{proof}
For analysis, we use the following equivalent formulation for $k_i \ge 2$,  
(trivially $P_{i,u}(0)=0$ and $P_{i,u}(1)=\alpha_{i,u}$),

\begin{equation}\label{eq: explicit form}\textstyle
P_{i,u}(k_i)=((\bm{\alpha_{i}})^{\top}_{-u} (\bm{\Gamma_i})_{uk_i} + \alpha_{i,u}),
\end{equation} 

where $(\bm{\Gamma_i})_{uk_i}=(\bm{I} + \bm{T_i(u)} + ... + \bm{T_i(u)}^{k_i-2})(\bm{p_i})_u$, $(\bm{p_i})_u = (\bm{P_i}{[\cdot, u]})_{-u}$, $\bm{T_i(u)}=(\bm{P_i})_{-u,-u}$,
(we use Eq. (2.67) in~\cite{kijima1997markov} to derive $\bm{\Gamma}_{uk_i}$). 
Note that $\bm{p}_{-u} \in \R^{n-1}$ is the vector obtained by deleting the $u$-th element from
$\bm{p} \in \R^n$, and $\bm{P}_{-u,-v} \in \R^{(n-1)\times (n-1)}$ is the matrix
obtained by deleting the $u$-th row and the $v$-th column from $\bm{P} \in \R^{n \times n}$.

We further derive the marginal gain of $P_{i,u}(\cdot)$ at step $k_i \ge 2$
(trivially $g_{i,u}(1)= \alpha_{i,u}$) as,
\begin{equation}\label{eq: non-over marginal gain}\textstyle
g_{i,u}(k_i) = (\bm{\alpha_i})_{-u}^{\top}\bm{T_i(u)}^{k_i-2}(\bm{p_i})_u.
\end{equation}
Now, we can show $g_{i,u}(k_i)$ is non-negative because any element of
$(\bm{\alpha_i})_{-u}^{\top}$, $\bm{T_i(u)}$ and $(\bm{p_i})_u$ are non-negative, which means
$P_{i,u}(k_i)$ is non-decreasing with respect to step $k_i$.
\end{proof}

\textbf{An example showing $g_{i,u}(k_i)$ is not monotone:}
Consider a path $P_3$ with three nodes as the $i$-th layer $\cG_i(\cV_i, \cE_i)$, where $\cV_i=\{u,v,w\}$ and $\cE_i=\{(u,v),(v,u), (v,w), (w,v)\}$. If the $W_i$ always starts from the left-most node $u$ and chooses the right-most node $w$ as our target node. Then, $g_{i,w}(1)=g_{i,w}(2)=0$ but $g_{i,w}(3)>0$ since at least three steps are needed to visit the node $w$, which shows that $g_{i,w}(k_i)$ is not always non-increasing.
\subsection{Starting From the Stationary Distribution}
\thmStationary*
\begin{proof}
	Consider any layer $L_i$ with transition probability matrix $\bm{P_i}$, if we start from the stationary distribution $\bm{\pi_i}$, the probability that node $u \in \cV_i$ is ever visited in the first $k_i$ steps is $P_{i,u}(k_i)=(\bm{\pi_i})_{-u}^{\top} (\bm{\Gamma_i})_{uk_i} + \pi_{i,u}$, where $(\bm{\Gamma_i})_{uk_i}=(\bm{I} + \bm{T_i(u)} + ... + \bm{T_i(u)}^{k-2})(\bm{p_i})_u$, $\bm{T_i(u)}=(\bm{P_i})_{-u,-u}$, $(\bm{p_i})_u = (\bm{P_i}{[\cdot, u]})_{-u}$. 
	Then the marginal gain for node $u$ is $g_{i,u}(k_i) = P_{i,u}(k_i) - P_{i,u}(k_i-1) = (\bm{\pi_i})_{-u}^{\top}\bm{T_i(u)}^{k_i-2}(\bm{p_i})_u$ for $k_i \ge 2$ and $g_{i,u}(k_i)=\pi_{i,u}$ when $k_i=1$. 
	
	Define the margin of the marginal gain as $\Delta_{i,u}(k_i)=g_{i,u}(k_i+1) - g_{i,u}(k_i)$.
	When $k=1$, $\Delta(u,1) = g_{i,u}(2) - g_{i,u}(1) = (\bm{\pi_i})_{-u}^{\top}\cdot(\bm{p_i})_u - \pi_{i,u} = \pi_{i,u} - \pi_{i,u} \bm{P}{[u,u]} - \pi_{i,u} =  -\pi_{i,u} \bm{P}{[u,u]} \le 0$. 
	When $k_i \ge 2$,   $\Delta_{i,u}(k_i)=g_{i,u}(k_i+1)-g_{i,u}(k_i)=(\bm{\pi_i})_{-u}(\bm{T_i(u)}^{k_i-1}-\bm{T_i(u)}^{k_i-2})(\bm{p_i})_u$. 
	Because $\bm{\pi_i}^{\top} \bm{P_i} = \bm{\pi_i}^{\top} $, we have $(\bm{\pi_i})_{-u}^{\top} - (\bm{\pi_i})_{-u}^{\top} \bm{P_i}(u) = \pi_{i,u} (\bm{q_i})_u^{\top}$, where $(\bm{q_i})_u^{\top}=(\bm{P_i}{[u, \cdot]})_{-u}$. 
	Thus, $\Delta_{i,u}(k_i) = -(\pi_{i,u}) (\bm{q_i})_u^{\top} \bm{T_i(u)}^{k_i-2} (\bm{p_i})_u \le 0$ because any element in $\bm{\pi_i}$, $(\bm{q_i})_u^{\top}$, $(\bm{p_i})_u$ and $\bm{P_i(u)}$ is non-negative.
\end{proof}

More interestingly, the stationary distribution $\bm{\pi}_i$ is the \textit{only} starting distribution for $\bP_i$ such that any $u \in \cV_i, k_i \in \Z_{>0}$, $g_{i,u}(k_i+1) - g_{i,u}(k_i) \le 0$ when $\bP_i$ is ergodic and there are no self loops in $\cG_i$.
 
\begin{proof}
With a little abuse of the notation, we use $\bP$ to denote the transition probability matrix $\bP_i \in \R^{n \times n}$ and let $\bP_{i,j}$ be the element in the $i$-th row and the $j$-th column.
Since $\bP$ is ergodic, according to Theorem 54 in~\cite{serfozo2009basics}, there exists a unique and positive stationary distribution $\bm{\pi}=(\pi_1, ..., \pi_n)$, i.e., $\bm{\pi}\bP =\bm{\pi}$ and $\bm{\pi} > \bm{0}$.
Any starting distribution $\bm{\alpha}$ can be represented by $\bm{\pi} + \bm{\epsilon}$, where $\bm{\epsilon}=(\epsilon_1, ..., \epsilon_n)$ is a perturbation vector, and $-\pi_j \le \epsilon_j \le 1-\pi_j, j\in [n]$.
We have the following equation for the margin of marginal gains for node $u$ in the first two steps,
\begin{align*}
\Delta_u = g_{i,u}(1) - g_{i,u}(2) &= (\pi_u + \epsilon_u) - \sum_{j \neq u}(\pi_j + \epsilon_j)\bP_{j,u} \\
&={\pi}_u\bP_{uu} + \epsilon_u - \sum_{j\neq u}\epsilon_j
\bP_{j,u}.\\
\end{align*}
Since $\cG$ has no self loops, i.e., $\bP_{u,u} = 0$,
we have $\Delta_u =  \epsilon_u - \sum_{j\neq u}\epsilon_i\bP_{j,u}$,
and we can verify that $\sum_{u \in [n]}\Delta_u = \sum_{u \in [n]}(1-\sum_{j \neq u}\bP_{u,j})\epsilon_u = 0 $.
Therefore, we have to guarantee $\Delta_u = 0$ for all $u$, otherwise there will exist a node $u$ such that $\Delta_u < 0$.
To ensure $\Delta_u=0$, we need to ensure $\bm{\epsilon}\bP = \bm{\epsilon}$ by rephrasing the equations $\Delta_u=0$ for all $u$.
Again, since there exists a unique and positive stationary distribution for $\bP$ and $\sum_{u \in [n]}\epsilon_u = 0$,
we can derive $\bm{\epsilon} = \beta \bm{\pi}$, where $\beta$ has to be $0$. 
Therefore, combined with Lemma~\ref{lem: stationary}, the stationary distribution is the only starting distribution such that for any $\bP_i$, any $u \in \cV_i, k_i \in \Z_{>0}$, $g_{i,u}(k_i+1) - g_{i,u}(k_i) \le 0$.
\end{proof}

\section{Proofs of Offline Optimization for Overlapping \mulane}\label{apdx:offline_over_proof}
\subsection{Starting from the arbitrary distribution}\label{appendix: offline proof arbitrary distribution}
\thmSubmodular*
\begin{proof}
By definition, we need to show $r_{\cG, \bm{\alpha}, \bsig}( \bm{x \wedge y}) + r_{\cG, \bm{\alpha}, \bsig}( \bm{x \vee y}) \le r_{\cG, \bm{\alpha}, \bsig}( \bm{x}) + r_{\cG, \bm{\alpha}, \bsig}( \bm{y})$ for any $\bm{x},\bm{y} \in \Z^m_{\ge 0}$, and $r_{\cG, \bm{\alpha}, \bsig}( \bm{x}) \le r_{\cG, \bm{\alpha}, \bsig}( \bm{y})$ if $\bm{x} \le \bm{y}$.

(\textbf{Monotonicity.}) By Eq.~(\ref{eq: over reward}),
$r_{\cG, \bm{\alpha}, \bsig}( \bm{k})=\sum_{v\in\cV}\sigma_v(1-\Pi_{i\in[m]}(1-P_{i,v}(k_i)))$.
Since $P_{j,v}(x_j) \le P_{j,v}(x_j+1)$, for any $j \in [m], v\in \cV$, $x_j \in \bbZ_{\ge 0}$, we have
\begin{align*}
&r_{\cG, \bm{\alpha}, \bsig}( \bm{x}+\bch_j)-r_{\cG, \bm{\alpha}, \bsig}( \bm{x})\\
&=\sum_{v\in\cV}\sigma_v((\prod_{i\neq j}\left(1-P_{i,v}(x_i)))\left(P_{j,v}(x_j+1)-P_{j,v}(k_j)\right)\right) \\
&\ge 0,
\end{align*}
for any $\bx \in \bbZ_{\ge 0}^m, j \in [m]$.
Then we can use the above inequality repeatedly to show that $r_{\cG, \bm{\alpha}, \bsig}( \bm{x}) \le r_{\cG, \bm{\alpha}, \bsig}( \bm{y})$ when $\bm{x} \le \bm{y}$.

(\textbf{Submodularity.}) For submodular property, it is sufficient to prove $(1-\Pi_{i\in[m]}(1-P_{i,v}(k_i)))$ is submodular for any $v \in \cV$, because a positive weighted sum of submodular function is still submodular.
We will rely on the following lemma to prove the submodularity of $r_{\cG, \bm{\alpha}, \bsig}( \bm{k})$.
\begin{lemma}\label{lem: lattice submodular}
Function $f: \bbZ^m_{\ge 0}\rightarrow \R$ is submodular if and only if
\begin{equation}
\begin{split}\label{ieq: lem lattice submodular}
&f(\bm{x}+\bm{\chi}_i) - f(\bm{x}) \ge f(\bm{x} + \bm{\chi}_j +\bm{\chi}_i) - f(\bm{x} + \bm{\chi}_j),\quad \\
&\text{for any $\bm{x} \in \Z^m_{\ge 0}$ and $i\neq j$.}
\end{split}
\end{equation}
\end{lemma}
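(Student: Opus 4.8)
The plan is to prove the two implications of the equivalence separately. The forward direction is an immediate specialization of the lattice-submodularity inequality~\eqref{ieq: lattice def}, while the backward direction is the substantive part and rests on a telescoping argument that exploits a disjointness of supports. Throughout I write $\Delta_i f(\bm{z}) \defeq f(\bm{z}+\bch_i)-f(\bm{z})$ for the discrete marginal in direction $i$.

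\textbf{Necessity ($\Rightarrow$).} Assuming $f$ is submodular, I would fix $\bm{x}\in\Z^m_{\ge 0}$ and $i\neq j$, and apply~\eqref{ieq: lattice def} to the pair $\bm{x}+\bch_i$ and $\bm{x}+\bch_j$. Because $i\neq j$, their coordinate-wise maximum is $\bm{x}+\bch_i+\bch_j$ and their minimum is exactly $\bm{x}$, so~\eqref{ieq: lattice def} becomes
\[
f(\bm{x}+\bch_i+\bch_j)+f(\bm{x}) \le f(\bm{x}+\bch_i)+f(\bm{x}+\bch_j),
\]
and rearranging yields~\eqref{ieq: lem lattice submodular}. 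This direction is routine: the one-hot structure together with $i\neq j$ is precisely what makes the meet and join collapse to $\bm{x}$ and $\bm{x}+\bch_i+\bch_j$.

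\textbf{Sufficiency ($\Leftarrow$).} Assuming~\eqref{ieq: lem lattice submodular}, I would first upgrade this single-step hypothesis to a multi-step monotonicity of marginals: for any $\bm{z}\in\Z^m_{\ge 0}$, any direction $i$, and any $\bm{w}\ge\bm{0}$ with $w_i=0$,
\[
\Delta_i f(\bm{z}+\bm{w}) \le \Delta_i f(\bm{z}) \qquad (\star)
\]
This follows by induction on $\norm{\bm{w}}_1$, adding the units of $\bm{w}$ one coordinate at a time; since $\bm{w}$ is supported off $i$, every added unit lies in some direction $j\neq i$, so each step is a direct application of~\eqref{ieq: lem lattice submodular}. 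Next, for arbitrary $\bm{u},\bm{v}\in\Z^m_{\ge 0}$ I would set $\bm{a}=\bm{u}\wedge\bm{v}$, $\bm{b}=\bm{u}\vee\bm{v}$, $I=\text{supp}^+(\bm{u}-\bm{v})$ and $J=\text{supp}^+(\bm{v}-\bm{u})$, which are disjoint. A direct check gives $\bm{u}-\bm{a}=\bm{b}-\bm{v}\eqdef\bm{d}$, supported on $I$, while $\bm{v}-\bm{a}$ is supported on $J$ and $\bm{v}\ge\bm{a}$. The desired inequality $f(\bm{b})+f(\bm{a})\le f(\bm{u})+f(\bm{v})$ is equivalent to $f(\bm{v}+\bm{d})-f(\bm{v}) \le f(\bm{a}+\bm{d})-f(\bm{a})$. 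I would decompose $\bm{d}=\sum_{\ell=1}^{L}\bch_{i_\ell}$ with each $i_\ell\in I$, telescope both marginals as sums of $\Delta_{i_\ell}$ evaluated at the partial sums $\bm{z}_\ell=\bm{a}+\sum_{s<\ell}\bch_{i_s}$, and apply $(\star)$ termwise with $\bm{w}=\bm{v}-\bm{a}$: since $\bm{v}-\bm{a}$ is supported on $J$ and $i_\ell\in I$ with $I\cap J=\emptyset$, the hypothesis $w_{i_\ell}=0$ holds, giving $\Delta_{i_\ell}f(\bm{z}_\ell+(\bm{v}-\bm{a}))\le\Delta_{i_\ell}f(\bm{z}_\ell)$. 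Summing over $\ell$ completes the proof.

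The hard part will be the sufficiency direction, and in particular recognizing that the ``difference'' increment $\bm{v}-\bm{a}$ and the ``step'' increment $\bm{d}$ live on the disjoint coordinate sets $J$ and $I$. This disjointness is exactly what allows the telescoping to proceed using only the $i\neq j$ inequality~\eqref{ieq: lem lattice submodular}; no same-coordinate comparison is ever invoked, which is consistent with the earlier remark that lattice submodularity is strictly weaker than DR-submodularity. I expect the only delicate bookkeeping to be verifying the support identities $\bm{u}-\bm{a}=\bm{b}-\bm{v}$ and that $\bm{v}-\bm{a}$ vanishes on $I$, after which the induction for $(\star)$ and the termwise application are mechanical.
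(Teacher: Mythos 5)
Your proposal is correct and follows essentially the same route as the paper's proof: the necessity direction is the identical specialization of the lattice definition to the pair $\bm{x}+\bch_i$, $\bm{x}+\bch_j$, and your key lemma $(\star)$ is precisely the paper's intermediate inequality that the marginal $f(\cdot+\bch_i)-f(\cdot)$ is non-increasing along increments supported off coordinate $i$, followed by the same telescoping from the meet $\bm{u}\wedge\bm{v}$ up to $\bm{u}$ and from $\bm{v}$ up to $\bm{u}\vee\bm{v}$. The only cosmetic difference is that you telescope one unit at a time, whereas the paper first proves a multi-unit version of $(\star)$ and then telescopes coordinate-block by coordinate-block.
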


\begin{proof}[Proof of Lemma~\ref{lem: lattice submodular}]

\textbf{(If part.)}
We first prove, if inequality~(\ref{ieq: lem lattice submodular}) holds, the following inequality holds, 
\begin{equation}\label{ieq: DR-submodular}
f(\bx + \bch_i) - f(\bx) \ge f(\by + \bch_i) - f(\by) 
\end{equation}
for any $\bx \le \by$ and $i \in [m]$ such that $x_i = y_i$.

Let $I_0 = \{i\in[m]: x_i = y_i\}, I_1 = \{i \in [m]: x_i < y_i\}$.
For any $\bx \le \by$, we denote the elements in $I_1$ by $i_1, ..., i_s$ and write $\by = \bx + \sum_{j=1}^{s} \alpha_j \bch_{i_j} $, where $\alpha_j=y_j-x_j$. 
For any $i \in I_0$, we have
\begin{align}\label{ieq: submodular ratio}
&f(\bx + \bch_i) - f(\bx) \nonumber \\
&\ge f(\bx + \bch_{i_1} + \bch_i) - f(\bx + \bch_{i_1}) \nonumber\\
& \ge f(\bx+2\bch_{i_1}+\bch_i) - f(\bx+2\bch_{i_1}) \nonumber\\
&\ge ... \nonumber\\
&\ge f(\bx+\alpha_{i_1}\bch_{i_1}+\bch_i) - f(\bx+\alpha_{i_1}\bch_{i_1})\nonumber\\
& \ge f(\bx+\alpha_{i_1}\bch_{i_1}+\bch_{i_2}+\bch_i) - f(\bx+\alpha_{i_1}\bch_{i_1}+\bch_{i_2}) \nonumber\\
&\ge f(\bx+\alpha_{i_1}\bch_{i_1}+\alpha_{i_2}\bch_{i_2}+\bch_i) - f(\bx+\alpha_{i_1}\bch_{i_1}+\alpha_{i_2}\bch_{i_2}) \nonumber\nonumber\\
&\ge ... \nonumber\\
&\ge f(\bx + \sum_{j=1}^{s} \alpha_j \bch_{i_j}+\bch_i) - f(\bx + \sum_{j=1}^{s} \alpha_j \bch_{i_j})\nonumber\\
&= f(\by+\bch_i) - f(\by).
\end{align}

Then for any $i \in I_0$ and $a \in \bbZ_{\ge 0}$, we have
\begin{align}\label{ieq: lattice submodular property}
&f(\bx+a\bch_i) - f(\bx) \nonumber\\
&= \sum_{j=1}^{a}\left(f(\bx+j\bch_i) - f(\bx+(j-1)\bch_i)\right) \nonumber\\
&\ge  \sum_{j=1}^{a}\left(f(\by+j\bch_i) - f(\by+(j-1)\bch_i)\right) \nonumber\\
&\ge f(\by+a\bch_i) - f(\by).
\end{align}
The first inequality holds because of Inequality~(\ref{ieq: submodular ratio}), the fact $\bx+(j-1)\bch_i \le \by+(j-1)\bch_i$ and $x_i + j-1 = y_i + j-1$.

Then for any $\bx,\by \in \bbZ_{\ge 0}^m$, let $I_2=\{i\in[m]: x_i > y_i\}=\{i_1, ..., i_s\}$. We have  
\begin{align*}
&f(\bx) - f(\bx \wedge \by) \\
&= \sum_{l=1}^s\Bigg(f\Bigg(\bx \wedge \by + \sum_{j=1}^l{(x_{i_j}-y_{i_j})\bch_{i_j}}\Bigg) \\
&- f\Bigg(\bx \wedge \by + \sum_{j=1}^{l-1}{(x_{i_j}-y_{i_j})\bch_{i_j}}\Bigg)\Bigg) \\
&\ge \sum_{l=1}^s\Bigg(f\Bigg(\by + \sum_{j=1}^l{(x_{i_j}-y_{i_j})\bch_{i_j}}\Bigg) \\
&- f\Bigg(\by + \sum_{j=1}^{l-1}{(x_{i_j}-y_{i_j})\bch_{i_j}}\Bigg)\Bigg) \\
&=f(\bx \vee \by) - f(\by)  .
\end{align*}
The inequality is derived from Inequality~(\ref{ieq: lattice submodular property}) because $ \by + \sum_{j=1}^{l-1}{(x_{i_j}-y_{i_j})\bch_{i_j}} \ge \bx \wedge \by + \sum_{j=1}^{l-1}{(x_{i_j}-y_{i_j})\bch_{i_j}}$ for any $0 \le l \le s$ and $(\bm{x}\wedge \bm{y})_{i_l} = \by_{il}$, which concludes the if part.

\textbf{(Only if part.)}
Assume $f$ is submodular, let $\ba = \bx + \bch_i, \bb = \bx + \bch_j, i\neq j$, we have $\ba \vee \bb=\bx+\bch_i+\bch_j$, $\ba \wedge \bb = \bx$.
$ f(\bm{x}+\bm{\chi}_i) - f(\bm{x}) = f(\ba)- f(\ba \wedge \bb) \ge f(\ba \vee \bb) - f(\bb) = f(\bm{x} + \bm{\chi}_j +\bm{\chi}_i) - f(\bm{x} + \bm{\chi}_j)$.

\end{proof}

Then, by Lemma~\ref{lem: lattice submodular} and the explicit formula of the reward function given by Eq.~(\ref{eq: over reward}), we can prove $g(\bx, v) - g(\bx+\bch_l, v) \ge g(\bx + \bch_j, v)- g(\bx+\bch_l+\bch_j, v)$ for any $\bx \in \bbZ_{\ge 0}^m$, $v\in \cV$ and $l\neq j \in [m]$, where $g(\bm{x}, v)=\Pi_{i\in[m]}(1-P_{i,v}(x_i))$.
This holds due to the fact that the left hand side equals to $\left(\Pi_{i \in [m]\setminus \{j,l\}}(1-P_{i,v}(x_i))\right)(1-P_{j,v}(k_j))(P_{l,v}(k_l+1)-P_{l,v}(k_l))$ and the right hand side equals to $\left(\Pi_{i \in [m]\setminus \{j,l\}}(1-P_{i,v}(x_i))\right)(1-P_{j,v}(k_j+1))(P_{l,v}(k_l+1)-P_{l,v}(k_l))$, and
the left hand side is larger or equal to the right hand side because $(1-P_{j,v}(k_j)) \ge (1-P_{j,v}(k_j+1))$.
By summation over all nodes $v\in \cV$ with node weights $\sigma_v \in [0,1]$, we can prove the reward function is submodular.
\end{proof}

\thmOverSol*

\begin{proof}
For theoretical analysis, we first give a modified version of Alg.~\ref{alg: layer traversal greedy} in Alg.~\ref{alg: equivalent layer traversal greedy}. 
Both algorithms provide the same solution $\bk$ given the same problem instance $(\cG, \bm{\alpha}, \bsig, B, \bc)$.
To see this fact, Alg.~\ref{alg: equivalent layer traversal greedy} considers invalid tentative allocations $(i^*, b^*)$ (adding it will exceed the total budget constraint $B$) and remove them in line~\ref{line: equivalent alg remove} of Alg.~\ref{alg: equivalent layer traversal greedy}, while Alg.~\ref{alg: layer traversal greedy} only considers valid allocations by directly removing invalid allocations in advance in line~\ref{line: remove} of Alg.~\ref{alg: layer traversal greedy}. 

With a little abuse of the notation, we use $r(\bm{k})$ to represent the reward $r_{\cG, \bm{\alpha}, \bsig}( \bm{k})$ of a given problem instance. 
Let $\bm{k^*} \in \Z_{\ge 0}^m$ denote the optimal budget allocation and $\bm{k}^j\in \Z_{\ge 0}^m$ denote the budget allocation before entering the $j$-th iteration of the while loop (line~\ref{line: equivalent alg while begin}-\ref{line: equivalent alg remove}) in Alg.~\ref{alg: equivalent layer traversal greedy}.
After entering the $j$-th iteration, the algorithm tries to extend the current budget allocation $\bm{k}^j$ by choosing the pair $(i^*, b^*)$ in line~\ref{line: equivalent alg largest margin}, which we denote as $(i^j, b^j)$. 
Let $s$ be the first iteration we can not extend the current solution, i.e., $\bm{k}^s=\bm{k}^{s+1}$ and $\bm{k}^j< \bm{k}^{j+1}$ for $j = 1, ..., s-1$.
If we can always extend the current solution, we set $s$ to be $B+1$.
For analysis, we temporarily add $(i^s, b^s)$ (in the algorithm, this pair is removed by line~\ref{line: equivalent alg remove} in the $s$-th iteration) to form a "virtual" budget allocation $\bk^{s+1}$=$\bk^{s}+b^s\bch_{i^s}$. Let $\bk_g \in \Z_{\ge 0}^m$ denote the solution returned by Alg.~\ref{alg: equivalent layer traversal greedy}.

\begin{algorithm}[t]
\caption{Equivalent Budget Effective Greedy Algorithm (BEG) for the Overlapping MuLaNE.}\label{alg: equivalent layer traversal greedy}
\begin{algorithmic}[1]
	    \INPUT{Network $\cG$, starting distributions $\bm{\alpha}$, node weights $\bsig$, budget $B$, constraints $\bc$.}
	    \OUTPUT{Budget allocation $\bk$.}
	    \STATE Compute visiting probabilities $(P_{i,u}(b))_{i\in[m], u\in\cV, b \in [c_i]}$ according to Eq.~(\ref{eq: simple form}).
	    	\label{line: equivalent alg computePiub}
	    \STATE $\bk \leftarrow$ BEG($(P_{i,u}(b))_{i\in[m], u\in\cV, b \in [c_i]}$, $\bsig$, $B$, $\bc$).\label{line: equivalent alg return}
	    \algrule
		\FUNCTION{BEG($(P_{i,u}(b))_{i\in[m], u\in\cV, b \in [c_i]}$, $\bsig$, $B$, $\bc$)}
		\STATE Let $\bm{k}\defeq(k_1, ..., k_m) \leftarrow \bm{0}$, $K\leftarrow B$.
		\STATE Let $\cQ \leftarrow \{(i,b_i) \, | \, i \in [m], 1\le b_i \le  c_i\}$.
		\WHILE {$K > 0$ and $\cQ \neq \emptyset$ \label{line: equivalent alg while begin}}
	 \STATE $(i^*,b^*) \leftarrow \argmax_{(i,b)\in \cQ} \delta(i,b,\bk)/b $ \label{line: equivalent alg largest margin}\algorithmiccomment{Eq.~(\ref{eq: marginal gain delta})}
		\IF {$b^* \le K$}
		 \STATE $k_{i^*} \leftarrow k_{i^*} + b^*$, $K \leftarrow K - b^*$.
		 \STATE Modify pairs $(i^*,b) \in \cQ$ to $(i^*, b - b^*)$.
		 \STATE Remove paris $(i^*,b) \in \cQ$ such that $b \le 0$.
		
		\ELSE 
		\STATE Remove $(i^*, b^*)$ from $\cQ$.\label{line: equivalent alg remove}
		\ENDIF
		\ENDWHILE
		\FOR{$i \in [m]$\label{line: equivalent alg for begin}}
		\STATE \textbf{if} $r_{\cG, \bm{\alpha}, \bsig}( c_i\bm{\chi}_i) > r_{\cG, \bm{\alpha}, \bsig}( \bm{k})$, \textbf{then} $\bk \leftarrow c_i \bch_i.$ \label{line: equivalent alg for end}
        \ENDFOR
		\textbf{return} $\bk\defeq(k_1, ..., k_m)$.
		\ENDFUNCTION
\end{algorithmic}
\end{algorithm}

We first introduce lemmas describing two important properties given by the submodularity over the integer lattice.

\begin{lemma}\cite{soma2014optimal}.\label{lem: property2}
	Let $f: \bbZ_{\ge 0}^m \rightarrow \bbR$ be a submodular function. For any $\bx, \by \in \bbZ_{\ge 0}^m $, we have,
	\begin{equation}
	\begin{split}
	&f(\bx \vee \by)\\
	&\le f(\bx) + \sum_{i \in \text{supp}^+(\by - \bx)} \pl f\left(\bx+(y_i - x_i)\bch_i\right) - f(\bx) \pr .
	\end{split}
	\end{equation}
\end{lemma}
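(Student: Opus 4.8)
The plan is to reduce the statement to a telescoping sum and then control each term using the diminishing-returns inequality already established in the proof of Lemma~\ref{lem: lattice submodular}. First I would observe that $\bx \vee \by$ and $\bx$ agree on every coordinate outside $I \defeq \text{supp}^+(\by - \bx)$, while on each $i \in I$ we have $(\bx \vee \by)_i = y_i$. Writing $I = \{i_1, \dots, i_s\}$ in any fixed order and setting $a_l \defeq y_{i_l} - x_{i_l} > 0$, this yields the clean decomposition $\bx \vee \by = \bx + \sum_{l=1}^{s} a_l \bch_{i_l}$.

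Next I would introduce the intermediate lattice points $\bz^0 \defeq \bx$ and $\bz^l \defeq \bx + \sum_{j=1}^{l} a_j \bch_{i_j}$, so that $\bz^s = \bx \vee \by$, and telescope:
\begin{equation*}
f(\bx \vee \by) - f(\bx) = \sum_{l=1}^{s} \left( f(\bz^{l-1} + a_l \bch_{i_l}) - f(\bz^{l-1}) \right).
\end{equation*}
The crux is to bound each summand by the corresponding marginal gain taken at the base point $\bx$. Here I would invoke Inequality~(\ref{ieq: lattice submodular property}): since $\bx \le \bz^{l-1}$ and coordinate $i_l$ has not yet been incremented along the path (so that $x_{i_l} = (\bz^{l-1})_{i_l}$), applying that inequality with the small vector $\bx$, the large vector $\bz^{l-1}$, increment $a = a_l$ and index $i = i_l$ gives
\begin{equation*}
f(\bz^{l-1} + a_l \bch_{i_l}) - f(\bz^{l-1}) \le f(\bx + a_l \bch_{i_l}) - f(\bx).
\end{equation*}
Summing over $l$ and recalling $a_l = y_{i_l} - x_{i_l}$ then reproduces exactly the claimed bound.

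The only real point to verify — and what I regard as the main (if minor) obstacle — is the coordinate-matching hypothesis needed to apply Inequality~(\ref{ieq: lattice submodular property}): that at step $l$ the index $i_l$ is still untouched, i.e.\ $(\bz^{l-1})_{i_l} = x_{i_l}$. This holds because the path increments each coordinate of $I$ exactly once and in the fixed order $i_1, \dots, i_s$, so before step $l$ the coordinate $i_l$ still carries its original value $x_{i_l}$. Everything else is routine telescoping and summation, and no property beyond the diminishing-returns form already derived in the \textbf{if} part of Lemma~\ref{lem: lattice submodular} is required.
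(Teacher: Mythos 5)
Your proof is correct, but note that the paper itself never proves this lemma: it is imported verbatim from \cite{soma2014optimal} and used as a black box in the proof of Theorem~\ref{thm: overlapping solution}. What you have produced is a self-contained derivation from machinery the paper develops elsewhere, namely the telescoping path $\bm{z}^0 = \bx, \dots, \bm{z}^s = \bx \vee \by$ together with inequality~(\ref{ieq: lattice submodular property}) from inside the proof of Lemma~\ref{lem: lattice submodular}. Your coordinate-matching check is the right one: at step $l$ you have $\bx \le \bm{z}^{l-1}$ with $x_{i_l} = (\bm{z}^{l-1})_{i_l}$, so $i_l$ lies in the set $I_0$ for that pair and the inequality applies. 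One small imprecision in your closing remark: inequality~(\ref{ieq: lattice submodular property}) is derived in the \emph{if} part of Lemma~\ref{lem: lattice submodular} under the hypothesis~(\ref{ieq: lem lattice submodular}), whereas the present lemma assumes lattice submodularity, so your chain also needs the \emph{only if} part of Lemma~\ref{lem: lattice submodular} to pass from submodularity to~(\ref{ieq: lem lattice submodular}); since both directions are proved in the paper, this is a bookkeeping point, not a gap. Finally, your detour through the DR-type inequality can be avoided entirely: for each $l$, take $\bu = \bm{z}^{l-1}$ and $\bv = \bx + a_l\bch_{i_l}$ and observe that $\bu \vee \bv = \bm{z}^{l}$ and $\bu \wedge \bv = \bx$, so the defining inequality~(\ref{ieq: lattice def}) gives $f(\bm{z}^{l}) - f(\bm{z}^{l-1}) \le f(\bx + a_l\bch_{i_l}) - f(\bx)$ in one step. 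Summing over $l$ yields the lemma directly from the definition of submodularity, making the argument both shorter and independent of Lemma~\ref{lem: lattice submodular}.
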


\begin{lemma}\cite{soma2014optimal}.
	Let $f: \bbZ_{\ge 0}^m \rightarrow \bbR$ be a monotone submodular function. For any $\bx, \by \in \bbZ_{\ge 0}^m $ with $\bx \le \by$ and $i \in [m]$ we have,
	\begin{equation}
	f(\bx \vee k\bch_i) - f(\bx)\ge f(\by \vee k\bch_i) -f(\by).
	\end{equation}
\end{lemma}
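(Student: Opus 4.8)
The plan is to prove the inequality directly from the two hypotheses---submodularity~(\ref{ieq: lattice def}) and monotonicity---by applying the submodularity relation to a single, cleverly chosen pair of lattice points, \emph{without} any case analysis on the relative sizes of $k$, $x_i$, and $y_i$. The key idea is to invoke submodularity on the pair $\bx \vee k\bch_i$ and $\by$, rather than on $\bx$ and $\by$ directly. I emphasize that I will \emph{not} try to argue that the single-coordinate marginal gain is diminishing, since lattice submodularity alone does not grant this (that is precisely the DR-submodular property highlighted earlier as strictly stronger); instead, monotonicity will be what bridges the remaining gap.

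First I would establish two coordinate-wise lattice identities by writing $\vee$ and $\wedge$ out entrywise and using $\bx \le \by$. One checks that $(\bx \vee k\bch_i) \vee \by = \by \vee k\bch_i$: on coordinate $i$ this reads $\max\{x_i, k, y_i\} = \max\{y_i, k\}$ (using $x_i \le y_i$), and on any $j \neq i$ both sides equal $y_j$. Next, setting $\bx' := (\bx \vee k\bch_i) \wedge \by$, one sees that $\bx'$ agrees with $\bx$ on every coordinate $j \neq i$, while on coordinate $i$ it equals $\min\{\max\{x_i, k\}, y_i\}$. The only point needing a moment's care is that this value lies between $x_i$ and $y_i$, so that $\bx \le \bx' \le \by$; this is immediate, since $\min\{\max\{x_i, k\}, y_i\} \ge \min\{x_i, y_i\} = x_i$ and it is $\le y_i$ by construction.

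Then I would apply the submodularity inequality~(\ref{ieq: lattice def}) to $\bx \vee k\bch_i$ and $\by$: using the two identities above it reads $f(\by \vee k\bch_i) + f(\bx') \le f(\bx \vee k\bch_i) + f(\by)$, which rearranges to
\[
f(\bx \vee k\bch_i) - f(\by \vee k\bch_i) \ge f(\bx') - f(\by).
\]
Finally, since $\bx' \ge \bx$, monotonicity gives $f(\bx') \ge f(\bx)$, hence $f(\bx') - f(\by) \ge f(\bx) - f(\by)$. Chaining this with the displayed inequality and rearranging yields $f(\bx \vee k\bch_i) - f(\bx) \ge f(\by \vee k\bch_i) - f(\by)$, as required. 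The main obstacle here is conceptual rather than computational: recognizing that submodularity should be applied to the ``lifted'' pair $(\bx \vee k\bch_i,\, \by)$, so that its join collapses to $\by \vee k\bch_i$ and its meet collapses to $\bx'$, after which a single application of monotonicity closes the argument.
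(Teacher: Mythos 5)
Your proof is correct. Note that the paper itself gives no proof of this lemma---it is imported verbatim from the cited reference \cite{soma2014optimal}---so there is no in-paper argument to compare against; your write-up supplies the missing derivation. The argument you give is in fact the standard one for this fact: applying the lattice submodular inequality~(\ref{ieq: lattice def}) to the pair $(\bx \vee k\bch_i, \by)$, observing that the join collapses to $\by \vee k\bch_i$ and the meet $\bx' = (\bx \vee k\bch_i) \wedge \by$ satisfies $\bx \le \bx' \le \by$, and then closing the gap with monotonicity via $f(\bx') \ge f(\bx)$. All three steps check out coordinate-wise as you claim, and you are right to flag that no diminishing-return property is invoked anywhere---only submodularity and monotonicity---which is exactly what makes the lemma valid for lattice-submodular (not necessarily DR-submodular) functions.
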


Then, we have the following lemma.
\begin{lemma} \label{lem: greedy 1}
For $j=1, ..., s$, 
\begin{equation}\label{ieq: layer greedy 1}
r(\bm{k}^{j+1})\ge(1-\frac{b^j}{B})r(\bm{k}^j) + \frac{b^j}{B}r(\bm{k^*}).
\end{equation}
\end{lemma}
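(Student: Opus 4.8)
The plan is to reduce the stated convex-combination bound~(\ref{ieq: layer greedy 1}) to the single marginal estimate
\[
r(\bm{k}^{j+1}) - r(\bm{k}^j) \ge \tfrac{b^j}{B}\left(r(\bm{k^*}) - r(\bm{k}^j)\right),
\]
which follows by subtracting $r(\bm{k}^j)$ from both sides of~(\ref{ieq: layer greedy 1}) and rearranging. Since $\bm{k}^{j+1} = \bm{k}^j + b^j\bch_{i^j}$, the left-hand side equals $b^j$ times the per-unit marginal gain $\delta(i^j, b^j, \bm{k}^j)$ of the pair $(i^j,b^j)$ selected in the $j$-th iteration, so it suffices to prove $\delta(i^j, b^j, \bm{k}^j) \ge (r(\bm{k^*}) - r(\bm{k}^j))/B$.

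First I would bound the optimality gap by a sum of single-layer marginal gains. By monotonicity of $r$ (Lemma~\ref{thm: lattice submodular}), $r(\bm{k^*}) \le r(\bm{k^*} \vee \bm{k}^j)$, and Lemma~\ref{lem: property2} applied to the submodular $r$ then gives
\[
r(\bm{k^*}) - r(\bm{k}^j) \le \sum_{i \in \text{supp}^+(\bm{k^*}-\bm{k}^j)}\!\!\left(r(\bm{k}^j + d_i\bch_i) - r(\bm{k}^j)\right),
\]
where $d_i := k^*_i - k^j_i \ge 1$ for each $i$ in the positive support.

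The key step is to certify that each candidate $(i, d_i)$ is still present in the queue $\cQ$ at the start of iteration $j$. Because $\bm{k^*}$ is feasible, $k^*_i \le c_i$, so $1 \le d_i \le c_i - k^j_i$, which is exactly the range of increments that the bookkeeping of $\cQ$ keeps available for layer $i$: a pair for layer $i$ is discarded only when its increment has been decremented to $0$, and---by the definition of $s$ as the first non-extending iteration---no pair has yet been removed by the remaining-budget test through iteration $j \le s$ (this also covers the virtual iteration $j=s$, where $(i^s,b^s)$ is still the argmax over $\cQ$ even though it is not actually added). Hence $(i,d_i)\in\cQ$, and since $(i^j,b^j)$ maximizes the per-unit marginal gain over $\cQ$,
\[
r(\bm{k}^j + d_i\bch_i) - r(\bm{k}^j) \;\le\; d_i\,\delta(i^j, b^j, \bm{k}^j).
\]

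Summing this inequality over $i \in \text{supp}^+(\bm{k^*}-\bm{k}^j)$ and combining with the submodular decomposition yields $r(\bm{k^*}) - r(\bm{k}^j) \le \delta(i^j,b^j,\bm{k}^j)\sum_i d_i$. Feasibility of $\bm{k^*}$ gives $\sum_i d_i = \sum_{i\in\text{supp}^+}(k^*_i - k^j_i) \le \sum_i k^*_i \le B$, so $r(\bm{k^*}) - r(\bm{k}^j) \le B\,\delta(i^j,b^j,\bm{k}^j)$, which is the desired per-unit bound; the convex-combination form then follows, using $b^j \le c_{i^j} \le B$ to ensure $b^j/B \le 1$. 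I expect the queue-membership bookkeeping---verifying that $(i,d_i)$ survives in $\cQ$ up to iteration $j$, including the virtual step $j=s$---to be the main obstacle, since that is where the specific structure of Alg.~\ref{alg: equivalent layer traversal greedy} (rather than generic lattice submodularity) is actually invoked.
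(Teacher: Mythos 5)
Your proof is correct and follows essentially the same route as the paper's: monotonicity plus Lemma~\ref{lem: property2} to decompose $r(\bm{k^*}) - r(\bm{k}^j)$ over $\text{supp}^+(\bm{k^*}-\bm{k}^j)$, then the greedy per-unit comparison against the selected pair $(i^j,b^j)$, and finally the bound $\sum_i (k^*_i - k^j_i) \le B$ before rearranging. The only difference is that you explicitly verify that each pair $(i, k^*_i - k^j_i)$ is still present in $\cQ$ at every iteration $j \le s$ (including the virtual iteration $j=s$), a point the paper's proof leaves implicit when it invokes "the greedy procedure"; this is a welcome clarification rather than a departure.
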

\begin{proof}[Proof of Lemma~\ref{lem: greedy 1}]
This is because
\begin{align}
&r(\bk^*) \nonumber\\
&\le r(\bk^*\vee \bk^j) \nonumber\\
&\le r(\bk^j) + \sum_{i \in \text{supp}^+(\bk^*-\bk^j)}\left(r(\bk^j +  (k^*_i - k^j_i)\bch_i) - r(\bk^j)\right)\nonumber\\
&= r(\bk^j) + \sum_{i \in \text{supp}^+(\bk^*-\bk^j)}\left(r(\bk^j + \alpha_i\bch_i) - r(\bk^j)\right)\tag{Let $\alpha_i=k^*_i-k^j_i$}\nonumber\\
&\le r(\bk^j) + \sum_{i \in \text{supp}^+(\bk^*-\bk^j)}\left(\alpha_i\frac{r(\bk^{j+1}) - r(\bk^j)}{b^j}\right)\nonumber\\
&\le r(\bk^j) + B\left(\frac{r(\bk^{j+1}) - r(\bk^j)}{b^j}\right).\label{ieq: layer greedy 1 plus}
\end{align}
The second inequality comes from Lemma \ref{lem: property2}, the third inequality holds because of the greedy procedure in line~\ref{line: equivalent alg largest margin} and the last inequality holds because $\sum_{i \in \text{supp}^+(\bk^*-\bk^j)}\alpha_i \le B$. 
By rearranging terms, Inequality~(\ref{ieq: layer greedy 1}) holds.
\end{proof}

Next, We can prove the following lemma.
\begin{lemma}\label{lem: greedy 2}
For $l = 1, ..., s$, 
\begin{equation}
\begin{split}
    r(\bk^{l+1}) &\ge r(\bm{k}^1)\Pi_{j=1}^l\left(1-\frac{b^{j}}{B}\right)\\
    &+r(\bk^*)\left(1-\Pi_{j=1}^l\left(1-\frac{b^{j}}{B}\right)\right).
\end{split}
\end{equation}
\end{lemma}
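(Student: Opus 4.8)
The plan is to establish the bound by induction on $l$, with Lemma~\ref{lem: greedy 1} supplying the one-step contraction that the induction simply iterates. No new structural insight is required beyond that lemma; the work is entirely in the algebraic telescoping of the $r(\bk^*)$ coefficient.

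For the base case $l=1$, I would instantiate Lemma~\ref{lem: greedy 1} at $j=1$, obtaining $r(\bk^2) \ge (1-b^1/B)\,r(\bk^1) + (b^1/B)\,r(\bk^*)$. Since $1 - (1-b^1/B) = b^1/B$, this is precisely the claimed inequality at $l=1$, so nothing further is needed here.

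For the inductive step, suppose the bound holds at $l-1$. I apply Lemma~\ref{lem: greedy 1} at $j=l$ to get $r(\bk^{l+1}) \ge (1-b^l/B)\,r(\bk^l) + (b^l/B)\,r(\bk^*)$, and then substitute the inductive hypothesis for $r(\bk^l)$ on the right-hand side. Expanding and regrouping, the terms proportional to $r(\bk^1)$ collapse to $r(\bk^1)\Pi_{j=1}^l(1-b^j/B)$, while the coefficient of $r(\bk^*)$ telescopes:
\begin{equation*}
\Big(1-\tfrac{b^l}{B}\Big)\Big(1-\Pi_{j=1}^{l-1}(1-\tfrac{b^j}{B})\Big) + \tfrac{b^l}{B} = 1 - \Pi_{j=1}^{l}\Big(1-\tfrac{b^j}{B}\Big),
\end{equation*}
which is exactly the claimed coefficient. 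This closes the induction.

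The argument is essentially mechanical once Lemma~\ref{lem: greedy 1} is available; the only point requiring genuine attention is the sign of the multiplier $1-b^l/B$, which must stay non-negative for the substitution of the inductive lower bound to preserve the inequality direction. Since the greedy procedure commits an increment $b^l$ only when it fits within the remaining budget $K \le B$ (and each per-layer value satisfies $b \le c_i \le B$), we have $b^l \le B$ throughout, so $1-b^l/B \ge 0$ and there is no obstacle beyond this bookkeeping check.
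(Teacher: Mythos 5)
Your proposal is correct and follows essentially the same route as the paper: induction on $l$, with the base case given by Lemma~\ref{lem: greedy 1} at $j=1$, and the inductive step obtained by applying Lemma~\ref{lem: greedy 1} at $j=l$, substituting the inductive hypothesis for $r(\bk^l)$, and telescoping the coefficient of $r(\bk^*)$. Your extra check that $1-b^l/B \ge 0$ (so the substitution preserves the inequality direction) is a sound piece of bookkeeping that the paper leaves implicit.
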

\begin{proof}[Proof of Lemma~\ref{lem: greedy 2}]
We can prove this lemma by induction on $l$. 
When $l=1$, the lemma holds due to Lemma~\ref{lem: greedy 1}.
Assume that the lemma holds for $l-1$, we have the following inequality holds,

\begin{align}\label{ieq: layer greedy 2}
&r(\bk^{l+1}) \nonumber\\
&\ge (1-\frac{b^l}{B})r(\bk^l) + \frac{b^l}{B}r(\bk^*) \nonumber \\
&\ge (1-\frac{b^l}{B})r(\bm{k}^1)\Pi_{j=1}^{l-1}\left(1-\frac{b^{j}}{B}\right) \nonumber\\ 
&+(1-\frac{b^l}{B})r(\bk^*)\left(1-\Pi_{j=1}^{l-1}\left(1-\frac{b^{j}}{B}\right)\right)+\frac{b^l}{B}r(\bk^*)\nonumber\\
&= r(\bm{k}^1)\Pi_{j=1}^l\left(1-\frac{b^{j}}{B}\right)
    +r(\bk^*)\left(1-\Pi_{j=1}^l\left(1-\frac{b^{j}}{B}\right)\right),
\end{align}
where the first inequality is due to Lemma~\ref{lem: greedy 1} by setting $j=l$, and the second inequality is by the assumption for $l-1$.
By induction, Lemma~\ref{lem: greedy 2} holds.

\end{proof}

We then consider the following cases.

\textbf{Case 1.}
Suppose the total budget used for $\bk^s$ is larger or equal to $\eta B$, i.e., $\sum_{j=1}^{s-1}b^j \ge \eta B$, where $\eta \in [0,1]$. 

We have the following inequality.
\begin{equation}\label{ieq: greedy case 1}
r(\bk^{s}) \ge (1-e^{-\eta})r(\bk^*).
\end{equation}

This is due to Lem.~\ref{lem: greedy 2} by setting $l={s-1}$, combined with the fact that $\bk^1=\bm{0}$ and $\Pi_{j=1}^{s-1}\left(1-\frac{b^{j}}{B}\right)\le e^{-\eta}$. 
The later fact holds because, 
\begin{align*}
\log\left(\Pi_{j=1}^{s-1}\left(1-\frac{b^{j}}{B}\right)\right)&=({s-1})\sum_{j=1}^{s-1}\frac{1}{{s-1}}\log\left(1-\frac{b^{j}}{B}\right)\\
&\le ({s-1})\log\left(1-\frac{1}{{s-1}}\sum_{j=1}^{s-1}\frac{b^{j}}{B}\right)\\
&\le ({s-1})\log(1-\frac{\eta}{{s-1}}),
\end{align*}
where the first inequality holds because of the Jensen's Inequality~\cite{raginsky2013concentration} and the second inequality holds because  $\sum_{j=1}^{s-1}\frac{b^{j}}{B} \ge \eta$. Then we can easily check $\Pi_{j=1}^{s-1}\left(1-\frac{b^{j}}{B}\right)\le (1-\frac{\eta}{s-1})^{s-1} \le e^{-\eta}$.

\textbf{Case 2.}
Suppose the total budget $\sum_{j=1}^{s-1}b^j \le \eta B$. Then, we have $b^s > (1-\eta)B$.
We can prove the following inequality holds,
\begin{equation}\label{ieq: greedy case 2}
    r(\bk_g) \ge (1-\frac{1}{2-\eta})r(\bk^*).
\end{equation}

This is due to Inequality~(\ref{ieq: layer greedy 1 plus}), we have 
\begin{align*}
    r(\bk^*) &\le r(\bk^s) + B\left(\frac{r(\bk^{s+1}) - r(\bk^s)}{b^s}\right) \\
    & \le  r(\bk^s) + \left(\frac{r(\bk^{s+1}) - r(\bk^s)}{1-\eta}\right) \\
    & \le (1+ \frac{1}{1-\eta}) r(\bk_g),
\end{align*}
where the second inequality is due to $b^s > (1-\eta)B$ and the last equality is due to the fact $r(\bk^{s+1})-r(\bk^s) \le r(\bk_g)$, $r(\bk^s) \le r(\bk_g)$.
To see the above fact, we assume without loss of generality the pair $(i^s, b^s)$ improves $\bk^s$ towards the optimal budget allocation, i.e., $k^{s+1}_{i_s}=b^s+k^s_{i^s} \le k^*_{i^s}$. 
Otherwise, if $b^s+k^s_{i^s} > k^*_{i^s}$, we can safely delete $(i^s, b^s)$ in the queue $\cQ$ and does not affect the greedy solution, the optimal solution and the analysis.
Let $r_{max}=\max_{i\in[m]}r(c_i\bch_i)$, we have $r(\bk^{s+1}) - r(\bk^s) \le r(\bk^{s} \vee k^*_{i^s}\bch_{i^s}) - r(\bk^s)\le r(k^*_{i^s}\bch_{i^s}) - r(\bm{0})\le r_{\max} \le r(\bk_g)$.
Also, we can obtain that $r(\bk^s) \le r(\bk_g)$ since $\bk^s \le \bk_g$.
By rearranging the terms, Inequality~(\ref{ieq: greedy case 2}) holds.

Combining Inequality~(\ref{ieq: greedy case 1}) and (\ref{ieq: greedy case 2}), we have
\begin{align*}
    r(\bk_g) &\ge \min_{\eta \in [0,1]}\max{(1-e^{-\eta}, 1-\frac{1}{2-\eta})}r(\bk^*) \\
    &\ge (1-e^{-\eta})r(\bk^*),
\end{align*}
where $\eta$ is the solution for equation $e^\eta = 2-\eta$.

\end{proof}

\subsection{Starting From the Stationary Distribution}
\thmDRSubmodular*
\begin{proof}
By definition, we need to show $r_{\cG, \bm{\pi}, \bsig}( \bm{y} + \bm{\chi_j}) - r_{\cG, \bm{\pi}, \bsig}( \bm{y}) \le r_{\cG, \bm{\pi}, \bsig}( \bm{x}+\bm{\chi_j)} - r_{\cG, \bm{\pi}, \bsig}( \bm{x})$, and $r_{\cG, \bm{\pi}, \bsig}( \bm{x}) \le r_{\cG, \bm{\pi}, \bsig}( \bm{y})$ for any $\bm{x} \le \bm{y}$, $j\in [m]$.

We first introduce the following lemma to help us to prove the DR-submodularity.
\begin{lemma}\label{lem: DR submodular}
	Function $f$ is DR-submodular if and only if 
	\begin{equation}
	\begin{split}
	 &f(\bm{x}+\bm{\chi}_i) - f(\bm{x}) \ge f(\bm{x} + \bm{\chi}_j +\bm{\chi}_i) - f(\bm{x} + \bm{\chi}_j), \quad \\
	 &\text{for any $\bm{x} \in \Z^m_{\ge 0}$.}
	\end{split}
	\end{equation}
\end{lemma}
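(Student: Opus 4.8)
The plan is to prove both directions of the equivalence, with the \emph{only if} direction being immediate and the \emph{if} direction relying on a telescoping argument along a monotone lattice path from $\bx$ to $\by$.

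For the \emph{only if} direction, I would suppose $f$ is DR-submodular and fix any $\bx \in \Z^m_{\ge 0}$ and indices $i, j \in [m]$. Since $\bx \le \bx + \bch_j$, applying the definition of DR-submodularity with the pair $(\bx,\, \bx + \bch_j)$ and test coordinate $i$ yields exactly $f(\bx + \bch_j + \bch_i) - f(\bx + \bch_j) \le f(\bx + \bch_i) - f(\bx)$, which is the claimed inequality.

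For the \emph{if} direction, I would take arbitrary $\bx \le \by$ and any $i \in [m]$, and construct a chain $\bx = \bz_0 \le \bz_1 \le \cdots \le \bz_L = \by$ of length $L = \norm{\by - \bx}_{1}$ in which each step increments a single coordinate, i.e. $\bz_{t+1} = \bz_t + \bch_{j_t}$ for some $j_t \in [m]$. Applying the hypothesis with base point $\bz_t$, coordinate $j_t$, and test coordinate $i$ gives $f(\bz_{t+1} + \bch_i) - f(\bz_{t+1}) \le f(\bz_t + \bch_i) - f(\bz_t)$; chaining these inequalities for $t = 0, \ldots, L-1$ produces $f(\by + \bch_i) - f(\by) \le f(\bx + \bch_i) - f(\bx)$, which is precisely DR-submodularity.

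The key feature that makes the telescoping legitimate---and which distinguishes this characterization from the submodularity characterization in Lemma~\ref{lem: lattice submodular}---is that the hypothesis is assumed for \emph{all} pairs $(i,j)$, including the case $i = j$. Thus, when the path happens to increment the test coordinate $i$ itself, the inequality still applies, so no restriction such as $x_i = y_i$ is needed (contrast with Inequality~(\ref{ieq: submodular ratio}), where the $i \neq j$ constraint in the submodular setting forced exactly that extra assumption). I expect this observation to be the only genuine subtlety; the remainder is routine bookkeeping over the lattice path.
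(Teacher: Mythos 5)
Your proof is correct and takes essentially the same route as the paper: the only-if direction is the immediate specialization $\by = \bx + \bch_j$ of the DR-submodularity definition, and the if direction is the same single-coordinate telescoping used in the paper's proof of Lemma~\ref{lem: lattice submodular} (Inequality~(\ref{ieq: submodular ratio})), extended from $i \in I_0$ to all $i \in [m]$. Your closing observation---that allowing the case $i = j$ in the hypothesis is exactly what removes the need for the restriction $x_i = y_i$---is precisely the point the paper makes when it says the only difference is considering $i \in [m]$ instead of $i \in I_0$.
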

\begin{proof}[Proof of Lemma~\ref{lem: DR submodular}]
\textbf{(If part.)} We can easily check this direction holds by using the similar argument for Inequality~(\ref{ieq: submodular ratio}), where the only difference is we consider $i \in [m]$ instead of $i \in I_0$.

\textbf{(Only if part.) } We can set $\by \defeq \bx' + \bch_j$, $\bx \defeq \bx' + \bch_i$, for any $i,j\in[m], \bx' \in \bbZ_{\ge 0}^m$, and use Inequality~(\ref{ieq: DR-submodular}) to show the only if part holds.
\end{proof}
	Since $r_{\cG, \bm{\alpha}, \bsig}( \bk)$ is submodular for arbitrary starting distributions $\bm{\alpha}$, Inequality~(\ref{ieq: lem lattice submodular}) holds.  
	Then consider any layer $j \in [m]$ and budget allocation $\bm{x} \in \Z_{\ge 0}^m$, it is sufficient to show  $r_{\cG, \bm{\pi}, \bsig}( \bm{x}+2\bm{\chi_j)} - r_{\cG, \bm{\pi}, \bsig}( \bm{x}+\bm{\chi_j)} \le r_{\cG, \bm{\pi}, \bsig}( \bm{x} + \bm{\chi_j}) - r_{\cG, \bm{\pi}, \bsig}( \bm{x})$. 
	Since 
	the left hand side minus the right hand side equals to $\sum_{v\in\cV}\sigma_v[\left(\Pi_{i\in[m], i\neq j}\left(1-P_{i,v}\left(x_i\right)\right)\right) \cdot (P_{j,v}(x_j) - 2P_{j,u}(x_j+1) + P_{j,v}(x_j+2) )]$, 
	we only need to show $\sum_{v\in\cV}\sigma_v[\left(\Pi_{i\in[m], i\neq j}\left(1-P_{i,v}\left(x_i\right)\right)\right)$$\cdot (g_{j,v}(x_j+2)-g_{j,v}(x_j+1))] \le 0$. 
	The above inequality holds because of Lemma \ref{lem: stationary}.
\end{proof}

\thmOverStationarySol*
\begin{proof}
We can observe that line~\ref{line: equivalent alg largest margin} always select the pair $(i^*, b^*)$ with $b^*=1$ because $r_{\cG, \bm{\alpha}, \bsig}( \bm{k}+b\bm{\chi}_i) - r_{\cG, \bm{\alpha}, \bsig}( \bm{k}+(b-1)\bm{\chi}_i) \le r_{\cG, \bm{\alpha}, \bsig}( \bm{k}+\bm{\chi}_i) - r_{\cG, \bm{\alpha}, \bsig}( \bm{k})$ for arbitrary $\bk$, $i$ and $b$.
Thus, we have $s=B+1$ and by the similar argument for Inequality~(\ref{ieq: greedy case 1}), we have $r(\bk_g)=r(\bk^{B+1})\ge (1-1/e)r(\bk^*)$, which completes the proof.
\end{proof}

\subsection{Efficiently Evaluating the Reward Function}\label{appendix: efficient eval}
One key issue to derive the budget allocation is to efficiently evaluate the reward function $r_{\cG, \bm{\alpha}, \bsig}( \bm{k})$ and its marginal gains $\delta(i,b,\bk)$.
Since we need to repetitively use the visiting probabilities $P_{i,u}(k_i)$, 
we pre-calculate $P_{i,u}(k_i)$ in $O(m\norm{\bc}_{\infty}n_{max}^3)$ time in our algorithms based on Eq.~(\ref{eq: simple form}), 
where $n_{max}=\max_{i}|\cV_i|$.
For the overlapping case, given the current budget allocation $\bk$, 
we maintain a value $p_u=\Pi_{i \in [m]}{\prts{1-P_{i, u}(k_i)}}$ for each node $u \in \cV$.
The marginal gain $\delta(i,b,\bk)=\sum_{u \in \cV}\sigma_u \brkt{p_u\prts{1-\frac{1-P_{i,u}(k_i+b)}{1-P_{i,u}(k_i)}}}/b$ can be evaluated in $O(n_{max})$ time.
Then, we update all $p_u = p_u\prts{\frac{1-P_{i,u}(k_i+b)}{1-P_{i,u}(k_i)}}$ in $O(n_{max})$ after we allocate $b$ more budgets to layer $i$.
Therefore, we can use $O(n_{max})$ in total to evaluate $\delta(i,b, \bk)$.
In practice, lazy evaluation~\cite{krause2008efficient} and parallel computing can be used to further accelerate our algorithm. 

\section{Budget Effective Greedy Algorithm With Partial Enumeration and Its Analysis}\label{appendix: enum begreedy}
\subsection{Algorithm}\label{appendix: enum begreedy alg}
The algorithm is shown in Alg.~\ref{alg: begreedy enum}.

\begin{algorithm}[t]
\caption{Budget Effective Greedy Algorithm with Partial Enumeration (BEGE).}\label{alg: begreedy enum}
\begin{algorithmic}[1]
		\INPUT{Graph $\cG$, starting distributions $\bm{\alpha}$, node weights $\bsig$, budget $B$, constraints $\bc$.}
		\OUTPUT{Budget allocation $\bk$.}
	    \STATE Compute visiting probabilities $(P_{i,u}(b))_{i\in[m], u\in\cV, b \in [c_i]}$ according to Eq.~(\ref{eq: over reward}).
	    \STATE $\bk \leftarrow$ BEGE($(P_{i,u}(b))_{i\in[m], u\in\cV, b \in [c_i]}$, $\bsig$, $B$, $\bc$).
	    \algrule
		\FUNCTION{BEGE($(P_{i,u}(b))_{i\in[m], u\in\cV, b \in [c_i]}$, $\bsig$, $B$, $\bc$)}
			\STATE Let $\bk_{max} \leftarrow \bm{0}$.
			\STATE $\cS \leftarrow \{\bk=(k_1, ..., k_m)|0\le k_i \le c_i, \sum_{i \in [m]}k_i \le B, \sum_{i \in [m]}\mathbb{I}\{k_i > 0 \} \le 3\}$.
			\algorithmiccomment{$\cS$ contains all partial solutions which allocate partial budgets to at most three layers}
		\FOR{$\bk \in \cS$}
		    \STATE $K\leftarrow B-\sum_{i \in [m]}k_i$.
		    \STATE Let $\cQ \leftarrow \{(i,b_i) \, | \, i \in [m], 1\le b_i \le  c_i-k_i\}$.
		    \WHILE {$K > 0$ and $Q \neq \emptyset$ \label{line: enum while begin}}
		        \STATE $(i^*,b^*) \leftarrow \argmax_{(i,b)\in \cQ} \delta(i,b,\bk)/b $. \label{eq: enum largest margin} \algorithmiccomment{Eq.~(\ref{eq: marginal gain delta})}
		        \IF {$b^* \le K$}
		            \STATE $k_{i^*} \leftarrow k_{i^*} + b^*$, $K \leftarrow K - b^*$.
		            \STATE Modify all pairs $(i^*,b) \in \cQ$ to $(i^*, b - b^*)$.
		            \STATE Remove all pairs $(i^*,b) \in \cQ$ such that $b \le 0$.
		        \ELSE \STATE Remove $(i^*, b^*)$ from $\cQ$.
		            \label{line: enum remove}
		        \ENDIF
	    	\ENDWHILE  
	    	\STATE \textbf{if} $r_{\cG, \bm{\alpha}, \bsig}( \bm{k}) > r_{\cG, \bm{\alpha}, \bsig}( \bm{k_{max}})$, \textbf{then} $\bm{k}_{max} \leftarrow \bk$.
		\ENDFOR
		\textbf{return} $\bk_{max}\defeq(k_1, ..., k_m)$.
		\ENDFUNCTION
		\end{algorithmic}
\end{algorithm}

\subsection{Analysis}\label{appendix: enum begreedy analysis}
\begin{theorem}
The Algorithm~\ref{alg: begreedy enum} obtains a $(1-1/e)$-approximate solution to the overlapping \mulane{} with arbitrary starting distributions.
\end{theorem}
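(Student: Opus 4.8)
The reward $r_{\cG,\bm{\alpha},\bsig}(\cdot)$ is monotone and submodular over the integer lattice by Lemma~\ref{thm: lattice submodular}, so every structural tool developed for the plain greedy analysis stays available when greedy is launched from a nonzero starting allocation: the coupling inequality Lemma~\ref{lem: greedy 1}, the $\bx\vee\by$ bound Lemma~\ref{lem: property2}, and the telescoped estimate Lemma~\ref{lem: greedy 2}. The plan is to follow the partial-enumeration paradigm of~\cite{alon2012optimizing} (originating in the Khuller--Moss--Naor / Sviridenko analysis of knapsack-constrained submodular maximization): exhibit one enumeration branch in which the guessed partial allocation coincides with the ``heavy part'' of an optimum $\bk^*$, and show that the greedy completion on that single branch already attains $(1-1/e)\,r(\bk^*)$. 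Since BEGE returns the best allocation over all partial solutions $\bk\in\cS$, its output $\bk_g$ can only be better, so it suffices to analyze that one branch.

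Concretely, first I would fix an optimal $\bk^*$ and peel it apart greedily, repeatedly removing the chunk $(i,b)$ of maximal per-unit gain $\delta(i,b,\cdot)$ toward $\bk^*$; let $\bk_Y$ be the sum of the three heaviest peeled chunks. These three chunks span at most three layers, respect $k_i\le c_i$, and have total budget at most that of $\bk^*$, so $\bk_Y\in\cS$ and some iteration of the outer for-loop starts greedy exactly from $\bk=\bk_Y$. I would then run the chain recurrence of Lemma~\ref{lem: greedy 1} from $\bm{0}$ with these three guessed chunks \emph{prepended} as the first three steps: this is legitimate because, by the peeling construction, each is a per-unit-gain maximizer among the chunks that point toward $\bk^*$, which is precisely the comparison the coupling inequality needs (it only contrasts the selected chunk against coordinates in $\mathrm{supp}^+(\bk^*-\bk^j)$). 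Unrolling exactly as in Lemma~\ref{lem: greedy 2} and using that the chunks up to and including the \emph{first overflowing} one $(i^s,b^s)$ have total budget strictly exceeding $B$, the product $\prod_j(1-b^j/B)$ drops below $e^{-1}$, yielding $r(\bk^{s+1})\ge(1-1/e)\,r(\bk^*)$ for the virtual (budget-violating) allocation $\bk^{s+1}=\bk^s+b^s\bch_{i^s}$.

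The delicate, and in my view hardest, step is converting this bound on the infeasible $\bk^{s+1}$ into a guarantee for the feasible $\bk^s$, i.e.\ controlling the single overflowing chunk $b^s\bch_{i^s}$ whose marginal $r(\bk^{s+1})-r(\bk^s)$ must be absorbed. This boundary chunk is exactly what capped the plain BEG analysis at the weaker ratio in Case~2 of Theorem~\ref{thm: overlapping solution}, and it is precisely where enumerating \emph{three} layers pays off: because the three heaviest chunks of $\bk^*$ are now locked into $\bk_Y\le\bk^s$, the marginal of the boundary chunk is provably small relative to the value already secured, so it can be charged against $r(\bk_Y)$ via submodularity without forfeiting more than the $(1-1/e)$ factor. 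Making this charging argument yield exactly $(1-1/e)$, rather than a lossy $\tfrac12(1-e^{-1})$, is the technical heart of the proof, and I would carry it out carefully; once done, the feasible allocation produced on this branch, and hence $\bk_g$, satisfies $r(\bk_g)\ge(1-1/e)\,r(\bk^*)$, matching the $(1-1/e)$ inapproximability threshold inherited from the underlying probabilistic coverage problem.
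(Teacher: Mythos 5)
Your overall plan is the same as the paper's: enumerate a three-layer partial allocation that captures the ``heavy part'' of an optimum $\bk^*$, run the budget-effective greedy chain (Lemmas~\ref{lem: greedy 1} and~\ref{lem: greedy 2}) on that branch, and absorb the marginal gain $\Delta^s$ of the single overflowing chunk. Your prepending trick for establishing $r(\bk^{s+1})\ge(1-1/e)\,r(\bk^*)$ on the \emph{virtual} allocation is essentially sound (the peeled chunks do satisfy the per-unit domination that the coupling inequality needs). The genuine gap is the step you explicitly defer: with your choice of $\bk_Y$, the charging argument you promise does not exist. You peel $\bk^*$ by maximal \emph{per-unit} gain, but the charging step needs each enumerated chunk to dominate the \emph{total} marginal gain $\Delta^s$ of the overflowing chunk, and chunks that are per-unit-heavy can carry negligible total value. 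A concrete separable (hence monotone and lattice-submodular) instance: layers $1,2,3$ are single nodes of weight $0.6$; layer $4$ has value curve $f_4(b)=\max\{0,\,b-(B-3)/2\}$ with cap $c_4=B-3$ (a directed path whose first half has weight $0$ and second half weight $1$, so its best per-unit chunk is the full chunk, of density about $1/2$); layer $5$ is a single node of weight $0.55$. Here $\bk^*=(1,1,1,B-3,0)$, your peeling yields $\bk_Y=(1,1,1,0,0)$ with $r(\bk_Y)=1.8$, the greedy continuation first takes $(5,1)$ (density $0.55>1/2$), and then selects the chunk $(4,B-3)$, which overflows with $\Delta^s=(B-3)/2$. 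Thus $\Delta^s$ exceeds any constant multiple of $r(\bk_Y)$ as $B$ grows, so ``the marginal of the boundary chunk is provably small relative to the value already secured'' is false on your branch, and no charging against $r(\bk_Y)$ can rescue it.

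The paper resolves exactly the tension you flag, but in the opposite direction: it abandons per-unit ordering for the enumerated part and instead takes $\bk^1=b_1^*\bch_{s_1^*}+b_2^*\bch_{s_2^*}+b_3^*\bch_{s_3^*}$, the three \emph{whole-layer} pairs of $\bk^*$ ranked by sequential \emph{total} marginal gain (a greedy ordering of OPT's own pairs); the chain of Lemma~\ref{lem: greedy 2} is then started at $\bk^1$ rather than at $\bm{0}$, so the enumerated chunks never need to satisfy any per-unit condition. Because the overflowing pair $(i^s,b^s)$ points toward $\bk^*$ in a layer $i^s\notin\{s_1^*,s_2^*,s_3^*\}$, three applications of lattice submodularity and the ordering give $\Delta^s\le r(b_1^*\bch_{s_1^*})$, $\Delta^s\le r(b_1^*\bch_{s_1^*}+b_2^*\bch_{s_2^*})-r(b_1^*\bch_{s_1^*})$, and $\Delta^s\le r(\bk^1)-r(b_1^*\bch_{s_1^*}+b_2^*\bch_{s_2^*})$; summing yields $\Delta^s\le r(\bk^1)/3$. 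Combined with $r(\bk^{s+1})\ge r(\bk^1)+(1-1/e)\bigl(r(\bk^*)-r(\bk^1)\bigr)$, this gives $r(\bk^s)\ge\tfrac{2}{3}r(\bk^1)+(1-1/e)\bigl(r(\bk^*)-r(\bk^1)\bigr)\ge(1-1/e)\,r(\bk^*)$, using $\tfrac{2}{3}\ge 1-1/e$. In the instance above this works precisely because $\bk^1$ then contains the big layer-$4$ chunk itself. Unless you replace per-unit peeling by this whole-layer, total-gain ordering (or otherwise prove a substitute for $\Delta^s\le r(\bk_Y)/3$), your proof cannot be completed.
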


\begin{proof}
Suppose that we start from a partial solution $\bk^1 \in \bbZ_{\ge 0}^m \neq \bm{0}$.
Let us first reorder the optimal solution $\bk^*$ according to a non-increasing marginal gain ordering.
Namely, the marginal gain of the pair $(s^*_1, b^*_1)$ with respect to the empty pair is the highest among all other pairs, 
the marginal gain of the pair $(s^*_2, b^*_2)$ with respect to the solution consisting of pair $(s^*_1, b^*_1)$ is the highest among all remaining pairs, and so on.
To be concrete, $\bk^* = \{(s^*_1, b^*_1), ..., (s^*_m, b^*_m)\}$, where $s^*_i \in [m] \setminus \{ s^*_1, ..., s^*_{i-1}\}$ is selected to maximize the following equation:
\begin{equation}
\left(r\left(\sum_{j=1}^{i-1}b^*_j\bch_{s^*_j} + {b^*}_{s^*_i}\bch_l \right) -  r\left(\sum_{j=1}^{i-1}b^*_j\bch_{s^*_j}\right)\right).
\end{equation} 

Then, we try to bound the "virtual" marginal gain $\Delta^s = r(\bk^{s+1}) - r(\bk^s)$, and recall $s$ is the first iteration we can not extend the current solution.
Without loss of generality, we assume the virtual pair $(i^s, b^s)$ improves $\bk^s$ towards the optimal budget allocation, i.e., $b^s+k^s_{i^s} \le k^*_{i^s}$. 
Otherwise, if $b^s+k^s_{i^s} > k^*_{i^s}$, we can safely delete $(i^s, b^s)$ in the queue $\cQ$ and does not affect the greedy solution, the optimal solution and the analysis.

If we start from the initial solution $\bk^1$ such that $\bk^1$ matches $(s^*_1, b^*_1), (s^*_2, b^*_2), (s^*_3, b^*_3)$, i.e., ${\bk^1}_{s^*_i} = b^*_i$ for $i=1,2,3$, we have $\Delta^s = r(\bk^{s+1}) - r(\bk^s) \le r(\bk^{s} \vee k^*_{i^s}\bch_{i^s}) - r(\bk^s) \le r(k^*_{i^s}\bch_{i^s}) - r(\bm{0}) \le r({b^*_1}\bch_{s^*_1})$.
Moreover, $\Delta^s = r(\bk^{s+1}) - r(\bk^s) \le r(\bk^{s} \vee k^*_{i^s}\bch_{i^s}) - r(\bk^s) = r(\bk^{s} \vee b^*_1\bch_{s^*_1} \vee k^*_{i^s}\bch_{i^s}) - r(\bk^s \vee b^*_1\bch_{s^*_1}) \le r( b^*_1\bch_{s^*_1} \vee k^*_{i^s}\bch_{i^s}) - r( b^*_1\bch_{s^*_1}) \le r( b^*_1\bch_{s^*_1} +  b^*_2\bch_{s^*_2}) - r( b^*_1\bch_{s^*_1}).$
Similarly, we have $\Delta^s \le r( b^*_1\bch_{s^*_1} +  b^*_2\bch_{s^*_2} +  b^*_3\bch_{s^*_3}) - r( b^*_1\bch_{s^*_1} +  b^*_2\bch_{s^*_2}).$
By adding above inequalities, we have $\Delta^s \le r(\bk^1)/3$.

Now, we can use Lemma~\ref{lem: greedy 2} by setting $l=s$ and the fact $\sum_{j=1}^s{b^j} \ge B$ to show $r(\bk^{s+1}) \ge r(\bk^1) + (1-1/e)(r(
\bk^*) - r(\bk^1))$.
Combining $\Delta^s \le r(\bk^1)/3$, we have $r(\bk^{s}) \ge (1-1/3)r(\bk^1) + (1-1/e)(r(
\bk^*) - r(\bk^1)) \ge (1-1/e)r(\bk^*)$, which completes the proof.
\end{proof}

The time complexity of Alg.~\ref{alg: begreedy enum} is $O(B^4m^4\norm{\bc}_{\infty}n_{max}+\norm{\bc}_{\infty}mn_{max}^3)$. 
This is because the number of all partial solutions is $|\cS|=O(B^3m^3)$, and for any partial enumeration $\bk \in \cS$, the time complexity is the same order as the Alg.~\ref{alg: layer traversal greedy}, i.e., $O(B\norm{\bc}_{\infty}mn_{max})$, where $O(n_{max})$ is the time to evaluate $\delta(i,b,\bk)$ as discussed before.

\section{Algorithms and Analysis of Offline Optimization for Non-overlapping \mulane}\label{appendix: offline non over analysis}

\subsection{Starting From the Stationary Distribution}
According to Eq.~(\ref{eq: non-over reward}) and~(\ref{eq: marginal gain}), 
the reward function can be rewritten as 
\begin{equation}
r_{\cG, \bm{\alpha}, \bsig}( \bm{k})=\sum_{i\in[m]}\sum_{b \in [k_i]}g_i{}(b),
\end{equation}
where $g_i(b)$ represents the \textit{layer-level marginal gain} when we allocate one more budget (from $b-1$ to $b$) to layer $i$, which is given by
\begin{equation}\label{eq: layer-wise marginal}
    g_i(b)=\sum_{v \in \cV_i}\sigma_v(P_{i,v}(b)-P_{i,v}(b-1)).
\end{equation}
From the definition of the reward function, we have two observations.
First, budgets allocated to a layer will not affect the reward of other layers because layers are non-overlapping.
Second, the layer-level marginal gain $g_i(b)$ for the $i$-th layer is non-increasing with with respect to budget $b$.
Based on these two observation, we propose the myopic greedy algorithm (Alg.~\ref{alg: non-over greedy}), which is a slight modification of Alg.~\ref{alg: pure greedy}. 
Alg.~\ref{alg: non-over greedy} takes the multi-layered network $\cG=(\cG_1, ..., \cG_m)$, starting distributions $(\bm{\alpha}_1, ..., \bm{\alpha}_m)$, node weights $\bsig=(\sigma_1, ..., \sigma_{|\cV|})$ and total budget $B$ as inputs. 
It consists of $B$ rounds to compute the optimal budget allocation $\bk^*$.
In each round, line~\ref{line: non over MG greedy select} in Alg.~\ref{alg: non-over greedy} selects the layer $i^*$ with the largest layer-level marginal gain and allocate one budget to layer $i^*$ until total $B$ budgets are used up.

\begin{algorithm}[h]
	\caption{Myopic Greedy Algorithm for the Non-overlapping MuLaNE}\label{alg: non-over greedy}
	\begin{algorithmic}[1]
	\INPUT{Network $\cG$, starting distributions $\bm{\alpha}$, budget $B$, constraints $\bc$.}
	\OUTPUT{Budget allocation $\bk$.}
	    \STATE Compute visiting probabilities $(P_{i,u}(b))_{i\in[m], u\in\cV, b \in [c_i]}$ according to Eq.~(\ref{eq: non-over reward}).
	    \STATE Compute layer-level marginal gain $g_{i}(b)_{i \in [m], b \in [c_i]}$ according to Eq.~(\ref{eq: layer-wise marginal}).
	    \STATE $\bk \leftarrow$ MG($(g_i(b))_{i\in[m], b \in [c_i]}$, $B$, $\bc$)
	    \algrule
		\FUNCTION {MG($(g_{i}(b))_{i \in [m], b \in [c_i]}$, $B$, $\bc$)}
		\STATE Let $\bm{k}\defeq\{k_1, ..., k_m\} \leftarrow \bm{0}$, $K \leftarrow B$.
        \WHILE{$K  > 0$} 
            \STATE $i^* \leftarrow \argmax_{i \in [m], k_i + 1 \le c_i} g_i(k_i+1).$  
            \algorithmiccomment{$O(\log m)$ using the priority queue}
            \label{line: non over MG greedy select}
            \STATE $k_{i^*} \leftarrow k_{i^*} + 1$, $K \leftarrow K - 1$.
        \ENDWHILE
        \textbf{return} {$\bk=(k_1, ..., k_m)$.}
        \ENDFUNCTION
\end{algorithmic}
\end{algorithm}

\begin{restatable}{theorem}{thmNonOverGreedySol}
    The Algorithm~\ref{alg: non-over greedy} obtains the optimal budget allocation to the non-overlapping \mulane{} with the stationary starting distributions.
\end{restatable}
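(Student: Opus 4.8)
The plan is to exploit the separable structure of the non-overlapping reward together with the two monotonicity facts already established, and then argue that the myopic greedy selection is exactly a selection of the $B$ largest marginal gains, which is simultaneously feasible and optimal. First I would rewrite the reward as $r_{\cG,\bm{\alpha},\bsig}(\bk)=\sum_{i\in[m]}h_i(k_i)$ with $h_i(k_i)=\sum_{b\in[k_i]}g_i(b)$, where $g_i(b)=\sum_{v\in\cV_i}\sigma_v(P_{i,v}(b)-P_{i,v}(b-1))$. By Lemma~\ref{lem: non-decreasing} each $g_i(b)\ge 0$, so every $h_i$ is non-decreasing; by Lemma~\ref{lem: stationary} (which applies because the walkers start from the stationary distributions $\bm{\pi}_i$) each $g_i(\cdot)$ is non-increasing, so each $h_i$ is discretely concave. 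Since the per-layer gains are non-negative, I may assume $B\le\sum_{i}c_i$ and that both greedy and any optimum spend the full budget $\sum_i k_i=B$; the complementary case $B>\sum_i c_i$ is trivial, since then $\bk=\bc$ is the unique feasible maximal allocation, which greedy also returns.

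Next I would phrase the problem as choosing elements from the multiset of candidate gains $\{g_i(b):i\in[m],\,b\in[c_i]\}$: since $r_{\cG,\bm{\alpha},\bsig}(\bk)$ equals the sum of the chosen gains, the feasible allocations correspond exactly to the \emph{prefix-closed} subsets, i.e. a subset containing $g_i(b)$ must contain $g_i(b')$ for every $b'<b$. The crucial point, supplied by the non-increasing property, is that greedily taking the $B$ globally largest gains is automatically prefix-closed: whenever $g_i(b)$ is among the top $B$, the larger gains $g_i(1),\dots,g_i(b-1)$ of the same layer are also among them. Hence the vector $\bk^g$ produced by Alg.~\ref{alg: non-over greedy}, whose step~\ref{line: non over MG greedy select} always appends the largest currently available marginal gain $g_{i^*}(k_{i^*}+1)$, is precisely a top-$B$ selection, and it is feasible.

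For optimality I would use a threshold/exchange argument. The monotonicity of $g_i(\cdot)$ ensures, by induction on the greedy steps, that the set of $B$ gains selected by Alg.~\ref{alg: non-over greedy} is a set of the $B$ largest candidate gains overall; let $\theta$ denote the smallest selected gain, so every selected gain is $\ge\theta$ and every unselected gain is $\le\theta$. Given any optimal $\bk^*$ with selected multiset $S$ and greedy multiset $G$, each gain in $G\setminus S$ is $\ge\theta$, while each gain in $S\setminus G$ is unselected by greedy and hence $\le\theta$. Because both allocations spend exactly $B$ budgets, $|G\setminus S|=|S\setminus G|$, so $r_{\cG,\bm{\alpha},\bsig}(\bk^g)-r_{\cG,\bm{\alpha},\bsig}(\bk^*)=\sum_{G\setminus S}g-\sum_{S\setminus G}g\ge |G\setminus S|\,\theta-|S\setminus G|\,\theta=0$, which yields optimality of $\bk^g$.

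The main obstacle, and the place where I would be most careful, is the feasibility/optimality coupling through the non-increasing property: it is exactly what guarantees that a local greedy choice (the largest next marginal within a single layer) coincides with the global top-$B$ selection and keeps every selected set prefix-closed. Without Lemma~\ref{lem: stationary} the top-$B$ set could violate prefix-closure (a large late gain sitting above small earlier gains), and the greedy rule would then be both an invalid top-$B$ surrogate and potentially strictly suboptimal, which is precisely why stationarity is needed here and only an approximation is available in the arbitrary-distribution case. Tie-handling at the threshold $\theta$ is a minor point: any tie-breaking yields the same total reward, so it does not affect the conclusion.
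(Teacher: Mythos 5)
Your proof is correct and follows essentially the same route as the paper: both decompose the reward into the table of layer-level marginal gains $g_i(b)$, invoke Lemma~\ref{lem: stationary} to get non-increasing rows, and conclude that the myopic greedy selection coincides with the top-$B$ elements of that table, which is optimal. The only difference is that you spell out the prefix-closure and threshold/exchange argument that the paper leaves implicit in its final sentence, which is a welcome tightening rather than a new approach.
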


\begin{proof}
    	Define a two dimensional array $\bm{M} \in \R^{c_1+c_2+...+c_m}$, where $(i,j)$-th entry is the j-th step layer-level marginal gain for layer $i$, i.e., $\bm{M}{[i,j]} = \sum_{v \in \cV_i}g_{i,v}(j)$, for $i\in[m], j \in [c_i]$. Given the budget allocation $\bk = \{k_1, ..., k_m\}$, the expected reward $r_{\cG, \bm{\alpha}, \bsig}( \bk)$ can be written as the sum of elements in $\bm{M}$, i.e., $r_{\cG, \bm{\alpha}, \bsig}( \bk)=\sum_{i=1}^{m}\sum_{j=1}^{k_i}\bm{M}{[i,j]}$. Because $g_{i,v}(k_i)$ is non-increasing with respect to $k_i$, the element in each row or the layer-level marginal gain is non-increasing. Hence, the greedy method at step $s$ choose the $s$-th largest element in $\bm{M}$. At step $s=B$, the greedy policy selects all $B$ largest elements and the corresponding budget allocation maximizes the expected reward $r_{\cG, \bm{\alpha}, \bsig}( \bk)$. 
\end{proof}

Alg.~\ref{alg: non-over greedy} uses $O(\norm{\bc}_{\infty}mn_{max}^3)$ to compute visiting probabilities $P_{i,u}(k_i)$ and $O(\norm{\bc}_{\infty}mn_{max})$ to compute layer-wise marginal gains $g_i(b)$.
Then $\delta(i,1,\bk)$ can be evaluated in $O(\log m)$ using the priority queue, which is repeated for $B$ iterations. 
Therefore, the time complexity of Alg.~\ref{alg: non-over greedy} is $O(B\log{m}+\norm{\bc}_{\infty}mn_{max}^3)$.

\subsection{Starting From Arbitrary Distributions}
When random explorers start from any arbitrary distribution, Alg.~\ref{alg: non-over greedy} can not obtain the optimal solution.
This is because the layer-level marginal gain $g_i(b)$ is \textit{not} non-increasing with respect to $b$.
So we have to adopt a more general technique, dynamic programming (DP), to solve the budget allocation problem.
The key idea is to keep a DP budget allocation table $A \in \R^{(m+1)\times(B+1)}$, where the $(i,b)$-th entry $A[i,b]$ saves the optimal budget allocation by allocating $b$ budgets to the first $i$ layers. 
Another DP table $V \in R^{(m+1)\times(B+1)}$ saves the value of the optimal reward, where the $(i,b)$-th entry $V[i,b]$ corresponds to the optimal reward when setting $A[i,b]$ as the budget allocation.
In the $i'$-th outer loop and $b'$-th inner loop in Alg~\ref{alg: dynamic programming}, we have already obtained the optimal budget allocation for $A[i,b]$ with $i=0, ..., i'$, $b \in [B]$, which help us to find the optimal amount of budget $j^*$ to $(i'+1)$-th layer (in line~\ref{line: alg2 dp}), and thus we can obtain the $A[i'+1,b']$ and $V[i'+1,b']$ accordingly.
\begin{theorem}
Algorithm~\ref{alg: dynamic programming} obtains the optimal budget allocation to the non-overlapping \mulane{} with an arbitrary starting distribution.
\end{theorem}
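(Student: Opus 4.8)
The plan is to prove correctness of the dynamic program by induction on the number of layers, exploiting the additive separability of the reward function in the non-overlapping setting. First I would record the key structural fact implied by $\cV_i \cap \cV_j = \emptyset$: by Eq.~(\ref{eq: non-over reward}) the reward decomposes as $r_{\cG,\bm{\alpha},\bsig}(\bm{k}) = \sum_{i \in [m]} R_i(k_i)$, where $R_i(b) \defeq \sum_{u \in \cV_i} \sigma_u P_{i,u}(b)$ depends only on the budget $b = k_i$ assigned to layer $i$ and is completely independent of the budgets given to the other layers. This separability is what makes the problem amenable to a knapsack-style DP even though, for arbitrary starting distributions, each $R_i$ need not be concave (its marginal gains need not be non-increasing, which is precisely why the greedy Alg.~\ref{alg: non-over greedy} fails in this case).

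Next I would fix the semantics of the DP table. Define $V^*[i,b]$ to be the optimal value of the restricted problem of allocating a total budget of at most $b$ among the first $i$ layers subject to $0 \le k_j \le c_j$, i.e. $V^*[i,b] = \max\{\sum_{j=1}^i R_j(k_j) : \sum_{j=1}^i k_j \le b,\ 0 \le k_j \le c_j\}$, with $V^*[0,b]=0$. I would then prove by induction on $i$ that the entry $V[i,b]$ computed by Alg.~\ref{alg: dynamic programming} equals $V^*[i,b]$ and that $A[i,b]$ stores a maximizer. The base case $i=0$ is immediate. For the inductive step, the central claim is the principle of optimality: any feasible allocation to the first $i+1$ layers assigns some amount $j \in \{0,\dots,\min(c_{i+1},b)\}$ to layer $i+1$ and the remaining at most $b-j$ to the first $i$ layers; by separability the objective splits as $R_{i+1}(j) + \sum_{l=1}^i R_l(k_l)$, so for each fixed $j$ the first $i$ coordinates can be optimized independently, contributing exactly $V^*[i,b-j]$ by the induction hypothesis. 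Taking the maximum over $j$ yields $V^*[i+1,b] = \max_{0 \le j \le \min(c_{i+1},b)} \big(V^*[i,b-j] + R_{i+1}(j)\big)$, which is precisely the recurrence evaluated in line~\ref{line: alg2 dp}; hence $V[i+1,b]=V^*[i+1,b]$, and concatenating the stored sub-allocation for $A[i,b-j^*]$ with $j^*$ budget on layer $i+1$ gives a maximizer, completing the induction.

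Finally I would conclude that $V[m,B] = V^*[m,B]$ is the optimum of the original problem and $A[m,B]$ is the optimal allocation $\bm{k}^*$. One small point worth verifying explicitly is the switch between ``at most $b$'' and ``exactly $b$'': since each $P_{i,u}(\cdot)$ is non-decreasing by Lemma~\ref{lem: non-decreasing}, so is each $R_i$, and hence enlarging an allocation never decreases the reward; this makes the ``$\le b$'' formulation harmless and also justifies reading the final answer off at total budget $B$. I do not expect a genuine obstacle here — the argument is the standard optimal-substructure proof for a resource-allocation DP — so the only care needed is in stating the separability cleanly and checking that the range of $j$ in the recurrence respects both the per-layer cap $c_{i+1}$ and the running budget $b$.
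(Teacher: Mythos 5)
Your proof is correct and takes essentially the same route as the paper: the paper states this theorem without a formal proof, relying on the informal description of the DP-table semantics (optimal substructure from the additive separability of the reward in Eq.~(\ref{eq: non-over reward}), plus induction over layers), and your write-up is exactly that argument with the induction spelled out. Your extra care about restricting $j \le \min(c_{i+1},b)$ and fixing the ``at most $b$'' table semantics (under which the recurrence is exact, so the monotonicity remark is a harmless extra) tidies up details the paper's pseudocode and prose leave implicit.
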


Alg.~\ref{alg: dynamic programming} uses $O(\norm{\bc}_{\infty}mn_{max}^3)$ to compute visiting probabilities $P_{i,u}(k_i)$ and $O(\norm{\bc}_{\infty}mn_{max})$ to compute layer-wise marginal gains $g_i(b)$.
In the DP procedure, we can also pre-calculate all rewards $r_{\cG, \bm{\alpha}, \bsig}(j\bm{\chi_{i+1}})$ in $O(\norm{\bc}_{\infty}m)$.
Then DP procedure takes $O(\norm{\bc}_{\infty})$ to find $j^*$, which is repeated for $Bm$ iterations. 
Therefore, the Alg.~\ref{alg: dynamic programming} has the time complexity of $O(B\norm{\bc}_{\infty}m+\norm{\bc}_{\infty}mn_{max}^3)$.

\begin{algorithm}[t]
	\caption{Dynamic Programming (DP) Algorithm for the Non-overlapping MuLaNE}\label{alg: dynamic programming}
	\begin{algorithmic}[1]
	\INPUT{Network $\cG$, starting distributions $\bm{\alpha}$, node weights $\bsig$, budget $B$, constraints $\bc$.}
	\OUTPUT{Budget allocation $\bk$.}
	    \STATE Compute visiting probabilities $(P_{i,u}(b))_{i\in[m], u\in\cV, b \in [c_i]}$ according to Eq.~(\ref{eq: non-over reward}).
	    \STATE Compute layer-level marginal gain $(g_{i}(b))_{i \in [m], b \in [c_i]}$ according to Eq.~(\ref{eq: layer-wise marginal}).
	    \STATE $\bk \leftarrow$ DP($(g_i(b))_{i\in[m], b \in [c_i]}$, $B$, $\bc$)
	    \algrule
		\FUNCTION{DP($(g_{i}(b))_{i\in[m], b \in c_i}$, $B$, $\bc$)}
		\FOR{$i \in 0 \cup [m], b \in 0 \cup [B]$.}
		 \STATE Set $A[i,b] \leftarrow \bm{0}$, $V[i,b] \leftarrow 0$.
		 \ENDFOR
		\FOR {$i \leftarrow 0$ \textbf{to} $m-1$\label{line: alg2 outer}}
		\FOR {$b \in [B]$\label{line: alg2 inner}}
		    \STATE $j^* \leftarrow \argmax_{j \in 0\cup[c_{i+1}]}((V[i, b-j] + r_{\cG, \bm{\alpha}, \bsig}(j\bm{\chi_{i+1}})).$ \label{line: alg2 dp}
		    \STATE $A[i+1, b] \leftarrow  A[i,b-j^*] + j^*\bm{\chi_{i+1}}$.
		    \STATE $V[i+1, b] \leftarrow V[i, b-j^*] + r_{\cG, \bm{\alpha}, \bsig}(j^*\bm{\chi_{i+1}}).$
		    \ENDFOR
		\ENDFOR
		\KwRet{ $A[m,B]$}.
		\ENDFUNCTION
	\end{algorithmic}
\end{algorithm}

\section{Analysis of Online Learning for Overlapping \mulane}\label{appendix: over online analysis}
\subsection{Proof of 1-Norm Bounded Smoothness.}
\condOverMono*
\begin{proof}
    According to Eq.~(\ref{eq: over reward}), and since $\mu_{i,u,b} \le \mu'_{i,u,b}$, for $(i,u,b)\in\cA$, we have $(1-\Pi_{i\in[m]}(1-\mu_{i,u,k_i})) \le (1-\Pi_{i\in[m]}(1-\mu'_{i,u,k_i}))$.
    Therefore for any $\sigma_u \le \sigma'_u$, we have $\sigma_u(1-\Pi_{i\in[m]}(1-\mu_{i,u,k_i})) \le \sigma'_u(1-\Pi_{i\in[m]}(1-\mu'_{i,u,k_i}))$
    
    By summing up both sides over $u\in \cV$, we have $r_{\bk}(\bm{\mu}, \bsig)  \le r_{\bk}(\bm{\mu'}, \bsig')$.
\end{proof}

\condOverOneNorm*
\begin{proof}
	The left-hand side:
	\begin{align*}
	&\abs{r_{\bm{\mu}, \bsig}(\bm{k}) - r_{\bm{\mu'}, \bsig'}(\bm{k})}\\ 
	&= \Big|\sum_{u\in\cV}\big[\sigma_u\left(1-\Pi_{i\in[m]}\left(1-\mu_{i,u,k_i}\right)\right) \\
	&- \sigma_u'\left(1-\Pi_{i\in[m]}\left(1-\mu'_{i,u,k_i}\right)\right)\big]\Big|\\
	&\le \sum_{u\in\cV} \abs{\sigma_u(1-\Pi_{i \in [m]} q_{i,u}) - \sigma_u'(1-\Pi_{i \in [m]}q'_{i,u})}\\
	&= \sum_{u\in\cV} |\sigma_u(1-\Pi_{i \in [m]} q_{i,u}) - \sigma_u(1-\Pi_{i \in [m]} q'_{i,u}) \\
	&+ \sigma_u(1-\Pi_{i \in [m]} q'_{i,u}) - \sigma_u'(1-\Pi_{i \in [m]}q'_{i,u})|\\
	&\le \sum_{u\in\cV} \sigma_u|\Pi_{i \in [m]} q_{i,u} -\Pi_{i \in [m]} q'_{i,u}| \\ 
	&+ \sum_{u\in\cV}|\sigma_u - \sigma_u'|(1-\Pi_{i \in [m]}q'_{i,u}),
	\end{align*} 
	where $q_{i,u} = (1-\mu_{i,u,k_i}), q'_{i,u} = (1-\mu'_{i,u,k_i})$.
	
	For the first term, it can be derived that,
	\begin{align*}
	&\sigma_u\abs{\Pi_{i \in [m]} q_{i,u} -\Pi_{i \in [m]} q'_{i,u}} \\
	&= \sigma_u\abs{ \sum_{i=1}^m \prts{\prts{\Pi_{j=1}^{i-1}{q'_{j,u}}} (q_{i,u} - q'_{i,u})\prts{\Pi_{j=i+1}^m q_{j,u}}} }\\
	& \le  \sigma_u\sum_{i=1}^m \abs{ \prts{\prts{\Pi_{j=1}^{i-1}{q'_{j,u}}} (q_{i,u} - q'_{i,u})\prts{\Pi_{j=i+1}^m q_{j,u}}} }\\
	&\le  \sigma_u\sum_{i=1}^m \abs{q_{i,u} - q'_{i,u}} \\
	&= \sigma_u\sum_{i \in [m]} \abs{\mu_{i,u,k_i} - \mu'_{i,u,k_i}},
	\end{align*}
	where last inequality holds because $q_{j,u}, q'_{j,u} \le 1$ and $\sigma_u \in [0,1]$, for any $j\in[m], u \in \cV$.
	
	For the second term, we have 
	\begin{align*}
	    &|\sigma_u - \sigma_u'|(1-\Pi_{i \in [m]}q'_{i,u})\\
	    &=|\sigma_u - \sigma_u'|(1-\Pi_{i \in [m]}(1-\mu'_{i,u,k_i}))\\
	    &\le |\sigma_u - \sigma_u'| \sum_{i\in[m]}{\mu'_{i,u,k_i}},
	\end{align*}
	where the inequality holds because of the Weierstrass product inequality.
	Plugging the above two terms into the previous inequality, 1-Norm Bounded Smoothness is satisfied.
\end{proof}

\subsection{Regret Analysis for Overlapping \mulane}\label{appendix: over regret analysis}
In this section, we give the regret analysis for overlapping \mulane.
\subsubsection{Facts} \label{app: fact1}

We utilize the following tail bound in our analysis.
\begin{fact}[Hoeffding's Inequality~\cite{dubhashi2009concentration}.]\label{fact: hoef}
	Let $Y_1, ..., Y_n$ be independent and identically distributed (i.e., i.i.d) random variables with common suppport $[0,1]$ and mean $\mu$. Let $Z = Y_1 + ... + Y_n.$ Then for all $\delta$,
	\begin{equation*}
	Pr\{|Z-n\mu|\ge \delta\} \le 2 e^{-2\delta^2/n}.
	\end{equation*}
\end{fact}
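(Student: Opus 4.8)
The plan is to establish the two-sided bound by the standard Chernoff (exponential Markov) method, proving each one-sided tail separately and combining them by a union bound; throughout it suffices to treat $\delta > 0$, the regime in which the bound is nontrivial. First I would fix an arbitrary $s > 0$ and apply Markov's inequality to the nonnegative random variable $e^{s(Z - n\mu)}$, obtaining
\[
\Pr\{Z - n\mu \ge \delta\} \le e^{-s\delta}\, \E\!\left[e^{s(Z-n\mu)}\right].
\]
Since the $Y_i$ are independent, the expectation factorizes as $\prod_{i=1}^{n} \E[e^{s(Y_i - \mu)}]$, so the whole problem reduces to controlling the moment generating function of a single centered variable $Y_i - \mu$, which takes values in the interval $[-\mu,\, 1-\mu]$ and has mean zero.

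The key technical step, and the part I expect to be the main obstacle, is Hoeffding's lemma: for any zero-mean random variable $X$ supported on $[a,b]$ one has $\E[e^{sX}] \le \exp\!\big(s^2(b-a)^2/8\big)$. I would prove this by studying the cumulant generating function $\psi(s) = \log \E[e^{sX}]$. One checks $\psi(0) = 0$ and $\psi'(0) = \E[X] = 0$, and then observes that $\psi''(s)$ equals the variance of $X$ under the exponentially tilted measure $d\tilde{\PR} \propto e^{sX}\, d\PR$; since this tilted variable still lives in $[a,b]$, its variance is at most $(b-a)^2/4$ by Popoviciu's inequality. A second-order Taylor expansion of $\psi$ around $0$ then gives $\psi(s) \le s^2(b-a)^2/8$. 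Applying this with $b - a = (1-\mu) - (-\mu) = 1$ yields $\E[e^{s(Y_i-\mu)}] \le e^{s^2/8}$, and hence by independence $\E[e^{s(Z-n\mu)}] \le e^{n s^2/8}$.

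Combining these estimates gives $\Pr\{Z - n\mu \ge \delta\} \le \exp\!\big(-s\delta + n s^2/8\big)$ for every $s > 0$. I would then optimize the exponent over $s$: the quadratic $-s\delta + n s^2/8$ is minimized at $s = 4\delta/n$, which produces exponent $-2\delta^2/n$ and therefore the upper-tail bound $\Pr\{Z - n\mu \ge \delta\} \le e^{-2\delta^2/n}$. Finally, the lower tail follows by applying the identical argument to the variables $-Y_1, \dots, -Y_n$, which are again i.i.d.\ and bounded (in $[-1,0]$) with common mean $-\mu$, giving $\Pr\{Z - n\mu \le -\delta\} \le e^{-2\delta^2/n}$. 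A union bound over these two events then yields the claimed $\Pr\{|Z - n\mu| \ge \delta\} \le 2 e^{-2\delta^2/n}$, completing the proof.
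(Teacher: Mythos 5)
Your proof is correct and complete: the Chernoff--Markov reduction, Hoeffding's lemma via the second derivative of the cumulant generating function (with the tilted-measure variance bounded by Popoviciu's inequality on the length-one interval $[-\mu,1-\mu]$), the optimization at $s=4\delta/n$ yielding the exponent $-2\delta^2/n$, and the union bound over the two tails all check out, and restricting attention to $\delta>0$ is appropriate since that is the only regime where the claim has content. Note, though, that the paper itself gives no proof of this statement --- it is imported as a known Fact with a citation to \cite{dubhashi2009concentration} and used as a black box in the regret analyses --- so there is no in-paper argument to compare against; what you have written is exactly the standard textbook derivation found in that reference.
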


\subsubsection{Proof Details}\label{appendix: over proof details}

Let $\cA$ denote the set containing all base arms, i.e.,  $\cA = \{(i, u, b) | i \in [m], u \in \cV, b \in [c_i]\}$.
We add subscript $t$ to denote the value of a variable at the end of round $t \in [T]$, e.g. $\bar{\sigma}_t, \hat{\mu}_{i,u,b,t}$, where $T$ is the total number of rounds.
For example, $T_{i,u,b,t}$ denotes the total number of times that arm $(i,u,b)$ is played at the end of round $t$.
Let us first introduce a definition of an unlikely event, where $\hat{\mu}_{i,u,b,t-1}$ is not accurate as expected.

\begin{definition}
We say that the sampling is nice at the beginning of round t, if for every arm $(i,u,b) \in \cA$, $|\hat{\mu}_{i,u,b,t-1} - \mu_{i,u,b}| < \rho_{i,u,b,t}$, where $\rho_{i,u,b,t} = \sqrt{\frac{3\ln t}{2T_{i,u,b, t-1}}}$. Let $\mathcal{N}_t$ be such event.
\end{definition}

\begin{lemma} \label{lem: over nice event}
	For each round $t \ge 1$, $Pr\{\neg N_t\} \le 2|\cA|t^{-2}$
\end{lemma}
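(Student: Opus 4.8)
The plan is to bound the probability of the ``bad'' event $\neg \mathcal{N}_t$, namely that at least one base arm $(i,u,b) \in \cA$ has an empirical mean $\hat{\mu}_{i,u,b,t-1}$ deviating from its true value $\mu_{i,u,b}$ by at least the confidence radius $\rho_{i,u,b,t} = \sqrt{3\ln t/(2T_{i,u,b,t-1})}$. The standard approach is a union bound over all base arms combined with Hoeffding's inequality (Fact~\ref{fact: hoef}), but there is a subtlety: the number of samples $T_{i,u,b,t-1}$ is itself a random variable, so one cannot directly apply Hoeffding's inequality to a fixed sum of $T_{i,u,b,t-1}$ i.i.d.\ terms. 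The first step I would take is therefore to decouple the randomness by conditioning on the number of samples.

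The core argument for a single arm $(i,u,b)$ is as follows. Each time arm $(i,u,b)$ is played, it receives an independent Bernoulli observation $Y_{i,u,b} \in \{0,1\}$ with mean $\mu_{i,u,b}$ (these are i.i.d.\ with support $[0,1]$, as required by Fact~\ref{fact: hoef}). I would introduce a fixed counter $s$ ranging over the possible values of $T_{i,u,b,t-1}$, and for each $s$ consider the empirical average of the first $s$ observations. By Fact~\ref{fact: hoef} with $n = s$, $\mu = \mu_{i,u,b}$, $Z$ the sum of the $s$ observations, and $\delta = s\,\rho$ where $\rho = \sqrt{3\ln t/(2s)}$, we get
\begin{equation*}
\Pr\!\left\{ \left| \tfrac{Z}{s} - \mu_{i,u,b} \right| \ge \rho \right\} \le 2 e^{-2 s^2 \rho^2 / s} = 2 e^{-2 s \rho^2} = 2 e^{-3\ln t} = 2 t^{-3}.
\end{equation*}
The key point is that the exponent is independent of $s$, so the tail bound is uniform over the number of samples. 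Applying a union bound over the at most $t$ possible values of $s$ (since $T_{i,u,b,t-1} \le t-1 < t$) yields a per-arm failure probability of at most $2 t \cdot t^{-3} = 2 t^{-2}$.

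Finally I would take a union bound over all $(i,u,b) \in \cA$, giving $\Pr\{\neg \mathcal{N}_t\} \le |\cA| \cdot 2 t^{-2} = 2|\cA| t^{-2}$, which is exactly the claimed bound. The main obstacle — and the step that must be handled carefully rather than glossed over — is the random-stopping-time issue: because $T_{i,u,b,t-1}$ depends on the algorithm's past random choices, one cannot naively plug it into Hoeffding's inequality. The clean way around this is the union-over-$s$ argument sketched above, which treats each fixed sample count separately and exploits that the resulting tail bound $2t^{-3}$ does not depend on $s$; the extra factor of $t$ from the union bound is absorbed harmlessly into the $t^{-2}$ rate. This is the standard technique used in the CUCB analysis of~\cite{chen2016combinatorial}, and I would mirror that structure here.
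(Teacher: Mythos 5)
Your proposal is correct and follows essentially the same route as the paper's proof: a union bound over base arms, a decomposition over the possible (random) values of the sample count $T_{i,u,b,t-1}$, and Hoeffding's inequality applied at each fixed count to get the uniform $2t^{-3}$ tail, which sums to $2|\cA|t^{-2}$. The paper phrases the decoupling as a sum over the events $\{T_{i,u,b,t-1}=k\}$ for $k=1,\dots,t-1$ rather than a union over prefix averages, but this is the same standard CUCB argument you describe.
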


\begin{proof}
	For each round $t \ge 1$, we have 
	\begin{align*}
	&Pr\{\neg \mathcal{N}_t\} \\
	&= Pr\{\exists (i,u,b)\in  \cA, |\hat{\mu}_{i,u,b,t-1} - \mu_{i,u,b}| \ge \sqrt{\frac{3\ln t}{2T_{i,u,b, t-1}}}\} \\
	&\le \sum_{(i,u,b)\in  \cA}Pr\{ |\hat{\mu}_{i,u,b,t-1} - \mu_{i,u,b}| \ge \sqrt{\frac{3\ln t}{2T_{i,u,b, t-1}}}\} \\
	&=\sum_{(i,u,b)\in  \cA}\sum_{k=1}^{t-1}Pr\{ T_{i,u,b,t-1} = k \\
	&\wedge |\hat{\mu}_{i,u,b,t-1} -\mu_{i,u,b}| \ge \sqrt{\frac{3\ln t}{2T_{i,u,b, t-1}}}\} \\
	&\le \sum_{(i,u,b)\in  \cA}\sum_{k=1}^{(t-1)} \frac{2}{t^{3}} \le 2|\cA|t^{-2}. 
	\end{align*}
	
Given $T_{i,u,b,t-1} = k$, $\hat{\mu}_{i,u,b,t-1}$ is the average of $k$ i.i.d. random variables $Y_{i,u,b}^{[1]}, ..., Y_{i,u,b}^{[k]}$, where $Y_{i,u,b}^{[j]}$ is the Bernoulli random variable ( computed in Alg.~\ref{alg: over online} line~\ref{line: over bernoulli rv}) when the arm $(i,u,b)$ is played for the $j$-th time during the execution.  
That is, $\hat{\mu}_{i,u,b,t-1}=\sum_{j=1}^k Y_{i,u,b}^{[j]}/k$. Note that the independence of $Y_{i,u,b}^{[1]}, ..., Y_{i,u,b}^{[k]}$ comes from the fact we select a new starting position following the starting distribution $\bm{\alpha}_i$ at the beginning of each round $t$. 
And the last inequality uses the Hoeffding Inequaility (Fact~\ref{fact: hoef}).
\end{proof}

Then we can use monotonicity and 1-norm bounded smoothness properties (Property~\ref{cond: over mono} and Property~\ref{cond: over 1-norm}) to bound the reward gap $\bm{\Delta_{k_t}}=\xi     r_{\bm{\mu}, \bsig}(\bm{k^*}) - r_{\bm{\mu}, \bsig}(\bm{k_t})$ between the optimal action $\bm{k^*}$ and the action $\bm{k_t}=(k_{t,1}, ..., k_{t,m})$ selected by our algorithm $A$. 
To achieve this, we introduce a positive real number $M_{i,u,b}$ for each arm $(i,u,b)$ and define $M_{\bm{k_t}}=\max_{(i,u,b) \in S_t} M_{i,u,b}$, where $S_t=\{(i,u,b) \in \cA | b = k_{t,i}\}$.
Define $|\cE'|= |S_t| =\sum_{i \in [m]}{|\cV|}=m|\cV|$, and let

\begin{equation*}
\kappa_T(M,s) = \begin{cases}
2, & \text{if $s=0$}, \\
3\sqrt{\frac{6\ln T}{s}}, & \text{if $1 \le s \le \ell_T{(M)}$}, \\
0, &\text{if $s \ge \ell_T(M) + 1$},
\end{cases}
\end{equation*}

where
\begin{equation*}
\ell_T(M) = \lfloor \frac{54|\cE'|^2\ln T}{M^2} \rfloor.
\end{equation*}

\begin{lemma}
	If $\{\Delta_{\bm{k}_t} \ge M_{\bm{k_t}} \}$ and $\mathcal{N}_t$ holds, we have
	\begin{equation} \label{lem: over bound_delta}
	\Delta_{\bm{k}_t} \le \sum_{(i,u,b) \in S_t}\kappa_T(M_{i,u,b},T_{i,u,b,t-1}).
	\end{equation}	
\end{lemma}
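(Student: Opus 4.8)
The plan is to chain the oracle guarantee, monotonicity (Property~\ref{cond: over mono}), and $1$-norm bounded smoothness (Property~\ref{cond: over 1-norm}) to reduce $\Delta_{\bm{k}_t}$ to a weighted sum of per-arm estimation errors over $S_t$, and then to convert each error into the matching $\kappa_T$ term by a case analysis on $T_{i,u,b,t-1}$. First I would check that on $\mathcal{N}_t$ the inflated inputs dominate the truth coordinate-wise: $\bar{\mu}_{i,u,b} \ge \tilde{\mu}_{i,u,b} = \min\{\hat{\mu}_{i,u,b}+\rho_{i,u,b,t},1\} \ge \mu_{i,u,b}$ and $\bar{\sigma}_u \in \{1,\sigma_u\}$ so $\bar{\sigma}_u \ge \sigma_u$; the MAX step also makes $\bar{\mu}_{i,u,b}$ non-decreasing in $b$, hence a valid monotone input to BEG. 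Since BEG is a $(\xi,1)$-oracle on $(\bar{\bm{\mu}},\bar{\bm{\sigma}})$ and $\bm{k^*}$ is feasible, $r_{\bar{\bm{\mu}},\bar{\bm{\sigma}}}(\bm{k}_t) \ge \xi\, r_{\bar{\bm{\mu}},\bar{\bm{\sigma}}}(\bm{k^*}) \ge \xi\, r_{\bm{\mu},\bsig}(\bm{k^*})$ by Property~\ref{cond: over mono}, so $\Delta_{\bm{k}_t} \le r_{\bar{\bm{\mu}},\bar{\bm{\sigma}}}(\bm{k}_t) - r_{\bm{\mu},\bsig}(\bm{k}_t)$. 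Property~\ref{cond: over 1-norm} then bounds this by $\sum_{(i,u,b)\in S_t}\phi(i,u,b)$, where $\phi(i,u,b) = \bar{\sigma}_u|\bar{\mu}_{i,u,b}-\mu_{i,u,b}| + |\bar{\sigma}_u-\sigma_u|\,\mu_{i,u,b}$.

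The crux is to bound each $\phi(i,u,b)$ by $\tfrac12\kappa_T(M_{i,u,b},T_{i,u,b,t-1})$. For the first part I exploit the MAX step: writing $\bar{\mu}_{i,u,b}=\tilde{\mu}_{i,u,j^*}$ for some $j^*\le b$, Lemma~\ref{lem: non-decreasing} gives $\mu_{i,u,j^*}\le\mu_{i,u,b}$, and since every round triggering $(i,u,b)$ also triggers $(i,u,j^*)$ we get $T_{i,u,j^*,t-1}\ge T_{i,u,b,t-1}$, so $\rho_{i,u,j^*,t}\le\rho_{i,u,b,t}$ and therefore $|\bar{\mu}_{i,u,b}-\mu_{i,u,b}|\le 2\rho_{i,u,b,t}$ on $\mathcal{N}_t$. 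For the weight part I use that $\bar{\sigma}_u\neq\sigma_u$ forces $u$ to be unvisited, hence every observed $Y_{i,u,b}$ is $0$, so $\hat{\mu}_{i,u,b,t-1}=0$ and $\mu_{i,u,b}<\rho_{i,u,b,t}$ on $\mathcal{N}_t$. Summing gives $\phi(i,u,b)\le 3\rho_{i,u,b,t}\le\tfrac32\sqrt{6\ln T/T_{i,u,b,t-1}}=\tfrac12\kappa_T$ whenever $1\le T_{i,u,b,t-1}\le\ell_T(M_{i,u,b})$; when $T_{i,u,b,t-1}=0$ the MAX forces $\bar{\mu}_{i,u,b}=1$, and a direct expansion yields $\phi(i,u,b)\le 1=\tfrac12\kappa_T$.

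Finally I would split $S_t$ into a head where $T_{i,u,b,t-1}\le\ell_T(M_{i,u,b})$ and a tail where it exceeds this. On the head $\phi\le\tfrac12\kappa_T$; on the tail $\kappa_T=0$, but the definition $\ell_T(M)=\lfloor 54|\cE'|^2\ln T/M^2\rfloor$ makes $\phi<\frac{M_{i,u,b}}{2|\cE'|}\le\frac{M_{\bm{k}_t}}{2|\cE'|}$, so the tail contributes at most $|\cE'|\cdot\frac{M_{\bm{k}_t}}{2|\cE'|}=\frac{M_{\bm{k}_t}}{2}$ (using $|S_t|=|\cE'|$). Hence $\Delta_{\bm{k}_t}\le\tfrac12\sum_{(i,u,b)\in S_t}\kappa_T+\tfrac12 M_{\bm{k}_t}$, and since the hypothesis gives $M_{\bm{k}_t}\le\Delta_{\bm{k}_t}$, rearranging yields $\Delta_{\bm{k}_t}\le\sum_{(i,u,b)\in S_t}\kappa_T$, which is the claim.

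The hard part will be the MAX step: I must ensure that replacing $\tilde{\mu}_{i,u,b}$ by $\bar{\mu}_{i,u,b}=\max_{j\le b}\tilde{\mu}_{i,u,j}$ does not inflate the effective confidence radius, which is exactly where Lemma~\ref{lem: non-decreasing} together with the monotonicity of the counts $T_{i,u,j,t-1}$ in $j$ is indispensable. A secondary subtlety is calibrating the constant $54$ in $\ell_T$ so that the head bound is $\tfrac12\kappa_T$ rather than $\kappa_T$; this factor of one half is precisely what lets the threshold hypothesis $\Delta_{\bm{k}_t}\ge M_{\bm{k}_t}$ absorb the tail term and close the argument without an extra multiplicative constant.
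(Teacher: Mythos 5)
Your proof is correct and takes essentially the same route as the paper's: the oracle guarantee plus monotonicity, then 1-norm bounded smoothness, then the identical per-arm case analysis on $T_{i,u,b,t-1}$ versus $\ell_T(M_{i,u,b})$, with the MAX step tamed exactly as in the paper via monotonicity of $\mu_{i,u,b}$ in $b$ and of the counts $T_{i,u,j,t-1}$ in $j$. The only cosmetic differences are that you apply the smoothness property with the roles of $(\bm{\mu},\bsig)$ and $(\bar{\bm{\mu}},\bar{\bsig})$ swapped (bounding the true $\mu_{i,u,b}<\rho_{i,u,b,t}$ for unvisited nodes, where the paper bounds $\bar{\mu}_{i,u,b,t}=\rho_{i,u,b,t}$), and that you absorb the $\tfrac{1}{2}M_{\bm{k}_t}$ slack by a final rearrangement using $\Delta_{\bm{k}_t}\ge M_{\bm{k}_t}$, whereas the paper subtracts $M_{i,u,b}/(2|\cE'|)$ from each summand up front---algebraically equivalent bookkeeping.
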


\begin{proof}

    First, we can observe, for $(i,u,b) \in \cA$, $\mu_{i,u,b}$ is upper bounded by $\bar{\mu}_{i,u,b,t}$, i.e., $\bar{\mu}_{i,u,b,t} \ge \mu_{i,u,b}$, when $\cN_t$ holds.
    This is due to 
    \begin{align*}
        \bar{\mu}_{i,u,b,t} &= \max_{j \in [b]}\tilde{\mu}_{i,u,j,t} \\
        &\ge \tilde{\mu}_{i,u,b,t} \\
        &=\min\{\hat{\mu}_{i,u,b,t-1}+\rho_{i,u,b,t}, 1\}\\
        &\ge \mu_{i,u,b} ,
    \end{align*}
    where the last inequality comes from the fact that $|\hat{\mu}_{i,u,b,t-1} - \mu_{i,u,b}| < \rho_{i,u,b,t}$ when $\cN_t$ holds.

	Then, notice that the right hand of Inequality~(\ref{lem: over bound_delta}) is non-negative, it is trivially satisfied when $\Delta_{\bm{k}_t} = 0$. We only need to consider $\Delta_{\bm{k}_t} > 0$. By $\mathcal{N}_t$ and Property~\ref{cond: over mono},
	We have the following inequalities,
	\begin{equation*}
	r_{\bar{\bm{\mu}}_t, \bar{\bsig}}(\bm{k_t}) \ge \xi r_{\bar{\bm{\mu}}_t, \bar{\bsig}}(\bm{k^*}) \ge \xi r_{{\bm{\mu}, \bsig}}(\bm{k^*}) = \Delta_{\bm{k_t}} + r_{\bm{\mu}, \bsig}(\bm{k_t}).
	\end{equation*}
	The first inequality is due to the \textit{oracle} outputs the $(\xi, \beta)$-approximate solution given parameters $\bar{\bm{\mu}}_t$, where $\xi=1-e^{-\eta}, \beta=1$ by definition.
	The second inequality is due to Condition~\ref{cond: over mono} (monotonicity). 
	
	Then, by Condition~\ref{cond: over 1-norm}, we have 
	\begin{align*}
	\Delta_{\bm{k_t}} &\le r_{\bar{\bm{\mu}}_t, \bar{\bsig}}(\bm{k_t}) - r_{{\bm{\mu}, \bsig}}(\bm{k_t}) \\
	&\le \sum_{(i,u,b) \in S_t}(\sigma_u|\bar{{\mu}}_{i,u,b, t} - \mu_{i,u,b}|+|\bar{\sigma}_u-\sigma_u|\bar{{\mu}}_{i,u,b, t})\\
	&\le \sum_{(i,u,b) \in S_t}(|\bar{{\mu}}_{i,u,b, t} - \mu_{i,u,b}|+\rho_{i,u,b,t}),
	\end{align*}
	where the last inequality is due to the following observation:
	$\bar{\sigma}_u\neq \sigma_u$ if and only if $u$ has not been visited so far, so $\hat{{\mu}}_{i,u,b, t-1}=0$ for any $i\in[m], b\in[B]$ and $\bar{{\mu}}_{i,u,b, t}=\max_{j\in[b]}\{\hat{{\mu}}_{i,u,j, t-1}+\rho_{i,u,j,t}\}=\rho_{i,u,b,t}$.
	
	Next, we can bound $\Delta_{\bm{k_t}}$ by bounding $\bar{{\mu}}_{i,u,b, t} - \mu_{i,u,b} + \rho_{i,u,b,t}$ by applying a transformation. As $\{\Delta_{\bm{k}_t} \ge M_{\bm{k_t}} \}$ holds by assumption, so $\sum_{(i,u,b) \in S_t}|\bar{{\mu}}_{i,u,b, t} - \mu_{i,u,b}|+\rho_{i,u,b,t} \ge \Delta_{\bm{k_t}} \ge M_{\bm{k_t}}$. We have 
	\begin{align}\label{ieq: online over bound k}
	\Delta_{\bm{k_t}} &\le \sum_{(i,u,b) \in S_t}(|\bar{{\mu}}_{i,u,b, t} - \mu_{i,u,b}|+\rho_{i,u,b,t}) \nonumber\\
	& \le -M_{\bm{k_t}} + 2\sum_{(i,u,b) \in S_t}|\bar{{\mu}}_{i,u,b, t} - \mu_{i,u,b}|+\rho_{i,u,b,t} \nonumber\\
	& = 2\sum_{(i,u,b) \in S_t}(|\bar{{\mu}}_{i,u,b, t} - \mu_{i,u,b}|+\rho_{i,u,b,t} - \frac{M_{\bm{k_t}}}{2|\cE'|}) \nonumber\\
	& \le 2\sum_{(i,u,b) \in S_t}(|\bar{{\mu}}_{i,u,b, t} - \mu_{i,u,b}|+\rho_{i,u,b,t} - \frac{M_{i,u,b}}{2|\cE'|}) .
	\end{align}
	
	By $\mathcal{N}_t$, for all $(i,u,b) \in \cA$, we have:
	\begin{align*}
	    &|\bar{\mu}_{i,u,b,t} - \mu_{i,u,b}| + \rho_{i,u,b,t} - \frac{M_{i,u,b}}{2|\cE'|} \\
	    &=\tilde{\mu}_{i,u,l,t} - \mu_{i,u,b}+ \rho_{i,u,b,t} - \frac{M_{i,u,b}}{2|\cE'|} \tag{$l = \arg\max_{j \in [b]}\tilde{\mu}_{i,u,j,t}$}\\
	    &\le \tilde{\mu}_{i,u,l,t} - \mu_{i,u,l}+ \rho_{i,u,b,t} - \frac{M_{i,u,b}}{2|\cE'|}\\
	     &\le  \min\{2\rho_{i,u,l,t}+ \rho_{i,u,b,t}, 1\} - \frac{M_{i,u,b}}{2|\cE'|} \\
	    &\le  \min\{3\rho_{i,u,b,t}, 1\} - \frac{M_{i,u,b}}{2|\cE'|} \\
	    &\le  \min \left\{3\sqrt{\frac{3\ln T}{2T_{i,u,b, t-1}}}, 1\right\} - \frac{M_{i,u,b}}{2|\cE'|}.
	\end{align*}
	where the first inequality is due to $\mu_{i,u,j} \le \mu_{i,u,b}$ for $j \le b$,
	the second inequality is due to the event $\cN_t$ and the third inequality holds because $T_{i,u,j, t-1} \ge T_{i,u,b, t-1}$ for $j \le b$.
	
	Now consider the following cases:
	
	\textbf{Case 1.} If $T_{i,u,b, t-1} \le \ell_T(M_{i,u,b})$, we have $\bar{\mu}_{i,u,b,t} - \mu_{i,u,b}+ \rho_{i,u,b,t} - \frac{M_{i,u,b}}{2|\cE'|} \le \min\left\{3       \sqrt{\frac{3\ln T}{2T_{i,u,b, t-1}}}, 1\right\} \le \frac{1}{2} \kappa_T(M_{i,u,b}, T_{i,u,b,t-1})$. 
	
	\textbf{Case 2.} If  $T_{i, b, t-1} \ge \ell_T(M_{i,u,b,t})+1$, then $3\sqrt{\frac{3\ln T}{2T_{i,u,b, t-1}}} \le \frac{M_{i,u,b}} {2|\cE'|}$, so $\bar{\mu}_{i,u,b,t} - \mu_{i,u,b}+ \rho_{i,u,b,t} - \frac{M_{i,u,b}}{2|\cE'|} \le 0 = \frac{1}{2} \kappa_T(M_{i,u,b}, T_{i,u,b,t-1})$. 
	
	Combine these two cases and inequality~(\ref{ieq: online over bound k}), we have the following inequality, which concludes the lemma.
	\begin{equation*}
	\Delta_{\bm{k}_t} \le \sum_{(i,u,b) \in S_t}\kappa_T(M_{i,u,b},T_{i,u,b,t-1}).
	\end{equation*}
\end{proof}

\thmOver*
\begin{proof}[Proof of Theorem~\ref{thm: over regret}]
	By above lemmas, we have
	\begin{align*}
	&\sum_{t=1}^T \mathbb{I}\left({\{\Delta_{\bm{k_t}} \ge M_{\bm{k_t}}\} \wedge \mathcal{N}_t}\right) \Delta_{\bm{k_t}} \\
	&\le \sum_{t=1}^T \sum_{(i,u,b)\in S_t}\kappa_T(M_{i,u,b},T_{i,u,b,t-1}) \\
	&=\sum_{(i,u,b) \in \cA}\sum_{t'\in \{ t| t\in [T], \bm{k}_{t,i} = b \}} \kappa_T(M_{i,u,b},T_{i,u,b,t'-1}) \\
	&\le \sum_{(i,u,b) \in \cA}\sum_{s=0}^{T_{i,u,b,T}} \kappa_T(M_{i,u,b},s)\\
	&\le 2|\cA| + \sum_{(i,u,b) \in \cA}\sum_{s=1}^{\ell_T(M_{i,u,b})} 3\sqrt{6\frac{\ln T}{s}}\\
	&\le 2|\cA| + \sum_{(i,u,b) \in \cA} \int_{s=0}^{\ell_T(M_{i,u,b})}  3\sqrt{6\frac{\ln T}{s}} ds \\
	&\le 2|\cA| + \sum_{(i,u,b) \in \cA}   6\sqrt{6 \ln T \ell_T(M_{i,u,b})} \\
	&\le 2|\cA| + \sum_{(i,u,b) \in \cA} \frac{108|\cE'| \ln T}{M_{i,u,b}}.
	\end{align*}
	For distribution dependent bound, let us set $M_{i,u,b} = \Delta^{i,u,b}_{\min}$ and thus the event \{$\Delta_{\bm{k_t}} \ge M_{\bm{k_t}}$\} always holds, i.e., $\mathbb{I}\{\Delta_{\bm{k_t}} < M_{\bm{k_t}}\}=0$ . We have 
	\begin{equation*}
	\begin{split}
	&\sum_{t=1}^T \mathbb{I}\left(\mathcal{N}_t\ \wedge \{\Delta_{\bm{k_t}} \ge M_{\bm{k_t}}\}\right) \Delta_{\bm{k_t}}\\
	&\le 2|\cA| + \sum_{(i,u,b) \in \cA} \frac{108|\cE'| \ln T}{\Delta_{\min}^{i,u,b}}.	    
	\end{split}
	\end{equation*}
	
	By Lem.~\ref{lem: over nice event}, $Pr\{\neg N_t\} \le 2|\cA|t^{-2}$, then 
	\begin{equation*}
	\E\left[\sum_{t=1}^T \mathbb{I}\left( \neg \mathcal{N}_t\right) \Delta_{\bm{k_t}}\right] \le \sum_{t=1}^T 2|\cA|t^{-2} \Delta_{\max} \le \frac{\pi^2}{3}|\cA| \Delta_{\max}.
	\end{equation*}
	
	By our choice of $M_{i,u,b}$, 
	
	\begin{equation*}
	\sum_{t=1}^T \mathbb{I}\left(\Delta_{\bm{k_t}} < M_{\bm{k_t}}\right) \Delta_{\bm{k_t}} = 0 .
	\end{equation*}
	
	By the definition of the regret,
	
	\begin{align*}
	Reg_{\bm{\mu}}(T) &= \E\left[\sum_{t=1}^T \Delta_{\bm{k_t}}\right] \\
	&\le \E\bigg[\sum_{t=1}^T \bigg(\mathbb{I}\big( \neg \mathcal{N}_t\big) + \mathbb{I}\big(\Delta_{\bm{k_t}} < M_{\bm{k_t}}\big) \\
	&+ \mathbb{I}\big(\mathcal{N}_t\ \wedge \{\Delta_{\bm{k_t}} \ge M_{\bm{k_t}}\}\big)\bigg) \Delta_{\bm{k_t}}\bigg] \\
	&\le \sum_{(i,u,b) \in \cA}\frac{108|\cE'| \ln T}{\Delta_{min}^{i,u,b}} + 2|\cA| + \frac{\pi^2}{3}|\cA|\Delta_{max}\\
	&= \sum_{(i,u,b) \in \cA}\frac{108m|\cV| \ln T}{\Delta_{min}^{i,u,b}} + 2|\cA| + \frac{\pi^2}{3}|\cA|\Delta_{max}.
	\end{align*}
	
	For distribution independent bound, we take $M_{i,u,b} = M =\sqrt{108|\cE'||\cA| \ln T /T}$, then we have
	$\sum_{t=1}^T \mathbb{I}\{\Delta_{\bm{k_t}} \le M_{\bm{k_t}}\} \le TM$.
	
	Then 
	\begin{align*}
	&Reg_{\mu}(T) \\
	&\le \sum_{(i,u,b) \in \cA}\frac{108|\cE'| \ln T}{M_{i,u,b}} + 2|\cA| + \frac{\pi^2}{3}|\cA|\Delta_{max} + TM \\
	&\le \frac{108|\cE'||\cA| \ln T}{M} + 2|\cA| + \frac{\pi^2}{3}|\cA|\Delta_{max} + TM \\
	&= 2\sqrt{108|\cE'| |\cA|T\ln T} + \frac{\pi^2}{3}|\cA|\Delta_{max} + 2|\cA| \\
	&\le 21\sqrt{m|\cV||\cA|T\ln T} + \frac{\pi^2}{3}|\cA|\Delta_{max} + 2|\cA|.
	\end{align*}
\end{proof}

\subsection{Extension for Random Node Weights}\label{appendix: over extend}
\begin{algorithm}[t]
	\caption{CUCB-MAX-R: Combinatorial Upper Confidence Bound algorithm for the overlapping \mulane, with Random node weights}\label{alg: extend over online}
	\resizebox{1.0\columnwidth}{!}{
\begin{minipage}{\columnwidth}
	\begin{algorithmic}[1]
	    \INPUT{Budget $B$, number of layers $m$, number of nodes $|\cV|$, constraints $\bc$, offline oracle BEG.}
		\STATE For each arm $(i,u,b) \in \cA$, $T_{i,u,b} \leftarrow 0$, $\hat{\mu}_{i,u,b} \leftarrow 0$. 
		\STATE {For each node $v \in \cV$, $T'_v \leftarrow 0$, $\hat{\sigma}_v \leftarrow 0$.}\label{line: extend over initial weight}
		\FOR {$t=1,2,3,...,T $}
		    \FOR {$(i,u,b) \in \cA$}
		    \STATE $\rho_{i,u,b} \leftarrow \sqrt{3\ln t/(2T_{i,u,b})}$.  \label{line: extend over online radius}
		    \STATE $\tilde{\mu}_{i,u,b} \leftarrow \min \{\hat{\mu}_{i,u,b} + \rho_{i,u,b}, 1\}$. 
		    \label{line: extend over online ucb}
            \ENDFOR
            \FOR {$v \in \cV$}
		    \STATE $\rho'_{v} \leftarrow \sqrt{3\ln t/(2T'_{v})}$.  \label{line: extend over online radius for nodes}
		    \STATE $\hat{\sigma}_{v} \leftarrow \min \{\hat{\sigma}_{v} + \rho'_{v}, 1\}$. 
		    \label{line: extend over online ucb for nodes}
            \ENDFOR
        \STATE For $(i,u,b)\in \cA$, $\bar{\mu}_{i,u,b} \leftarrow \max_{j \in [b]} {\tilde{\mu}_{i,u,j}}$.		
		\label{line: extend over online validate}\label{alg: extend valid}
		\STATE $\bm{k} \leftarrow$ BEG($(\bar{\mu}_{i,u,b})_{(i,u,b)\in\cA}$, $(\bar{\sigma}_v)_{v \in \cV}$, $B$, $\bc$).
		\label{line: extend over online oracle}
		\STATE Apply budget allocation $\bk$, which gives trajectories $\bm{X}\defeq(X_{i,1}, ..., X_{i,k_i})_{i\in [m]}$ as feedbacks. \label{line: extend over online feedback}
		\STATE { For any visited node $v \in \bigcup_{i\in[m]}\{X_{i,1}, ..., X_{i,k_{i}}\}$, receive its random node weight $\tilde{\sigma}_v$, update $T'_{v} \leftarrow T'_{v} + 1$,  $\hat{\sigma}_v \leftarrow \hat{\sigma}_v  + (\tilde{\sigma}_v - \hat{\sigma}_v )/T'_{v}$}\label{line: extend update weight}
		\STATE For any $(i,u,b) \in \tau \defeq \{(i,u,b) \in \cA \vert \, b \le k_i\}$, \\$Y_{i,u,b} \leftarrow 1$ if $u \in \{X_{i,1}, ..., X_{i, b}\}$ and $0$ otherwise.\label{line: extend over bernoulli rv}
		\STATE For $(i,u,b) \in \tau$, update $T_{i,u,b}$ and $\hat{\mu}_{i,u,b}$:\\ 
		$T_{i,u,b} \leftarrow T_{i,u,b} + 1$,  $\hat{\mu}_{i,u,b} \leftarrow \hat{\mu}_{i,u,b} + (Y_{i,u,b} - \hat{\mu}_{i,u,b})/T_{i,u,b}.$\label{line: extend over online update}
		\ENDFOR
		\end{algorithmic}
		\end{minipage}}
\end{algorithm}
In this section, we extend the deterministic node weight $\sigma_u$ for node $u \in \cV$ to a random weight. 
Specifically, we denote $\tilde{\sigma}_u\in[0,1]$ as the random weight when we first visit $u$ (after the first visit we will not get any reward so on so forth), and denote $\sigma_u\defeq \E[\tilde{\sigma}_u]$.

For the offline setting, the reward function in~Eq.(\ref{eq: over reward}) remains the same since $\tilde{\sigma}_u$ is independent of all other random variables.
For the online setting, since the reward function remains unchanged, property ~\ref{cond: over mono} and property~\ref{cond: over 1-norm} still hold.
However, we need to change the way we maintain the optimistic node weight $\bar{\sigma}_u$.

We present our CUCB-MAX-R algorithm in Alg.~\ref{alg: extend over online}.
Compared with Alg.~\ref{alg: over online}, the major difference is that $\bar{\sigma}_u$ now represent the upper confidence bound (UCB) value of $\sigma_u$.
In line~\ref{line: extend over initial weight}, we denote $T'_v$ as the number of times node $v$ is played so far and $\hat{\sigma}_v$ as the empirical mean of $v$'s weight.
In line~\ref{line: extend over online radius for nodes}, $\rho'_v$ is the confidence radius and $\bar{\sigma}_u$ is the UCB value of node weight $\sigma_u$.
In line~\ref{line: extend update weight}, we update the empirical mean $\hat{\sigma}_v$ for any visited $v$.

\noindent{\textbf{Regret analysis}}.
Let $\cA$ denote the set containing all base arms for visiting probabilities, i.e.,  $\cA = \{(i, u, b) | i \in [m], u \in \cV, b \in [c_i]\}$.
With a bit of abuse of notation, we also use $\cV$ to denote the set of base arms corresponding to the node weights.
Therefore, $\cA \cup \cV$ is the set of all base arms, which is different from the Section~\ref{appendix: over regret analysis} that only has $\cA$ as base arms and does not have base arms for random weights.
Following the notations of ~\cite{wang2017improving}, we define $X_a \in [0,1]$ be the random outcome of base arm $a \in \cA\cup\cV$ and define $D$ as the unknown joint distribution of random outcomes $\bm{X}=(X_a)_{a\in \cA \cup \cV}$ with unknown mean $\mu_a=\E_{\bm{X}\sim D}[X_a]$.
Let $\bk$ be any feasible budget allocation, we denote $p_a^{D, \bk}$ the probability that action $\bk$ triggers arm $a$ (i.e. outcome $X_a$ is observed) when the unknown environment distribution is $D$.
Specifically, $p_a^{D, \bk}=1$ for $a \in \{(i, u, b) | i \in [m], u \in \cV, b = k_i\}$, $p_a^{D,\bk}=1-\prod_{i\in [m]}(1-\mu_{i,a,k_i})$ for $a \in \cV$ and $p_i^{D,\bk}=0$ for the rest of base arms.
Therefore, we can rewrite the inequality in property~\ref{cond: over 1-norm} as 
\begin{align}
    &|r_{\bm{\mu}, \bsig}(\bm{k}) - r_{\bm{\mu'}, \bsig'}(\bm{k})| \notag\\
    &\le \sum_{i \in [m], u \in \cV, b = k_i}(\sigma_u'|\mu_{i,u,b} - \mu'_{i,u, b}|)\notag\\
    &+\sum_{u\in \cV}\abs{\sigma_u-\sigma_u'}\left(1-\prod_{i\in[m]}(1-\mu_{i,u,b})\right)\notag\\
    &\le\sum_{a\in\cA}p_a^{D,\bk}|\mu_a-\mu'_a|,
\end{align}\label{ieq: extend new property 2}
where we abuse the notation to denote $\mu_a$ as $\sigma_a$ and $\mu'_a$ as $\sigma'_a$ if $a \in \cA_2$ in the last inequality.
We can observe that the above rewritten property~\ref{cond: over 1-norm} can be viewed as a special case of the 1-norm TPM bounded smoothness condition in ~\cite{wang2017improving}.

Now we can apply the similar proof in Section B.3 in ~\cite{wang2017improving}, together with how we deal with the $|\bar{\mu}_{i,u,b,t} - \mu_{i,u,b}|$ in Section~\ref{appendix: over proof details} to get a distribution-dependent regret bound.

Let $K=m|\cV|+|\cV|$ be the maximum number of triggered arms, $|\cA \cup \cV|=Bm|\cV|+|\cV|$ be the number of base arms and $\lceil x \rceil_0=\max\{\lceil x \rceil, 0\}$.
For any feasible budget allocation $\bk$ and any base arm $a \in \cA \cup \cV$, the gap $\Delta_{\bk}=\max\{0, \xi r_{\bm{\mu}, \bsig}(\bk^*)-r_{\bm{\mu},\bsig}(\bk)\}$ and we define $\Delta_{\min}^a=\inf_{\bk:p_{a}^{D,\bk}>0, \Delta_{\bk}>0}\Delta_{\bk}$, $\Delta_{\max}^a=\sup_{\bk:p_{a}^{D,\bk}>0, \Delta_{\bk}>0}\Delta_{\bk}$.
As convention, if there is no action $\bk$ such that $p_a^{D,\bk}>0$ and $\Delta_{\bk}>0$, we define $\Delta_{\min}^a=+\infty, \Delta_{\max}^a=0.$
We define $\min_{a\in\cA}\Delta_{\min}^a$ and $\max_{a\in\cA}\Delta_{\max}^a$.
\begin{theorem}
Algorithm~\ref{alg: extend over online} has the following distribution-dependent $(1-e^{-\eta}, 1)$ approximation regret,
\begin{align*}
   &Reg_{\bm{\mu}, \bsig}(T) \le \sum_{a \in \cA \cup \cV}\frac{576K \ln T}{\Delta_{\min}^{a}}\\ 
   &+ 4|\cA \cup \cV| + \sum_{a\in\cA \cup \cV}\left(\lceil\log_2\frac{2K}{\Delta_{\min}^a}\rceil_0+2\right)\frac{\pi^2}{6}\Delta_{\max},
\end{align*}
\end{theorem}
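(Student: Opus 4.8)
The plan is to cast the random-weight variant (Alg.~\ref{alg: extend over online}) into the combinatorial multi-armed bandit with probabilistically triggered arms (CMAB-T) framework of \cite{wang2017improving}, with two kinds of base arms: the visiting-probability arms $\cA=\{(i,u,b)\}$ and the node-weight arms indexed by $\cV$. The structural observation that makes this work, already recorded when Property~\ref{cond: over 1-norm} was rewritten, is that under an action $\bk$ a visiting-probability arm $(i,u,b)$ with $b=k_i$ is triggered deterministically ($p_a^{D,\bk}=1$), whereas a node-weight arm $u$ is triggered precisely when $u$ is visited, i.e.\ with probability $p_u^{D,\bk}=1-\prod_{i\in[m]}(1-\mu_{i,u,k_i})$. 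The rewritten bound $|r_{\bm\mu,\bsig}(\bk)-r_{\bm{\mu'},\bsig'}(\bk)|\le \sum_{a\in\cA\cup\cV}p_a^{D,\bk}|\mu_a-\mu'_a|$ is exactly the 1-norm TPM bounded smoothness condition with coefficient $1$, and monotonicity carries over verbatim from Property~\ref{cond: over mono} because the expected reward is unchanged when deterministic weights are replaced by means of independent random weights. Thus both hypotheses of the general CMAB-T regret theorem are in place from the start.

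First I would build the concentration scaffold for both arm types. Define the nice event $\cN_t$ requiring $|\hat\mu_{i,u,b,t-1}-\mu_{i,u,b}|<\rho_{i,u,b,t}$ for every $(i,u,b)\in\cA$ and $|\hat\sigma_{v,t-1}-\sigma_v|<\rho'_{v,t}$ for every $v\in\cV$; a union bound over $|\cA\cup\cV|$ arms together with Hoeffding's inequality (Fact~\ref{fact: hoef}) gives $\Pr(\neg\cN_t)\le 2|\cA\cup\cV|\,t^{-2}$. Under $\cN_t$ I then establish optimism for both kinds of estimates: $\bar\sigma_{v,t}\ge\sigma_v$ follows directly from the UCB construction, and $\bar\mu_{i,u,b,t}\ge\mu_{i,u,b}$ follows from the max operation in line~\ref{line: extend over online validate} combined with the monotonicity of $\mu_{i,u,b}$ in $b$, exactly as in Section~\ref{appendix: over proof details}. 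Optimism plus the $(\xi,1)$-guarantee of the BEG oracle yields $r_{\bar{\bm\mu}_t,\bar{\bsig}_t}(\bk_t)\ge\xi\,r_{\bm\mu,\bsig}(\bk^*)$, so the per-round gap $\Delta_{\bk_t}$ is controlled by the TPM smoothness condition applied to the deviations $\bar{\bm\mu}_t-\bm\mu$ and $\bar{\bsig}_t-\bsig$.

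Next I would run the triggering-probability-modulated accounting of \cite{wang2017improving}. Instead of the single $\kappa_T$ function from the deterministic proof, the triggering probabilities $p_a^{D,\bk}$ are partitioned into geometrically spaced groups, and the bad-trigger contribution of each arm is summed group by group; this bucketing is the source of both the leading $576K\ln T/\Delta^a_{\min}$ term, with $K=m|\cV|+|\cV|$ the maximum number of simultaneously triggered arms, and the $\lceil\log_2(2K/\Delta^a_{\min})\rceil_0$ factor multiplying the $\frac{\pi^2}{6}\Delta_{\max}$ tail. The one nonstandard ingredient to graft on is the bound for the visiting-probability arms: since $\bar\mu_{i,u,b,t}$ is a maximum of UCB values over $j\in[b]$, I bound $\bar\mu_{i,u,b,t}-\mu_{i,u,b}\le 2\rho_{i,u,l,t}+\rho_{i,u,b,t}\le 3\rho_{i,u,b,t}$ using $T_{i,u,j,t-1}\ge T_{i,u,b,t-1}$ for $j\le b$, which absorbs the max into a constant-factor inflation of the confidence radius.

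The main obstacle will be reconciling the TPM bucketing argument with this max-operation bound. The analysis of \cite{wang2017improving} counts bad triggers assuming an arm's UCB depends only on its own pulls, but here the UCB of $(i,u,b)$ couples to all $(i,u,j)$ with $j\le b$ through the max. The delicate point is to verify that the inflated confidence radius $3\rho_{i,u,b,t}$ still admits the same per-arm trigger count, so that the group-wise sums telescope to the claimed $576K\ln T/\Delta^a_{\min}$ leading term; the genuinely probabilistically triggered node-weight arms, by contrast, are handled directly by the standard TPM machinery.
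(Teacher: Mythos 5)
Your proposal matches the paper's own argument essentially step for step: the paper also reduces Algorithm~\ref{alg: extend over online} to the CMAB-T framework of \cite{wang2017improving} by treating $\cA \cup \cV$ as base arms, defining the triggering probabilities $p_a^{D,\bk}$ exactly as you do, rewriting Property~\ref{cond: over 1-norm} as the 1-norm TPM bounded smoothness condition, and then invoking the regret analysis of \cite{wang2017improving} combined with the $3\rho_{i,u,b,t}$ treatment of the max-operation from Section~\ref{appendix: over proof details}. The coupling issue you flag as the main obstacle is precisely the point the paper handles by that same constant-factor inflation (justified by $T_{i,u,j,t-1}\ge T_{i,u,b,t-1}$ for $j\le b$, since all arms in $\tau$ are updated), so your plan is correct and is the paper's proof.
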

We can also have a distribution-independent bound,
\begin{align*}
   &Reg_{\bm{\mu}, \bsig}(T) \le 12\sqrt{|\cA \cup \cV|KT \ln T}\\ 
   &+ 2|\cA \cup \cV| + \left(\lceil\log_2\frac{T}{18\ln T}\rceil_0+2\right)\frac{\pi^2}{6}\Delta_{\max},
\end{align*}
\noindent\textbf{Remark.}
Compared with the regret bound in Thm.~\ref{thm: over regret} in the main text, we can see the base arms now become $\cA \bigcup \cV$ instead of $\cA$, which is worse than the regret bound of the fixed unknown weight setting.
This is a trade-off since we need more estimation to handle the uncertain random node weight, which incurs additional regrets.
It also shows our non-trivial analysis of the fixed unknown weights carefully merge the error term into base arms $\cA$ and incurs only constant-factor of extra regrets. 

\section{Online Learning for Non-overlapping \mulane}\label{appendix: non over online}
\subsection{Online Learning Algorithm for Non-overlapping \mulane}
For non-overlapping \mulane, we estimate layer-level marginal gains as our parameters, rather than probabilities $P_{i,u}(b)$ and node weights $\bsig$.
By doing so, we can largely simply our analysis since we no longer need to handle the unknown weights and the extra constraint of $P_{i,u}(b)$ as in the overlapping setting.
Concretely, we maintain a set of base arms $\cA = \{(i,b) | i \in [m], b \in [c_i]\}$, and let $|\cA|=\sum_{i \in [m]}c_i$ be the total number of these arms.
For each base arm $(i,b) \in \cA$., denote $\mu_{i,b}$ as the truth value of each base arm, i.e., $\mu_{i,b}=\sum_{u \in \cV}\sigma_u(P_{i,u}(b)-P_{i,u}(b-1))$.
Based on the definition of base arms, we can see that we do not need to explicitly estimate the node weights $\bsig$ or probabilities $P_{i,u}(b)$, so we will use $r{\bm{\mu}}({\bm{k}})$ to denote the reward function $r_{\cG, \bm{\alpha}, \bsig}(\bk)$.

We present our algorithm in Alg.~\ref{alg: non-over online}, which has three differences compared with the Alg.~\ref{alg: over online} for overlapping \mulane.
First, the expected reward of action $\bm{k}$ is $r_{\bm{\mu}}(\bm{k})=\sum_{i=1}^{m}\sum_{b=1}^{k_i}\mu_{i,b}$, which is a linear reward function and satisfies two new Monotonicity and 1-Norm Bounded Smoothness conditions (see Condition~\ref{cond: mono} and~\ref{cond: 1-norm}).
Second, we use the Dynamic Programming method (Alg.~\ref{alg: dynamic programming}) as our Oracle, which does not require the parameters $\mu_{i,b}$ to be increasing.
Moreover, the offline oracle always outputs \textit{optimal} solutions.
Third, we use a new random variable $Y_{i,b}$, which indicates the gain of weights which depends on whether a new node is visited by random walker from the $i$-th layer exactly at the $b$-th step, i.e., $Y_{i,b} = \sum_{u\in \bigcup_{j=1}^b \{X_{i,j}\}} \sigma_u - \sum_{u \in \bigcup_{j=1}^{b-1} \{X_{i,j}\}}\sigma_u$ for base arms $(i,b)$ such that $b \le k_i$. 

\begin{condition}[Monotonicity] \label{cond: mono}
The reward $r_{\bm{\mu}}(\bm{k})$ is monotonically increasing, i.e., for any budget allocation $\bm{k}$ and any two vectors $\bm{\mu} = (\mu_{i,b})_{(i,b) \in \cA}$, $\bm{\mu'}=(\mu'_{i,b})_{(i,b) \in \cA}$, we have $r_{\bm{\mu}}(\bm{k}) \le r_{\bm{\mu'}}(\bm{k})$, if $\mu_{i,b} \le \mu'_{i,b}$, for $(i,b) \in \cA$.
\end{condition}

\begin{condition}[1-Norm Bounded Smoothness]\label{cond: 1-norm}
The reward function $r_{\bm{\mu}}(\bm{k})$ satisfies the 1-norm bounded smoothness, i.e., for any budget allocation $\bm{k}$ and any two vectors $\bm{\mu} = (\mu_{i,b})_{(i,b)\in \cA}$, $\bm{\mu'}=(\mu'_{i,b})_{(i,b)\in \cA}$, we have $|r_{\bm{\mu}}(\bm{k}) - r_{\bm{\mu'}}(\bm{k})| \le \sum_{b \le k_i}|\mu_{i,b} - \mu'_{i,b}|$.
\end{condition}

We define the reward gap $\Delta_{\bm{k}}=r_{\bm{\mu}}(\bm{k^*}) - r_{\bm{\mu}}(\bm{k})$ for all feasible action $\bm{k}$ satisfying $\sum_{i=1}^{m}k_i = B$, $0 \le k_i \le c_i $. 
For each base arm $(i,b)$, we define  $\Delta_{min}^{i,b}=\min_{\Delta_{\textbf{k}}>0, k_i \ge b}\Delta_{\textbf{k}}$ and $\Delta_{max}^{i,b}=\max_{\Delta_{\textbf{k}}>0, k_i \ge b}\Delta_{\textbf{k}}$. 
As a convention, if there is no action with $k_i \ge b$ such that $\Delta_{\bm{k}} > 0$, 
we define $\Delta_{\min}^{i,b}=\infty$ and $\Delta_{\max}^{i,b}=0$. 
Also, we define $\Delta_{\min} = \min_{(i,b) \in \cA}\Delta_{\min}^{i,b}$ and $\Delta_{\max} = \max_{(i,b) \in \cA}\Delta_{\max}^{i,b}$.
\begin{restatable}{theorem}{thmNonOVer} \label{thm: non over regret bound}
	CUCB-MG
	has the following distribution-dependent regret bound,
	\begin{equation*}\textstyle
	Reg_{\mu}(T) \le \sum_{(i,b) \in \cA}\frac{48 B \ln T}{\Delta_{\min}^{i,b}} + 2|\cA| + \frac{\pi^2}{3}|\cA|\Delta_{\max}.
	\end{equation*}
\end{restatable}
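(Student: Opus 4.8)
The plan is to follow the standard CUCB regret template of \citet{chen2016combinatorial}, specialized to the linear reward $r_{\bm{\mu}}(\bm{k})=\sum_{i\in[m]}\sum_{b=1}^{k_i}\mu_{i,b}$ whose base arms are the layer-level marginal gains $(i,b)$. First I would verify the estimator properties needed for concentration: the statistic $Y_{i,b}$ recorded in CUCB-MG equals the weight of the node newly discovered by $W_i$ exactly at step $b$, so $Y_{i,b}\in[0,1]$, $\E[Y_{i,b}]=\sum_{u\in\cV}\sigma_u\big(P_{i,u}(b)-P_{i,u}(b-1)\big)=\mu_{i,b}$, and the samples collected in distinct rounds are i.i.d.\ because each round restarts from $\bm{\alpha}_i$. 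Conditions~\ref{cond: mono} and~\ref{cond: 1-norm} are immediate from linearity of $r_{\bm{\mu}}$: coordinate-wise $\mu_{i,b}\le\mu'_{i,b}$ over the triggered arms gives $r_{\bm{\mu}}(\bm{k})\le r_{\bm{\mu'}}(\bm{k})$, and $|r_{\bm{\mu}}(\bm{k})-r_{\bm{\mu'}}(\bm{k})|\le\sum_{b\le k_i}|\mu_{i,b}-\mu'_{i,b}|$.

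Next I would introduce the nice event $\mathcal{N}_t=\{|\hat\mu_{i,b,t-1}-\mu_{i,b}|<\rho_{i,b,t}\text{ for all }(i,b)\in\cA\}$ with $\rho_{i,b,t}=\sqrt{3\ln t/(2T_{i,b,t-1})}$, and bound $\Pr(\neg\mathcal{N}_t)\le 2|\cA|t^{-2}$ by Hoeffding's inequality (Fact~\ref{fact: hoef}) exactly as in Lemma~\ref{lem: over nice event}, using the independence and boundedness of $Y_{i,b}$ established above. On $\mathcal{N}_t$ the truncated UCB over-estimates each mean, $\bar\mu_{i,b,t}=\min\{\hat\mu_{i,b,t-1}+\rho_{i,b,t},1\}\ge\mu_{i,b}$ (truncation at $1$ is legitimate since at most one new node is found per step, whence $\sum_u g_{i,u}(b)\le 1$ and $\mu_{i,b}\le 1$); because the DP oracle is \emph{exact} ($\xi=\beta=1$) we obtain $r_{\bar{\bm{\mu}}_t}(\bm{k}_t)\ge r_{\bar{\bm{\mu}}_t}(\bm{k}^*)\ge r_{\bm{\mu}}(\bm{k}^*)$, hence $\Delta_{\bm{k}_t}=r_{\bm{\mu}}(\bm{k}^*)-r_{\bm{\mu}}(\bm{k}_t)\le r_{\bar{\bm{\mu}}_t}(\bm{k}_t)-r_{\bm{\mu}}(\bm{k}_t)$. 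Applying Condition~\ref{cond: 1-norm} and then $\bar\mu_{i,b,t}-\mu_{i,b}\le 2\rho_{i,b,t}$ (no max-operation correction is needed, since DP tolerates non-monotone inputs) yields $\Delta_{\bm{k}_t}\le\sum_{(i,b):\,b\le k_{t,i}}2\rho_{i,b,t}$.

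The key structural observation is that the triggered set $S_t=\{(i,b):b\le k_{t,i}\}$ has size exactly $\sum_i k_{t,i}=B$, which replaces the $m|\cV|$ batch size of the overlapping case and drives the $48B$ coefficient. I would then run the standard reverse-amortization argument: introduce thresholds $M_{i,b}$, set $M_{\bm{k}_t}=\max_{(i,b)\in S_t}M_{i,b}$, and on $\{\Delta_{\bm{k}_t}\ge M_{\bm{k}_t}\}\cap\mathcal{N}_t$ split each summand via $2\rho_{i,b,t}-M_{i,b}/(2B)$. Defining $\kappa_T(M,s)=2\sqrt{6\ln T/s}$ on $1\le s\le\ell_T(M)=\lfloor 24B^2\ln T/M^2\rfloor$ (with value $2$ at $s=0$ and $0$ beyond $\ell_T$), a per-arm count over the rounds that trigger $(i,b)$ gives $\sum_{s=0}^{T_{i,b,T}}\kappa_T(M_{i,b},s)\le 2+48B\ln T/M_{i,b}$ after bounding the sum by an integral. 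Summing over $\cA$, choosing $M_{i,b}=\Delta_{\min}^{i,b}$ so that the small-gap indicator vanishes, and adding the $\neg\mathcal{N}_t$ contribution $\sum_t 2|\cA|t^{-2}\Delta_{\max}\le\tfrac{\pi^2}{3}|\cA|\Delta_{\max}$ together with the $2|\cA|$ from the $s=0$ terms, assembles the claimed bound via $Reg_{\bm{\mu}}(T)=\E[\sum_{t=1}^T\Delta_{\bm{k}_t}]$.

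I do not expect a serious obstacle, as this analysis is a strict simplification of the overlapping proof of Theorem~\ref{thm: over regret}; the only points requiring care are (i) confirming $Y_{i,b}\in[0,1]$, its unbiasedness, and cross-round independence, and (ii) tracking the constants through $\kappa_T$ and $\ell_T$ to land on $48B$ rather than the overlapping $108m|\cV|$. The improvement comes precisely from eliminating both the approximation factor and the max-operation slack, since directly estimating layer-level marginal gains removes the unknown-weight complication and the exact DP oracle accepts arbitrary, possibly non-monotone, inputs.
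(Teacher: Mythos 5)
Your proposal is correct and follows essentially the same route as the paper's proof: the same nice event with Hoeffding concentration, the same use of monotonicity and 1-norm bounded smoothness with the exact DP oracle, the same triggered set $S_t$ of size $B$, the same $\kappa_T$/$\ell_T(M)=\lfloor 24B^2\ln T/M^2\rfloor$ reverse-amortization with $M_{i,b}=\Delta_{\min}^{i,b}$, and the same assembly of the three regret contributions. The constants ($48B$, $2|\cA|$, $\tfrac{\pi^2}{3}|\cA|\Delta_{\max}$) and the arm-gap definition with $k_i\ge b$ all match the paper's argument.
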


\begin{algorithm}[t]
	\caption{CUCB-MG: Combinatorial Upper Confidence Bound (CUCB) algorithm for non-overlapping \mulane, using
		Marginal Gains as the base arms}\label{alg: non-over online}
	\begin{algorithmic}[1]
	    \INPUT{Budget $B$, number of layers $m$, offline oracle DP.}
		\STATE For each $(i,b) \in \cA$, $T_{i,b} \leftarrow 0$, $\hat{\mu}_{i,b} \leftarrow 1$.\; 
		\FOR {$t=1,2,3,...,T $}
		\FOR{$(i,b)\in \cA$}
		\STATE $\rho_{i,b} \leftarrow \sqrt{3\ln t/(2T_{i,b})}$.  
		\STATE $\hat{\mu}_{i,b} = \min \{\hat{\mu}_{i,b} + \rho_{i,b}, 1\}$. 
		\ENDFOR
		\STATE $\bm{k} \leftarrow$ DP{$((\bar{\mu}_{i,b})_{i,b}$, $B$, $\bc$})
		\STATE Apply budget allocation $\bk$, which gives trajectories $\bm{X}\defeq(X_{i,1}, ..., X_{i,k_i})_{i\in[m]}$ as feedbacks. \label{line: non over online feedback}
		\STATE For $(i,b) \in \tau \defeq \{(i,b)\in \cA \lvert \, b \le k_i\}$, \\
		$Y_{i,b} = \sum_{u\in \bigcup_{j=1}^b \{X_{i,j}\}} \sigma_u - \sum_{u \in \bigcup_{j=1}^{b-1} \{X_{i,j}\}}\sigma_u$.
		\STATE For $(i,b) \in \tau$, update $T_{i,b}$ and $\hat{\mu}_{i,b}$: 
		\STATE $T_{i,b} = T_{i,b} + 1$, $\hat{\mu}_{i,b} = \hat{\mu}_{i,b} + (Y_{i,b} - \hat{\mu}_{i,b})/T_{i,b}.$
		\ENDFOR
		\end{algorithmic}
\end{algorithm}

\subsection{Regret Analysis for Non-overlapping \mulane}
In this section, we give the regret analysis for CUCB-MG, which applies the standard CUCB algorithm \cite{wang2017improving} to this setting.
\subsubsection{Proof Details}

Let $\cA$ denote the set containing all base arms, i.e.,  $\cA = \{(i, b) | i \in [m], b \in [c_i]\}$.
We add subscript $t$ to denote the value of a variable at the end of $t$. For example, $T_{i,b,t}$ denotes the total times of arm $(i,b) \in \cA$ is played at the end of round $t$.
Let us first introduce a definition of an unlikely event that $\hat{\mu}_{i,b,t-1}$ is not accurate as expected.

\begin{definition}
We say that the sampling is nice at the beginning of round t, if for every arm $(i,b) \in \cA$, $|\hat{\mu}_{i,b,t-1} - \mu_{i,b}| < \rho_{i,b,t}$, where $\rho_{i,b,t} = \sqrt{\frac{3\ln t}{2T_{i,b, t-1}}}$. Let $\mathcal{N}_t$ be such event.
\end{definition}

\begin{lemma} \label{lem: non over nice event}
	For each round $t \ge 1$, $Pr\{\neg N_t\} \le 2|\cA|t^{-2}$
\end{lemma}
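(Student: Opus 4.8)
The plan is to mirror the proof of Lemma~\ref{lem: over nice event} from the overlapping case, since the structure is identical once the correct estimator is identified. First I would apply a union bound over the base arms and then partition each term according to the number of plays $T_{i,b,t-1}=k$, writing
\[
\Pr\{\neg \mathcal{N}_t\} \le \sum_{(i,b) \in \cA} \sum_{k=1}^{t-1}\Pr\left\{T_{i,b,t-1}=k \,\wedge\, |\hat{\mu}_{i,b,t-1} - \mu_{i,b}| \ge \sqrt{\tfrac{3\ln t}{2k}}\right\}.
\]
Here $k$ ranges only over $1,\ldots,t-1$ because at the end of round $t-1$ an arm has been played at most $t-1$ times, and the case $k=0$ is vacuous (the radius $\rho_{i,b,t}$ is infinite).

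The key observation is that, conditioned on $T_{i,b,t-1}=k$, the empirical mean $\hat{\mu}_{i,b,t-1}$ is the average of $k$ independent and identically distributed copies of the feedback variable $Y_{i,b}$ defined in Alg.~\ref{alg: non-over online}. Two facts must be checked. First, $Y_{i,b}\in[0,1]$: at step $b$ the walker visits a single node $X_{i,b}$, so $Y_{i,b}$ equals $\sigma_{X_{i,b}}\in[0,1]$ if $X_{i,b}$ is freshly visited and $0$ otherwise. Second, $\E[Y_{i,b}]=\mu_{i,b}$: taking expectations and using $\Pr(u\in\Phi(i,b))=P_{i,u}(b)$ gives $\E[Y_{i,b}]=\sum_{u\in\cV}\sigma_u\,(P_{i,u}(b)-P_{i,u}(b-1))=\mu_{i,b}$. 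The independence across the $k$ samples comes from the fact that each round draws a fresh starting node from $\bm{\alpha}_i$ and runs an independent walk.

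With these facts in hand, I would apply Hoeffding's inequality (Fact~\ref{fact: hoef}) with $n=k$ and deviation $\delta=k\rho_{i,b,t}=\sqrt{3k\ln t/2}$, so that $2\delta^2/n=3\ln t$ and each conditional probability is at most $2e^{-3\ln t}=2t^{-3}$. Summing over $k=1,\ldots,t-1$ gives at most $2(t-1)t^{-3}\le 2t^{-2}$ per arm, and summing over the $|\cA|$ base arms yields $\Pr\{\neg\mathcal{N}_t\}\le 2|\cA|t^{-2}$, as claimed. I expect no genuine obstacle: the only difference from the overlapping analysis is that the feedback $Y_{i,b}$ is now a weighted rather than $\{0,1\}$-valued variable, but it still lies in $[0,1]$ and is unbiased, so Hoeffding's inequality applies without modification.
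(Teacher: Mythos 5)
Your proposal is correct and follows essentially the same route as the paper's proof: a union bound over base arms, a decomposition over the number of plays $T_{i,b,t-1}=k$, and Hoeffding's inequality (Fact~\ref{fact: hoef}) applied to the $k$ i.i.d.\ samples, yielding $2t^{-3}$ per term and hence $2|\cA|t^{-2}$ overall. In fact your write-up is slightly more careful than the paper's, which loosely calls $Y_{i,b}$ a ``Bernoulli'' variable, whereas you correctly observe it is merely $[0,1]$-valued and unbiased for $\mu_{i,b}$ --- exactly what Hoeffding's inequality requires.
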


\begin{proof}
	For each round $t \ge 1$, we have 
	\begin{align*}
	&Pr\{\neg \mathcal{N}_t\} \\
	&= Pr\{\exists (i,b)\in  \cA, |\hat{\mu}_{i,b,t-1} - \mu_{i,b}| \ge \sqrt{\frac{3\ln t}{2T_{i,b, t-1}}}\} \\
	&\le \sum_{(i,b)\in  \cA}Pr\{ |\hat{\mu}_{i,b,t-1} - \mu_{i,b}| \ge \sqrt{\frac{3\ln t}{2T_{i,b, t-1}}}\} \\
	&=\sum_{(i,b)\in  \cA}\sum_{k=1}^{t-1}Pr\{ T_{i,b,t-1} = k \\
	&\wedge |\hat{\mu}_{i,b,t-1} - \mu_{i,b}| \ge \sqrt{\frac{3\ln t}{2T_{i,b, t-1}}}\} \\
	&\le \sum_{(i,b)\in  \cA}\sum_{k=1}^{(t-1)} \frac{2}{t^{3}} \le 2|\cA|t^{-2}. \qquad
	\end{align*}
	
	When $T_{i,b,t-1} = k$, $\hat{\mu}_{i,b,t-1}$ is the average of $k$ i.i.d. random variables $Y_{i,b}^{[1]}, ..., Y_{i,b}^{[k]}$, where $Y_{i,b}^{[j]}$ is the Bernoulli random variable of arm $(i,b)$ when it is played for the $j$-th time during the execution. 
	That is, $\hat{\mu}_{i,b,t-1}=\sum_{j=1}^k Y_{i,b}^{[j]}/k$. 
	Note that the independence of $Y_{i,b}^{[1]}, ..., Y_{i,b}^{[k]}$ comes from the fact we select a new starting position following the starting distribution $\bm{\alpha}_i$ at the beginning of each round $t$. The last inequality uses the Hoeffding Inequaility (Fact~\ref{fact: hoef}).
\end{proof}

Then we can use monotonicity and 1-norm bounded smoothness propterty (Condition~\ref{cond: mono} and Condition~\ref{cond: 1-norm})to bound the reward gap $\bm{\Delta_{k_t}}=r_{\bm{\mu}}(\bm{k^*}) - r_{\bm{\mu}}(\bm{k_t})$ between the optimal action $\bm{k^*}$ and the action $\bm{k_t}=(k_{t,1}, ..., k_{t,m})$ selected by our algorithm $A$. 
To achieve this, we introduce a positive real number $M_{i,b}$ for each arm $(i,b)$ and define $M_{\bm{k_t}}=\max_{(i,b) \in S_t} M_{i,b}$, where $S_t=\{(i,b) \in \cA | b \le k_{t,i}\}$ and $|S_t|=B$. Define, 

\begin{equation*}
\kappa_T(M,s) = \begin{cases}
2, & \text{if $s=0$}, \\
2\sqrt{\frac{6\ln T}{s}}, & \text{if $1 \le s \le \ell_T{(M)}$}, \\
0, &\text{if $s \ge \ell_T(M) + 1$},
\end{cases}
\end{equation*}

where
\begin{equation*}
\ell_T(M) = \lfloor \frac{24 B^2\ln T}{M^2} \rfloor.
\end{equation*}

\begin{lemma}
	If $\{\Delta_{\bm{k}_t} \ge M_{\bm{k_t}} \}$ and $\mathcal{N}_t$ holds, we have
	\begin{equation} \label{lem:bound_delta}
	\Delta_{\bm{k}_t} \le \sum_{(i,b) \in S_t}\kappa_T(M_{i,b},T_{i,b,t-1}).
	\end{equation}	
\end{lemma}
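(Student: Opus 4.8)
The plan is to mirror the argument used for the overlapping case (the lemma bounding $\Delta_{\bm{k}_t}$ via $\kappa_T$), while exploiting two simplifications specific to the non-overlapping setting. First I would observe that, since the DP oracle of Alg.~\ref{alg: dynamic programming} does not require monotone inputs, CUCB-MG contains no $\max$-operation: the UCB value is simply $\bar{\mu}_{i,b,t}=\min\{\hat{\mu}_{i,b,t-1}+\rho_{i,b,t},1\}$. Under the nice event $\cN_t$ we have $|\hat{\mu}_{i,b,t-1}-\mu_{i,b}|<\rho_{i,b,t}$, which immediately yields $\mu_{i,b}\le\bar{\mu}_{i,b,t}\le\mu_{i,b}+2\rho_{i,b,t}$ for every $(i,b)\in\cA$. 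This factor-$2$ gap (rather than the factor-$3$ forced by the $\max$ in the overlapping case) is exactly what produces the improved constant in $\kappa_T$.

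Next I would reduce $\Delta_{\bm{k}_t}$ to a sum over $S_t$. The case $\Delta_{\bm{k}_t}=0$ is trivial since the right-hand side is nonnegative, so I consider $\Delta_{\bm{k}_t}>0$. Because the DP oracle returns an \emph{exact} optimum for the parameters $\bar{\bm{\mu}}_t$ (so here $\xi=1$, unlike the overlapping oracle), we have $r_{\bar{\bm{\mu}}_t}(\bm{k}_t)\ge r_{\bar{\bm{\mu}}_t}(\bm{k}^*)$; combined with Condition~\ref{cond: mono} and $\bar{\mu}_{i,b,t}\ge\mu_{i,b}$ this gives $r_{\bar{\bm{\mu}}_t}(\bm{k}^*)\ge r_{\bm{\mu}}(\bm{k}^*)$, whence $\Delta_{\bm{k}_t}=r_{\bm{\mu}}(\bm{k}^*)-r_{\bm{\mu}}(\bm{k}_t)\le r_{\bar{\bm{\mu}}_t}(\bm{k}_t)-r_{\bm{\mu}}(\bm{k}_t)$. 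Applying Condition~\ref{cond: 1-norm} over $S_t=\{(i,b):b\le k_{t,i}\}$ and using $\bar{\mu}\ge\mu$ then gives $\Delta_{\bm{k}_t}\le\sum_{(i,b)\in S_t}(\bar{\mu}_{i,b,t}-\mu_{i,b})$. Crucially, since the base arms already fold in the node weights, there is no extra weight-uncertainty term of the kind present in the overlapping smoothness bound, so this step is cleaner.

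The core of the argument is then the doubling transformation followed by a case split. Using the hypothesis $\Delta_{\bm{k}_t}\ge M_{\bm{k}_t}$ and $|S_t|=B$, I would write $\Delta_{\bm{k}_t}\le -M_{\bm{k}_t}+2\sum_{(i,b)\in S_t}(\bar{\mu}_{i,b,t}-\mu_{i,b})=2\sum_{(i,b)\in S_t}\big(\bar{\mu}_{i,b,t}-\mu_{i,b}-\tfrac{M_{\bm{k}_t}}{2B}\big)$, and replace $M_{\bm{k}_t}=\max_{(i,b)\in S_t}M_{i,b}$ by the smaller $M_{i,b}$ term-by-term. On each summand I plug in $\bar{\mu}_{i,b,t}-\mu_{i,b}\le 2\rho_{i,b,t}=\sqrt{6\ln t/T_{i,b,t-1}}\le\sqrt{6\ln T/T_{i,b,t-1}}$ and split according to whether $T_{i,b,t-1}\le\ell_T(M_{i,b})$: in the first case each summand is at most $\tfrac12\kappa_T(M_{i,b},T_{i,b,t-1})$, and in the second case the choice $\ell_T(M)=\lfloor 24B^2\ln T/M^2\rfloor$ forces $2\rho_{i,b,t}\le M_{i,b}/(2B)$, so the summand is $\le 0=\tfrac12\kappa_T$. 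Multiplying through by $2$ recovers $\sum_{(i,b)\in S_t}\kappa_T(M_{i,b},T_{i,b,t-1})$.

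The only delicate bookkeeping is ensuring the transformation and case-split produce exactly the thresholded $\kappa_T$, since this is what later permits summing the per-round bound into $\sum_s\kappa_T$ with a convergent integral; in particular I would verify that the constant $24B^2$ and the leading factor $2$ in $\kappa_T$ are mutually consistent with the factor-$2$ estimate $\bar{\mu}-\mu\le 2\rho$, which is precisely where the absence of the $\max$-operation pays off relative to the overlapping proof. Otherwise the argument is a direct, and genuinely simpler, transcription of the overlapping lemma.
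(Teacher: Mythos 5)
Your proposal is correct and follows essentially the same route as the paper's own proof: establish $\mu_{i,b}\le\bar{\mu}_{i,b,t}\le\mu_{i,b}+2\rho_{i,b,t}$ under $\cN_t$, chain oracle exactness, monotonicity (Condition~\ref{cond: mono}) and 1-norm smoothness (Condition~\ref{cond: 1-norm}) to get $\Delta_{\bm{k}_t}\le\sum_{(i,b)\in S_t}(\bar{\mu}_{i,b,t}-\mu_{i,b})$, then apply the doubling transformation with $|S_t|=B$, replace $M_{\bm{k}_t}$ by $M_{i,b}$ termwise, and case-split on $T_{i,b,t-1}$ versus $\ell_T(M_{i,b})$. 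The only bookkeeping detail you glossed over is the $T_{i,b,t-1}=0$ case, where $2\rho_{i,b,t}$ is undefined and one must instead use $\bar{\mu}_{i,b,t}-\mu_{i,b}\le 1=\tfrac12\kappa_T(M_{i,b},0)$ (the paper handles this via the bound $\min\{2\rho_{i,b,t},1\}$), which matches your claimed case structure.
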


\begin{proof}

    First, we can observe $\bar{\mu}_{i,b,t} \ge \mu_{i,b}$, for $(i,b) \in \cA$, when $\cN_t$ holds.
    This comes from the fact $\bar{\mu}_{i,b,t}=\min\{\hat{\mu}_{i,b,t-1}+\rho_{i,b,t}, 1\}$ and $|\hat{\mu}_{i,b,t-1} - \mu_{i,b}| < \rho_{i,b,t}$.

	Notice that the right hand of Inequality~(\ref{lem:bound_delta}) is non-negative, it is trivially satisfied when $\Delta_{\bm{k}_t} = 0$. We only need to consider $\Delta_{\bm{k}_t} > 0$. By $\mathcal{N}_t$ and Condition~\ref{cond: mono},
	We have the following inequalities,
	\begin{equation*}
	r_{\bar{\bm{\mu}}_t}(\bm{k_t}) \ge r_{\bar{\bm{\mu}}_t}(\bm{k^*}) \ge r_{{\bm{\mu}}}(\bm{k^*}) = \Delta_{\bm{k_t}} + r_{\bm{\mu}}(\bm{k_t}).
	\end{equation*}
	The first inequality is due to the \textit{oracle} outputs the optimal solution given parameters $\bar{\bm{\mu}}_t$ ,the second inequality is due to Condition~\ref{cond: mono} (monotonicity). 
	
	Then, by Condition~\ref{cond: 1-norm}, we have 
	\begin{equation*}
	\Delta_{\bm{k_t}} \le r_{\bar{\bm{\mu}}_t}(\bm{k_t}) - r_{{\bm{\mu}}}(\bm{k_t}) \le \sum_{(i,b) \in S_t}|\bar{{\mu}}_{i,b, t} - \mu_{i,b}|.
	\end{equation*}
	Next, we can bound $\Delta_{\bm{k_t}}$ by bounding $\bar{{\mu}}_{i,b, t} - \mu_{i,b}$ by applying a transformation. As $\{\Delta_{\bm{k}_t} \ge M_{\bm{k_t}} \}$ holds by assumption, so $\sum_{(i,b) \in S_t}|\bar{{\mu}}_{i,b, t} - \mu_{i,b}| \ge \Delta_{\bm{k_t}} \ge M_{\bm{k_t}}$. We have 
	\begin{align}\label{ieq: online non-over bound k}
	\Delta_{\bm{k_t}} &\le \sum_{(i,b) \in S_t}|\bar{{\mu}}_{i,b, t} - \mu_{i,b}| \nonumber\\
	& \le -M_{\bm{k_t}} + 2\sum_{(i,b) \in S_t}|\bar{{\mu}}_{i,b, t} - \mu_{i,b}| \nonumber\\
	& = 2\sum_{(i,b) \in S_t}(|\bar{{\mu}}_{i,b, t} - \mu_{i,b}| - \frac{M_{\bm{k_t}}}{2B}) \nonumber\\
	& \le 2\sum_{(i,b) \in S_t}(|\bar{{\mu}}_{i,b, t} - \mu_{i,b}| - \frac{M_{i,b}}{2B}) .
	\end{align}
	
	By $\mathcal{N}_t$, we have:
	\begin{align*}
	    |\bar{\mu}_{i,b,t} - \mu_{i,b}| - \frac{M_{i,b}}{2B} &\le  \min\{2\rho_{i,b,t}, 1\} - \frac{M_{i,b}}{2B} \\
	    &\le  \min \left\{2\sqrt{\frac{3\ln T}{2T_{i,b, t-1}}}, 1\right\} - \frac{M_{i,b}}{2B}.
	\end{align*}

	Now consider the following cases:
	
	\textbf{Case 1.} If $T_{i,b, t-1} \le \ell_T(M_{i,b})$, we have $\bar{\mu}_{i,b,t} - \mu_{i,b} - \frac{M_{i,b}}{2B} \le \min\left\{2\sqrt{\frac{3\ln T}{2T_{i,b, t-1}}}, 1\right\} \le \frac{1}{2} \kappa_T(M_{i,b}, T_{i,b,t-1})$. 
	
	\textbf{Case 2.} If  $T_{i, b, t-1} \ge \ell_T(M_{i,b,t})+1$, then $2\sqrt{\frac{3\ln T}{2T_{i,b, t-1}}} \le \frac{M_{i,b}} {2B}$, so $\bar{\mu}_{i,b,t} - \mu_{i,b} - \frac{M_{i,b}}{2B} \le 0 = \frac{1}{2} \kappa_T(M_{i,b}, T_{i,b,t-1})$. 
	
	Combine these two cases and inequality~(\ref{ieq: online non-over bound k}), we have the following inequality, which concludes the lemma.
	\begin{equation*}
	\Delta_{\bm{k}_t} \le \sum_{(i,b) \in S_t}\kappa_T(M_{i,b},T_{i,b,t-1}).
	\end{equation*}
\end{proof}

\thmNonOVer*
\begin{proof}[Proof of Theorem~\ref{thm: non over regret bound}]
	By above lemmas, we have
	\begin{align*}
	&\sum_{t=1}^T \mathbb{I}\left({\{\Delta_{\bm{k_t}} \ge M_{\bm{k_t}}\} \wedge \mathcal{N}_t}\right) \Delta_{\bm{k_t}} \\
	&\le \sum_{t=1}^T \sum_{(i,b) :(i,b) \in S_t}\kappa_T(M_{i,b},T_{i,b,t-1}) \\
	&=\sum_{(i,b) \in \cA}\sum_{t'\in \{ t| t\in [T], k_{t,i} \ge b \}} \kappa_T(M_{i,b},T_{i,b,t'-1}) \\
	&\le \sum_{(i,b) \in \cA}\sum_{s=0}^{T_{i,b,T}} \kappa_T(M_{i,b},s)\\
	&\le 2|\cA| + \sum_{(i,b) \in \cA}\sum_{s=1}^{\ell_T(M_{i,b})} 2\sqrt{6\frac{\ln T}{s}}\\
	&\le 2|\cA| + \sum_{(i,b) \in \cA} \int_{s=0}^{\ell_T(M_{i,b})}  2\sqrt{6\frac{\ln T}{s}} ds \\
	&\le 2|\cA| + \sum_{(i,b) \in \cA}   4\sqrt{6 \ln T \ell_T(M_{i,b})} \\
	&\le 2|\cA| + \sum_{(i,b) \in \cA} \frac{48 B \ln T}{M_{i,b}}.
	\end{align*}
	Let us set $M_{i,b} = \Delta^{i,b}_{\min}$ and thus the event \{$\Delta_{\bm{k_t}} \ge M_{\bm{k_t}}$\} always holds. We have 
	\begin{equation*}
	\begin{split}
	&\sum_{t=1}^T \mathbb{I}\left(\mathcal{N}_t\ \wedge \{\Delta_{\bm{k_t}} \ge M_{\bm{k_t}}\}\right) \Delta_{\bm{k_t}} \\
	&\le 2|\cA| + \sum_{(i,b) \in \cA} \frac{48 B \ln T}{\Delta_{\min}^{i,b}}.
	\end{split}
	\end{equation*}
	
	By Lem.~\ref{lem: non over nice event}, $Pr\{\neg N_t\} \le 2|\cA|t^{-2}$, then we have 
	\begin{equation*}
	\E\left[\sum_{t=1}^T \mathbb{I}\left( \neg \mathcal{N}_t\right) \Delta_{\bm{k_t}}\right] \le \sum_{t=1}^T 2|\cA|t^{-2} \Delta_{\max} \le \frac{\pi^2}{3}|\cA| \Delta_{\max}.
	\end{equation*}
	
	By our choice of $M_{i,b}$, 
	
	\begin{equation*}
	\sum_{t=1}^T \mathbb{I}\left(\Delta_{\bm{k_t}} < M_{\bm{k_t}}\right) \Delta_{\bm{k_t}} = 0 .
	\end{equation*}
	
	By the definition of the regret,
	
	\begin{align*}
	Reg_{\bm{\mu}}(T) &= \E\left[\sum_{t=1}^T \Delta_{\bm{k_t}}\right] \\
	&\le \E\bigg[\sum_{t=1}^T \bigg(\mathbb{I}\big( \neg \mathcal{N}_t\big) + \mathbb{I}\big(\Delta_{\bm{k_t}} < M_{\bm{k_t}}\big) \\
	&+ \mathbb{I}\big(\mathcal{N}_t\ \wedge \{\Delta_{\bm{k_t}} \ge M_{\bm{k_t}}\}\big)\bigg) \Delta_{\bm{k_t}}\bigg] \\
	&\le \sum_{(i,b) \in \cA}\frac{48 B \ln T}{\Delta_{min}^{i,b}} + 2|\cA| + \frac{\pi^2}{3}|\cA|\Delta_{max}.
	\end{align*}
	
	For distribution independent bound, we take $M_{i,b} = M =\sqrt{48 B|\cA| \ln T /T}$, then we have
	$\sum_{t=1}^T \mathbb{I}\{\Delta_{\bm{k_t}} \le M_{\bm{k_t}}\} \le TM$.
	
	Then 
	\begin{align*}
	Reg_{\mu}(T) &\le \sum_{(i,b) \in \cA}\frac{48 B \ln T}{M_{i,b}} + 2|\cA| + \frac{\pi^2}{3}|\cA|\Delta_{max} + TM \\
	&\le \frac{48 B|\cA| \ln T}{M} + 2|\cA| + \frac{\pi^2}{3}|\cA|\Delta_{max} + TM \\
	&= 2\sqrt{48 B |\cA|T\ln T} + \frac{\pi^2}{3}|\cA|\Delta_{max} + 2|\cA| \\
	&\le 14\sqrt{B|\cA|T\ln T} + \frac{\pi^2}{3}|\cA|\Delta_{max} + 2|\cA|.
	\end{align*}
\end{proof}

\section{Supplemental Experiments}\label{apdx:exp}
\subsection{Experimental Results for Stationary Distributions}\label{appendix: expriment for stationary distributions}
In this section, we show experimental results with explorers starting from stationary distributions in Fig.~\ref{fig: algorithms for stationary distributions}. Specifically, each explorer $W_i$ starts from the node $u$ in layer $L_i$ with probability proportional to the out-degree of $u$.
For offline overlapping case, the total weights of unique nodes visited for BEG and MG are the same, and outperforms two baselines, which accords with our theoretical analysis.
Similar to the Sec.~\ref{sec: experiments}, BEG and MG are empirically close to the optimal solution.
For offline non-overlapping case, DP and MG give us the same \textit{optimal} solution.
For the online learning algorithms, note that we still use BEG rather than MG as the offline oracle, this is due to the UCB values do not have the diminishing return properties.
The same applies for the non-overlapping case, where DP (instead of MG) is used as the offline oracle.
The trend of the curves and the analysis are similar to that in the Sec.~\ref{sec: experiments}.
\begin{figure}[t]
	\centering
	\begin{subfigure}[b]{0.235\textwidth}
		\centering
		\includegraphics[width=\textwidth]{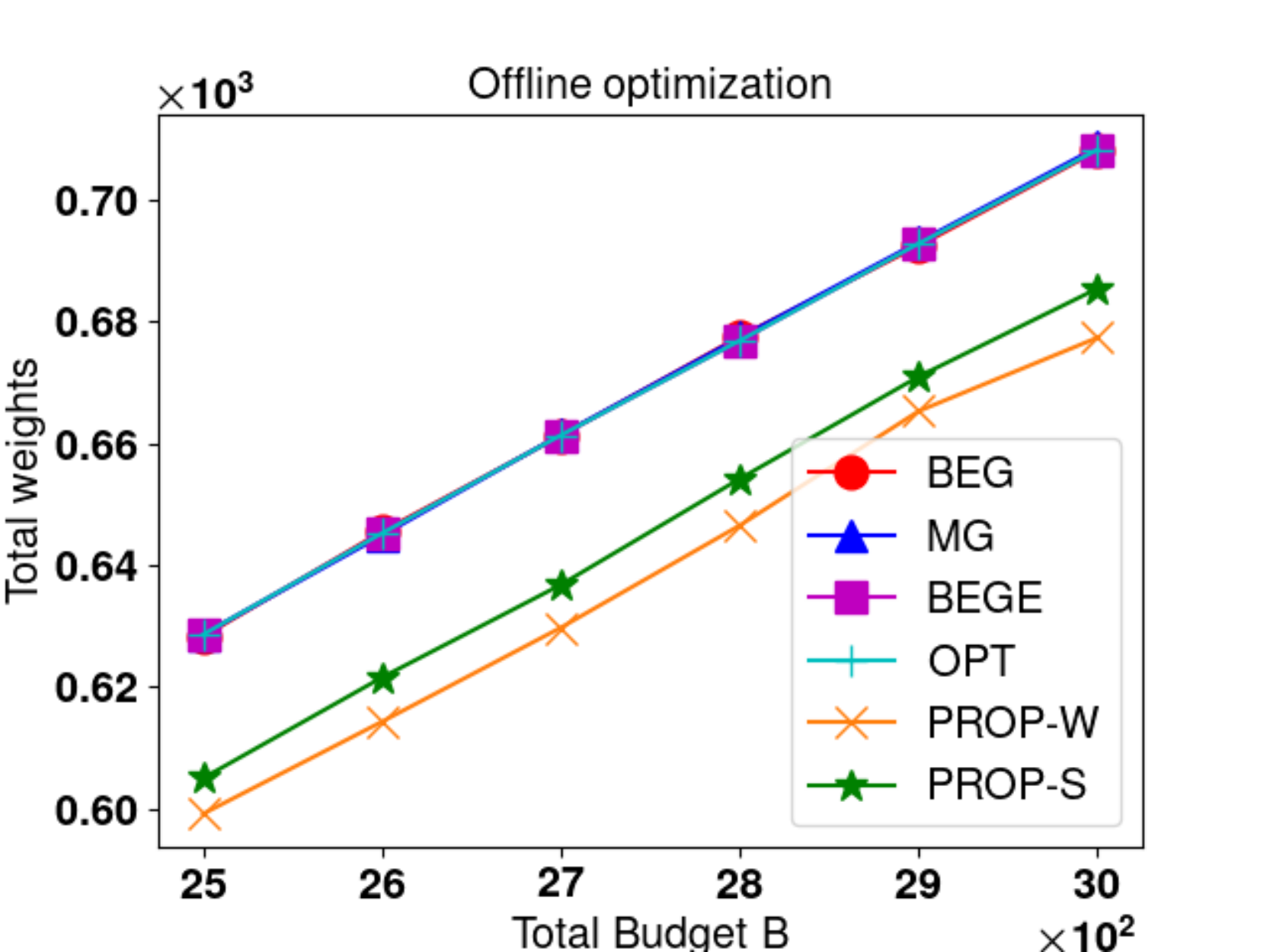}
		\caption{Offline, overlapping.}
		\label{fig: offline over stationary}
	\end{subfigure}
	\hfill
	\begin{subfigure}[b]{0.235\textwidth}
		\centering
		\includegraphics[width=\textwidth]{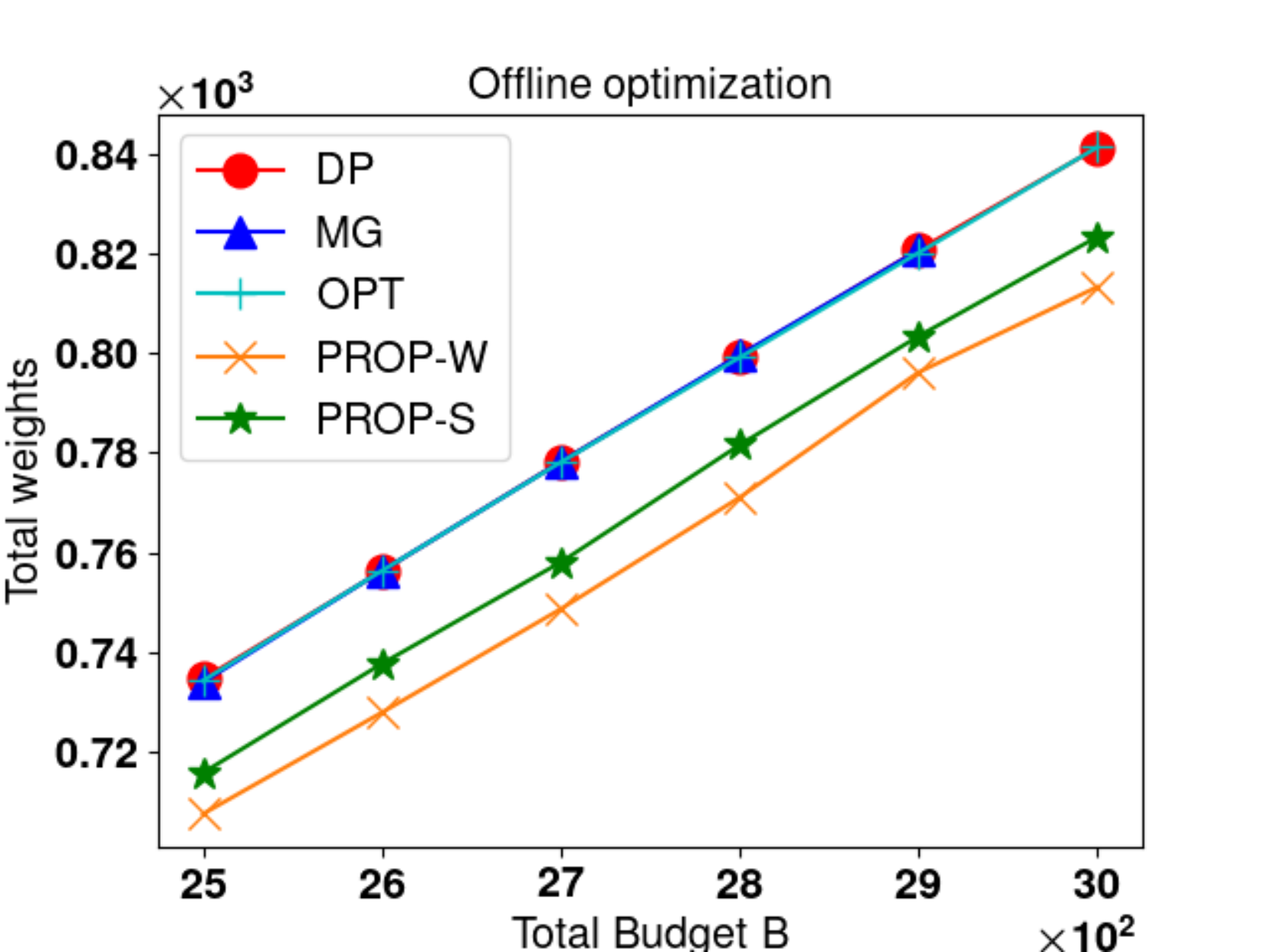}
		\caption{Offline, non-overlapping.}
		\label{fig: offline non over stationary}
	\end{subfigure}
	\begin{subfigure}[b]{0.235\textwidth}
		\centering
		\includegraphics[width=\textwidth]{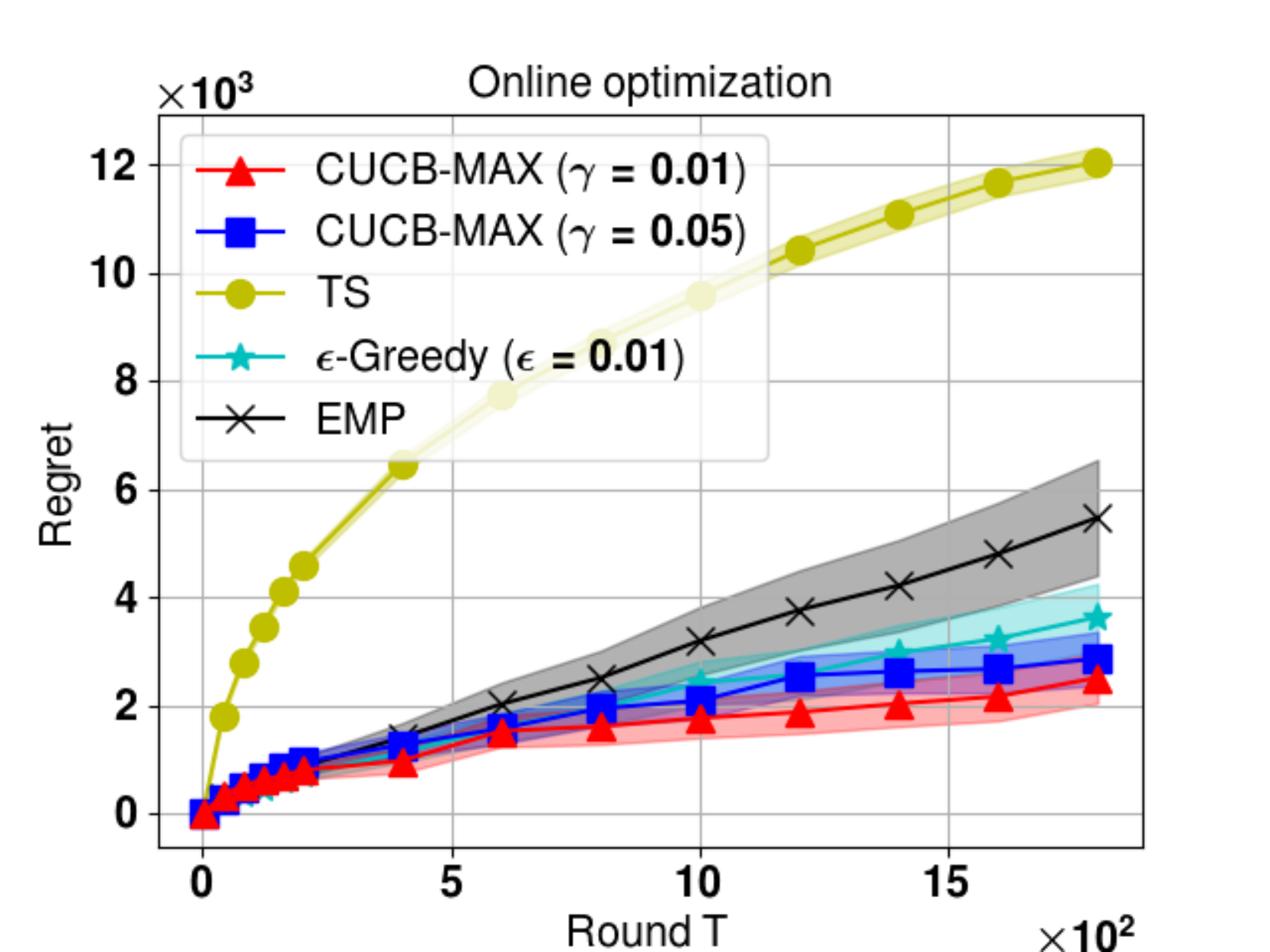}
		\caption{Online, overlapping.}
		\label{fig: online over stationary 3000}
	\end{subfigure}
	\hfill
	\begin{subfigure}[b]{0.235\textwidth}
		\centering
		\includegraphics[width=\textwidth]{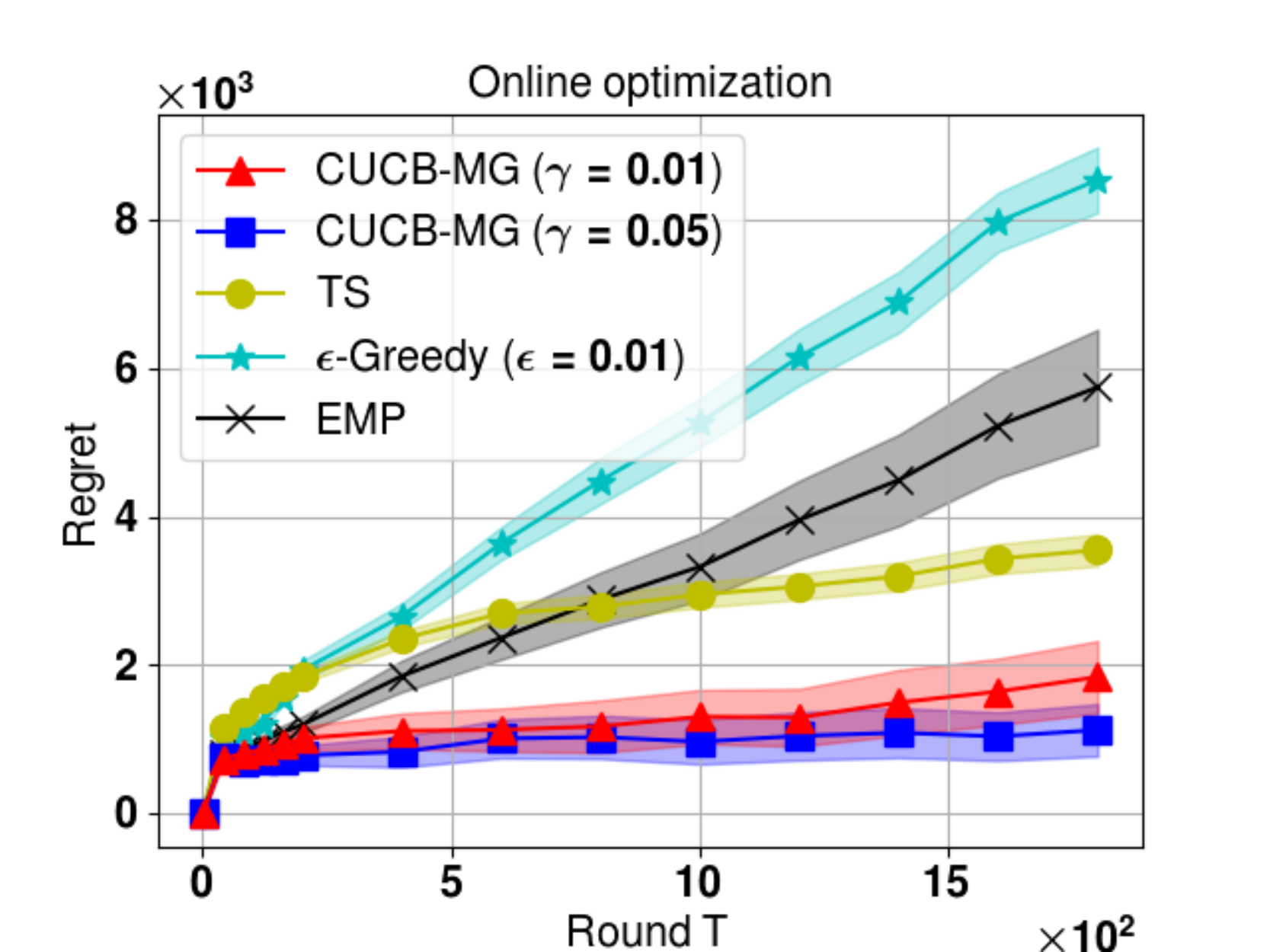}
		\caption{Online, non-overlapping}
		\label{fig: online non over stationary 3000}
	\end{subfigure}
	\caption{Explorers start with stationary distributions. Above: total weights of unique nodes visited given by different offline algorithms. Below: regret for different online algorithms when $B=3000$.}\label{fig: algorithms for stationary distributions}
\end{figure}

\subsection{Experimental Results for Online Learning Algorithms with Different Budgets $B$}\label{appendix: expriment for diff budgets}
As shown in Fig.~\ref{fig: online algorithms 2000.} and Fig.~\ref{fig: online algorithms 4000}, although there are some fluctuations, the trend of the curves and the analysis are almost the same as that when $B=3000$, which shows our experimental results are consistent for different budgets.  
\begin{figure}[t]
     \centering
     \begin{subfigure}[b]{0.235\textwidth}
         \centering
         \includegraphics[width=\textwidth]{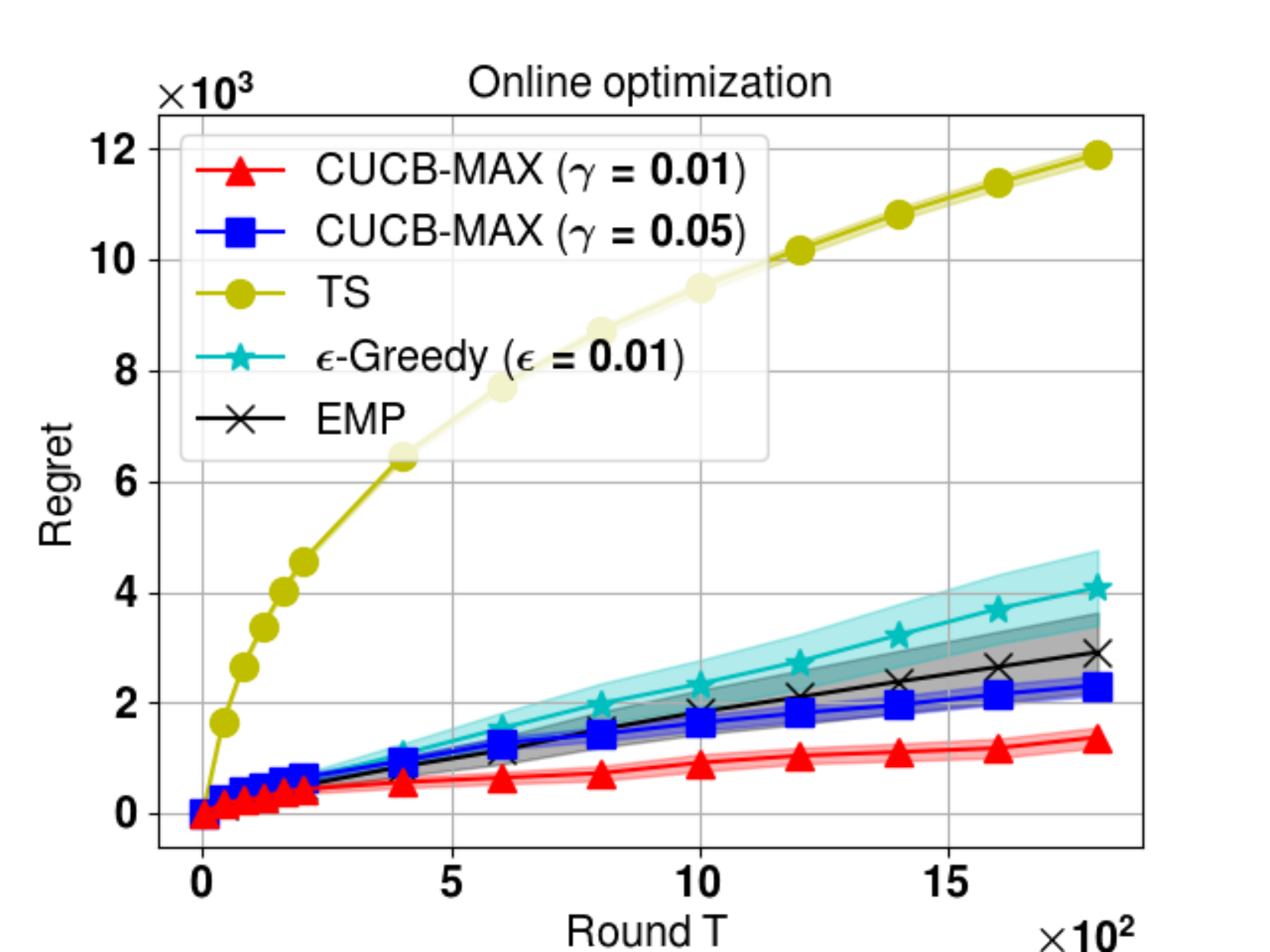}
         \caption{Overlapping, fix node.}
         \label{fig: online over fix 2000}
     \end{subfigure}
     \hfill
     \begin{subfigure}[b]{0.235\textwidth}
         \centering
         \includegraphics[width=\textwidth]{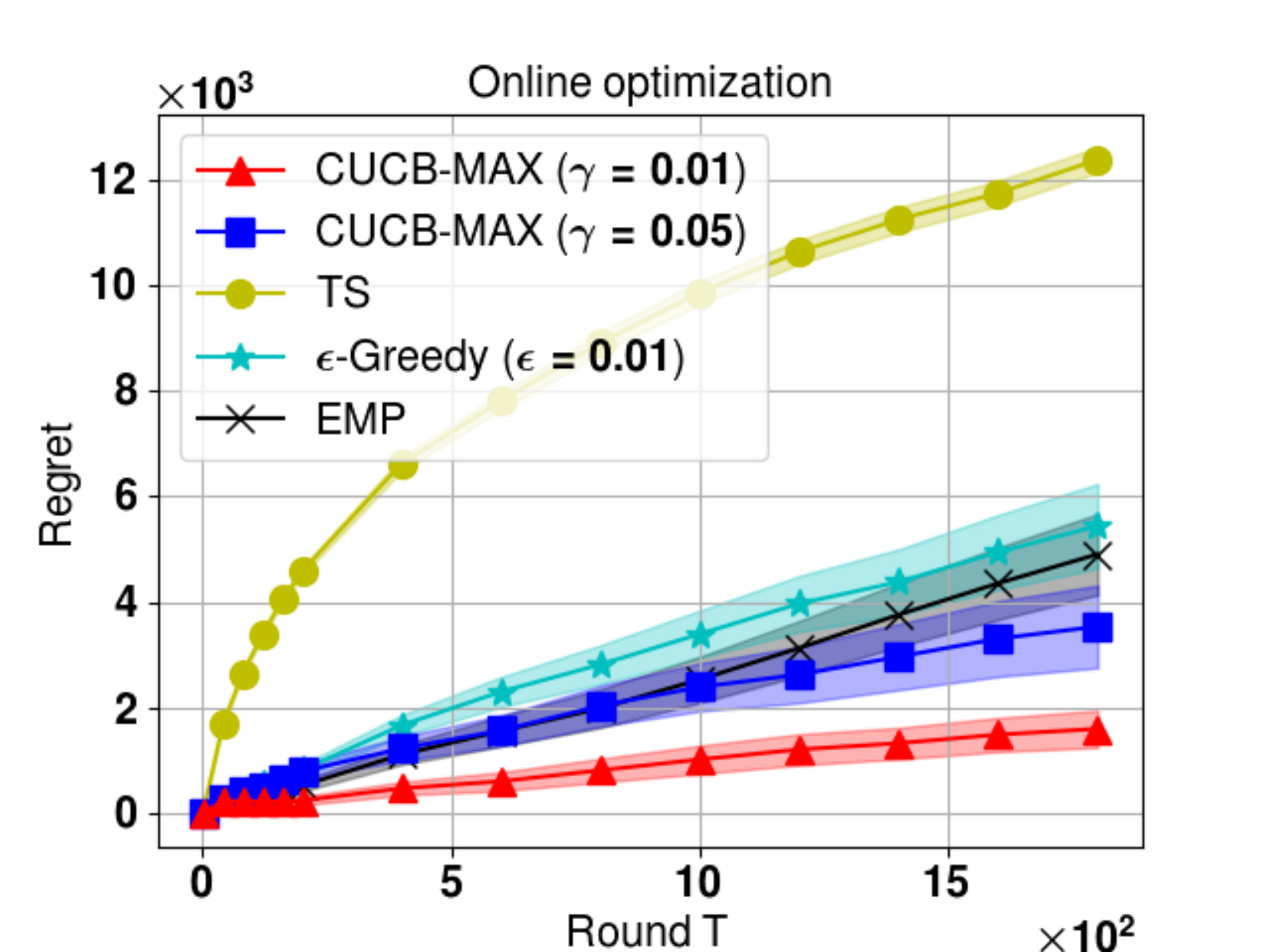}
         \caption{Overlapping, stationary.}
         \label{fig: online over stationary 2000}
     \end{subfigure}
     \begin{subfigure}[b]{0.235\textwidth}
         \centering
         \includegraphics[width=\textwidth]{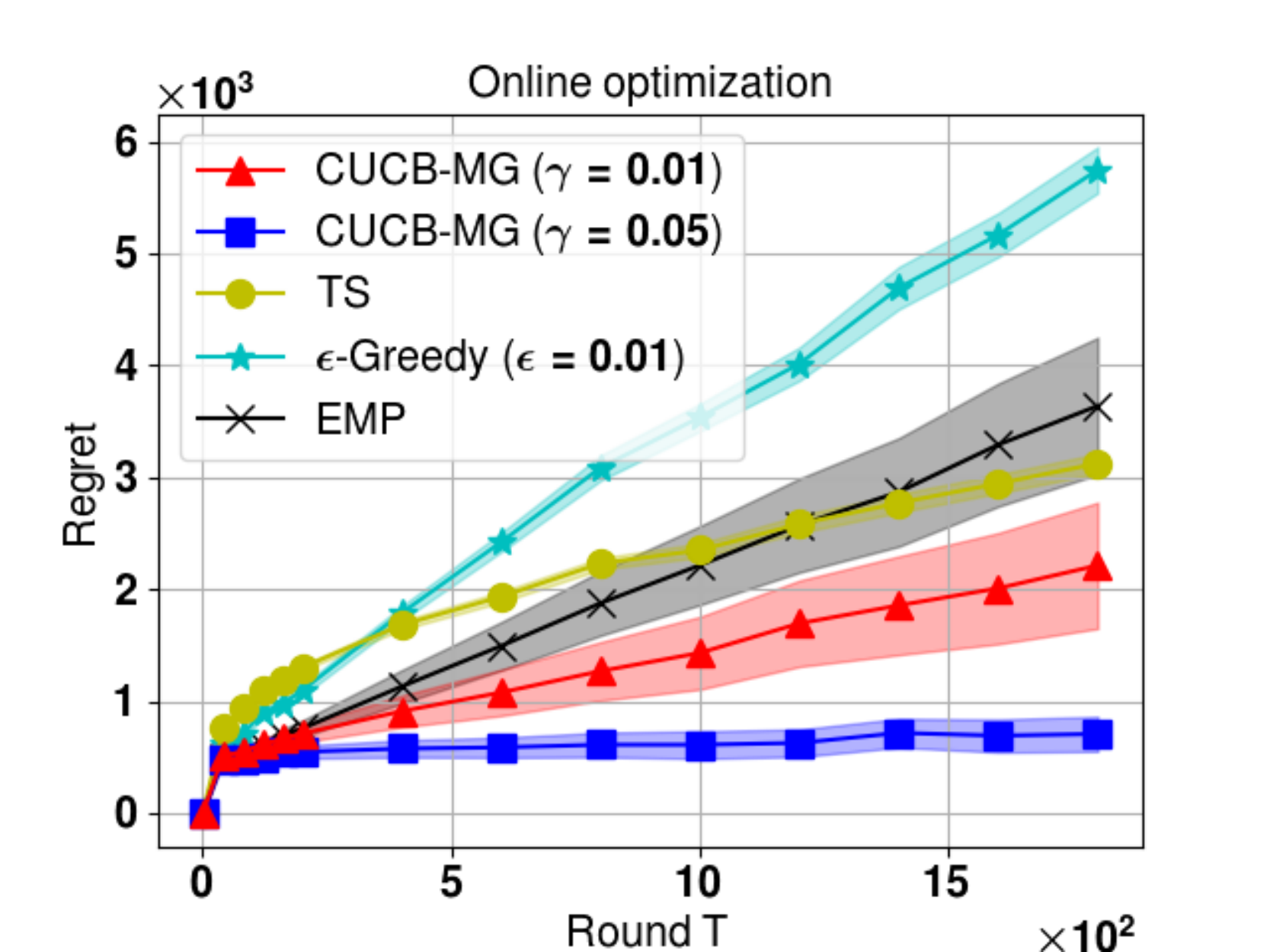}
         \caption{Non-overlapping, fix node.}
         \label{fig: online non over fix 2000}
     \end{subfigure}
     \hfill
     \begin{subfigure}[b]{0.235\textwidth}
         \centering
         \includegraphics[width=\textwidth]{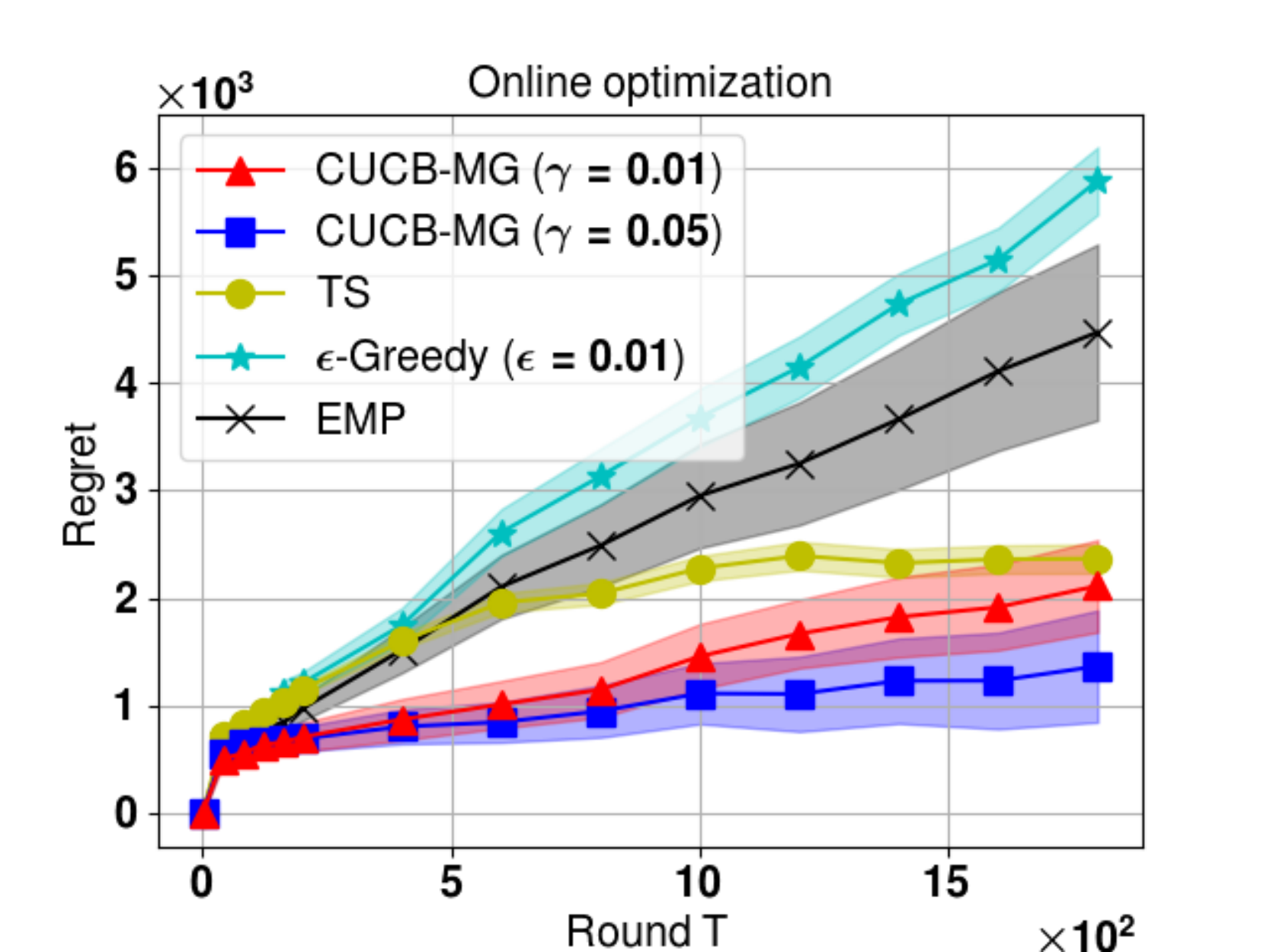}
         \caption{Non-overlapping, stationary.}
         \label{fig: online non over stationary 2000}
     \end{subfigure}
        \caption{Regret for different online algorithms, when total budget $B=2000$.}\label{fig: online algorithms 2000.}
\end{figure}

\begin{figure}[t]
     \centering
     \begin{subfigure}[b]{0.235\textwidth}
         \centering
         \includegraphics[width=\textwidth]{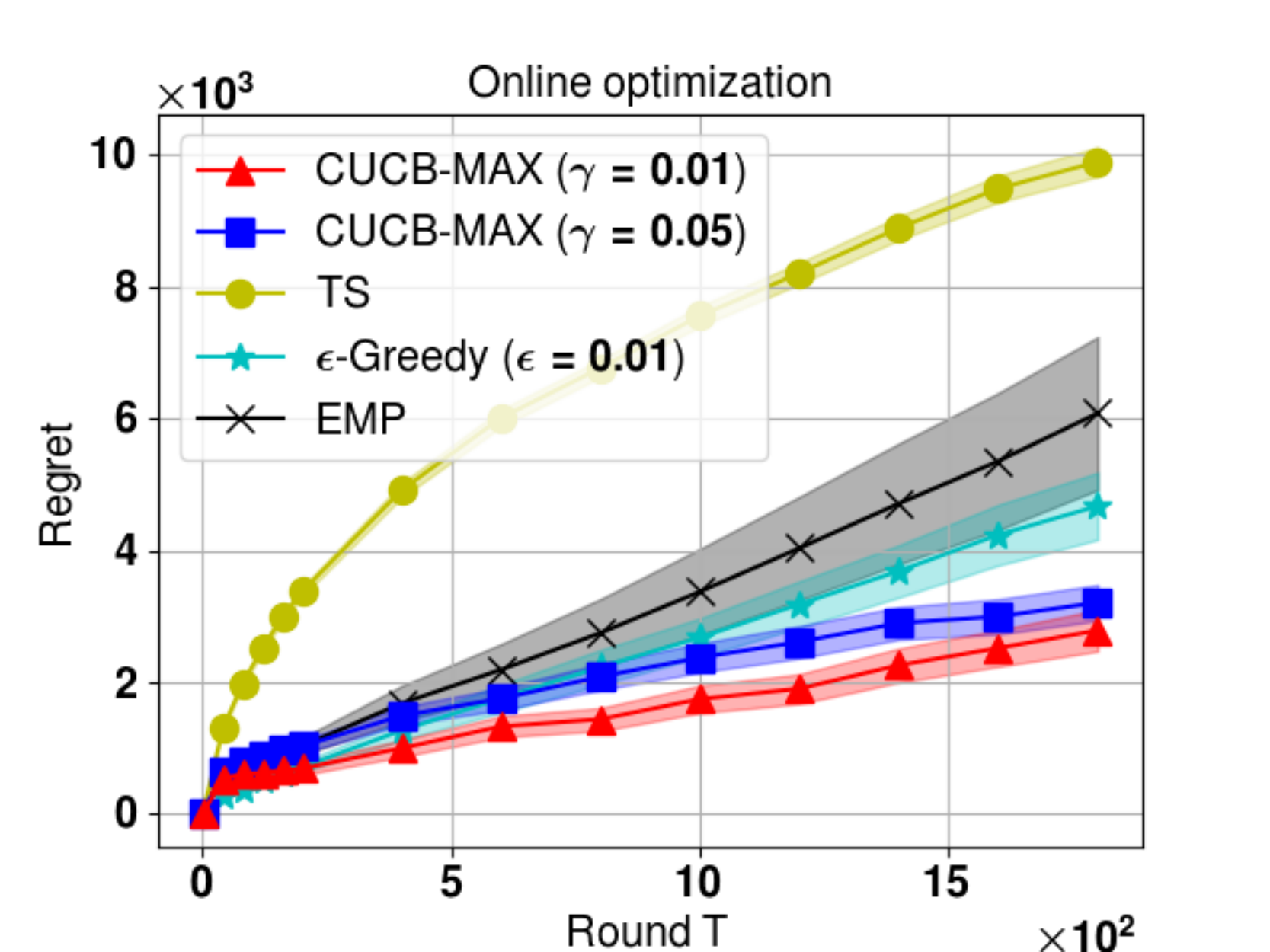}
         \caption{Overlapping, fix node.}
         \label{fig: online over fix 4000}
     \end{subfigure}
     \hfill
     \begin{subfigure}[b]{0.235\textwidth}
         \centering
         \includegraphics[width=\textwidth]{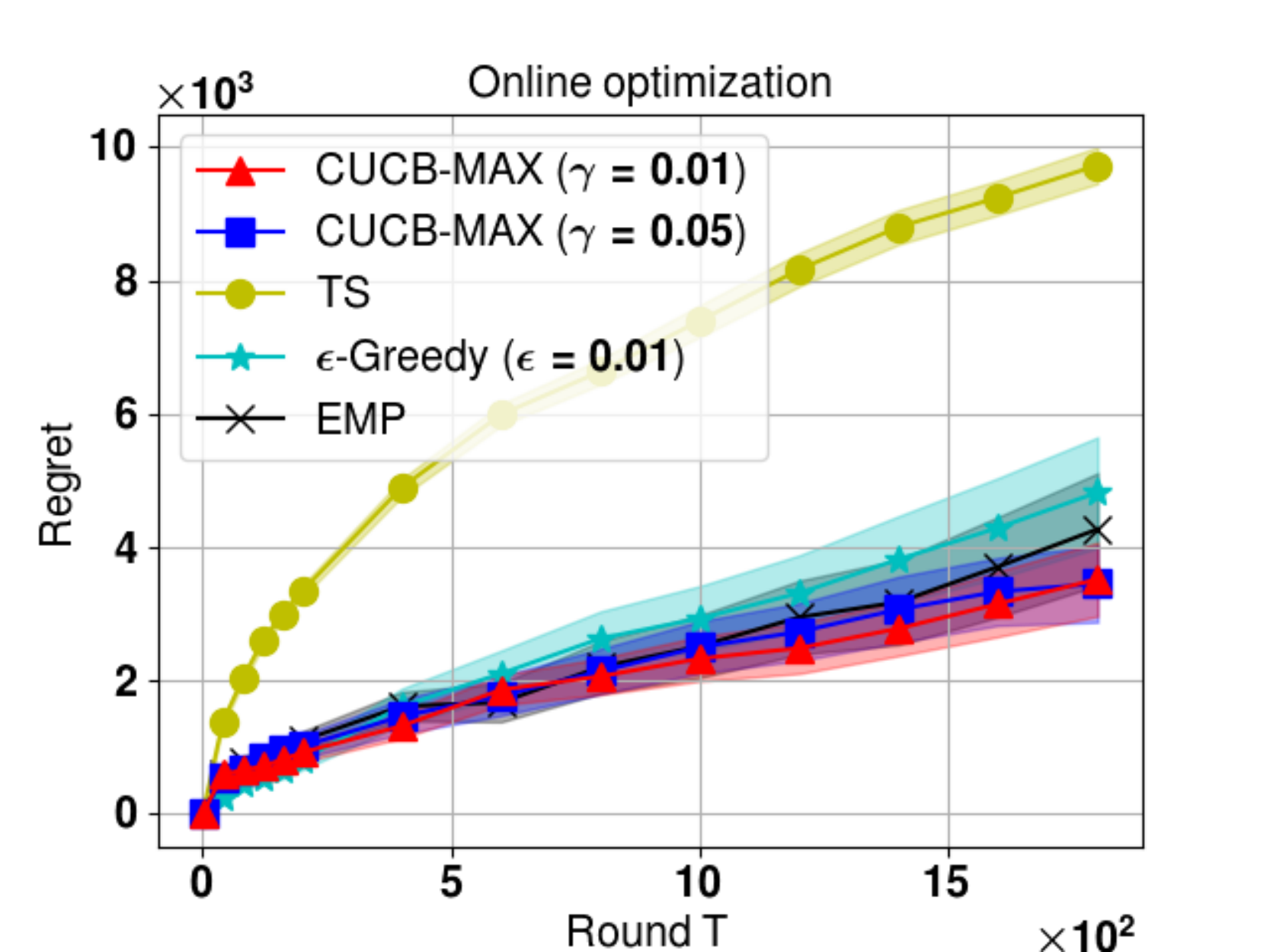}
         \caption{Overlapping, stationary.}
         \label{fig: online over stationary 4000}
     \end{subfigure}
     \begin{subfigure}[b]{0.235\textwidth}
         \centering
         \includegraphics[width=\textwidth]{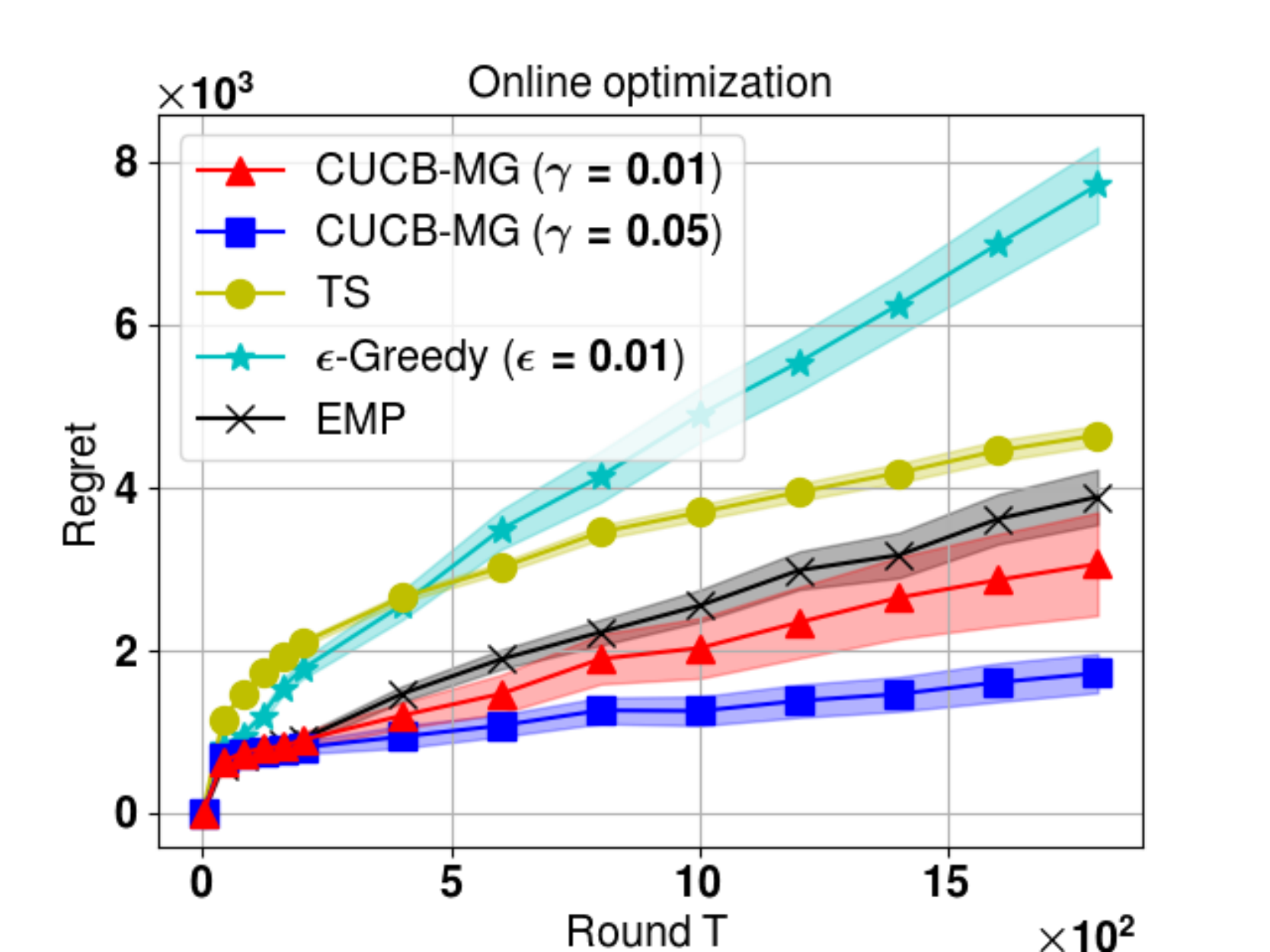}
         \caption{Non-overlapping, fix node.}
         \label{fig: online non over fix 4000}
     \end{subfigure}
     \hfill
     \begin{subfigure}[b]{0.235\textwidth}
         \centering
         \includegraphics[width=\textwidth]{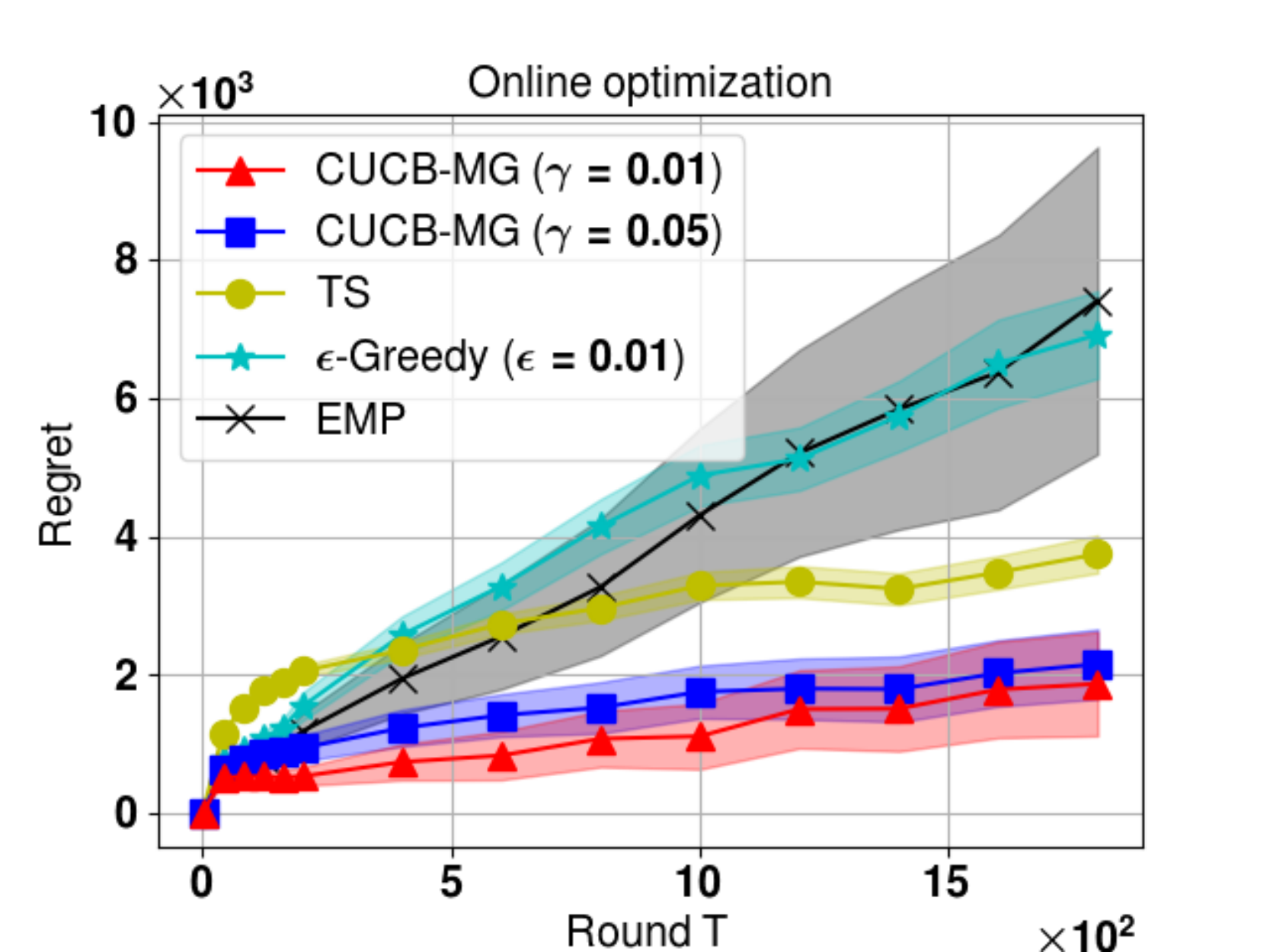}
         \caption{Non-overlapping, stationary.}
         \label{fig: online non over stationary 4000}
     \end{subfigure}
        \caption{Regret for different online algorithms, when total budget $B=4000$.}\label{fig: online algorithms 4000}
\end{figure}
}
\end{document}